\newcolumntype{P}[1]{>{\centering\arraybackslash}p{#1}}
\newtheorem{assumption}{Assumption}
\title{Effective Federated Adaptive Gradient Methods with Non-IID Decentralized Data}
\author{\name Qianqian Tong \email qianqian.tong@uconn.edu \\
       \name Guannan Liang \email guannan.liang@uconn.edu \\
       \name Jinbo Bi \email jinbo.bi@uconn.edu\\
       \addr Computer Science and Engineering\\University of Connecticut,
       Storrs, CT 06269}
\begin{document}
\editor{ }
\maketitle

\begin{abstract}
Federated learning allows loads of edge computing devices to collaboratively learn a global model without data sharing. The analysis with partial device participation under non-IID and unbalanced data reflects more reality. In this work, we propose a family of effective federated adaptive gradient methods (AGMs), 
to alleviate generalization performance deterioration caused by dissimilarity of data population among devices. Under our proposed framework, we compare several schemes of calibration for the adaptive learning rate, including the standard Adam and AMSGrad calibrated by $\epsilon$, $p$-Adam, and ones calibrated by an activation function. Our analysis provides the first set of theoretical results that the (calibrated) Federated AGMs , which employ both the first-order and second-order momenta,  converge to a first-order stationary point under non-IID and unbalanced data settings for nonconvex optimization.
We perform extensive experiments to compare these federated learning methods with the state-of-the-art FedAvg, FedMomentum and SCAFFOLD and to assess the different calibration schemes and the advantages of AGMs over the current federated learning methods. 
\end{abstract}

\section{Introduction}
Federated learning (FL) is a privacy-preserving learning framework for large scale machine learning on edge computing devices, and solves the data-decentralized distributed optimization problem:
\begin{align}\label{Problem1}
\min_{x\in \mathbb{R}^d}f(x) =  \sum_{i=1}^N p_i f_{i}(x), 
\end{align}
where $f_i(x) = E_{z \sim \mathcal{D}_i}[f_i(x, z)]$ is the loss function of the $i^{th}$ client (or device) with weight $p_i \in [0,1)$, $\sum_{i=1}^N p_i =1$, $\mathcal{D}_i$ is the distribution of data located locally on the $i^{th}$ client, and $N$ is the total number of clients. 
FL enables numerous clients to coordinately train a model parameterized by $x$, while keeping their own data locally, rather than sharing them to the central server (\cite{konevcny2016federated,konevcny2016federated1}).

Compared with existing well-studied distributed computing, there are four main key differences (\cite{konevcny2016federated,kairouz2019advances,sattler2019robust,li2019federated,li2019convergence}): 1) the number of clients $N$ is large and the communication between clients and central server can be slow;
2) partial device participation is allowed during 
model training, e.g., some devices may randomly drop out or come back during the training phase; 
3) the distributions of  training data over clients are non-independent and non-identically distributed (non-IID), i.e., local data on each client cannot be regarded as samples IID drawn from an overall distribution; 4) the data quantities and weights can be unbalanced across devices. Some studies assume  $p_i= \frac{1}{N}$, but in general $p_i$ can differ on different clients.
The clients/devices can be smartphones, personal computers, network sensors, or other accessible information resources. The data decentralization is important during model training because the data on each edge computing device can be private and sensitive, such as photos or personal conversations. FL has attracted widespread attentions, and how to effectively solve Problem (\ref{Problem1}) is essential.

The FedAvg algorithm is proposed in  (\cite{mcmahan2016communication}) and has become the de facto FL algorithm where clients do not communicate with the central server at each iteration but after $K$ inner iterations. In the $t$-th round, a client obtains $ x^{(i)}_{t , k+1} = \underset{x}{ \text{argmin}} \: f_i(x^{(i)}_{t , k}) +\langle \nabla f_i(x^{(i)}_{t, k}), x - x^{(i)}_{t, k} \rangle + \frac{1}{2\gamma} \|x - x^{(i)}_{t, k}\|^2 $ for $k \in  \{0, ..., K-1\}$ and sends $x^{(i)}_{t , K}$ to the central server, which then averages all the updates from the clients to obtain the global update $x_{ t+1} = \sum_{i =1 }^{N} p_i x^{(i)}_{t, K}$ to broadcast to all clients in the $(t+1)$-th round. FedAvg can significantly reduce the communication cost. 
It however has been identified to suffer from the client drift issue under non-IID data situation (\cite{hsu2019measuring,karimireddy2019scaffold,reddi2020adaptive}). The average of  client updates could drift to $\frac{1}{N} \sum_{i = 1}^{N} x^*_i$, rather than $x^*$, where $x^*$ and $x^*_i$ are the optimal solutions, respectively, to Problem (\ref{Problem1}) and to the problem of $ \min_x f_i(x) $ for the $i^{th}$ client. This issue becomes exacerbated when partial device participation is present. 
Hsu et al. experimentally explore the performance of FL algorithms on visual classification tasks and show that FedAvg performs worse with increasing non-IIDness than FedMomentum, which can consistently improve the test accuracy (\cite{hsu2019measuring,hsu2020federated}) and theoretically converge to a first-order stationary point (\cite{huo2020faster}). 
Karimireddy et al. propose SCAFFOLD (\cite{karimireddy2019scaffold}) to use variance reduction technique in the local $K$ inner iterations to alleviate the effect of client drift.
The FedProx algorithm is proposed in (\cite{li2018federated}) where the $i^{th}$ client tries to add a proximal term $\frac{\mu}{2} \|x - x_{t}\|^2$ to the local subproblem  to effectively limit the impact of variable local updates, thus keeping local updates close to the global iterate. 
Recently, an adaptive framework that uses an adaptive learning rate on the server side of FL is developed in (\cite{reddi2020adaptive}). They empirically use momentum for FedAdam and FedYogi, and highlight the benefit of momentum, however, the theoretical analysis only explores the second-order momentum without the accumulated gradients (the first-order momentum).
 

\begin{figure}[t!]
\centering
  \includegraphics[trim=20 0 10 0,clip,width=\textwidth]{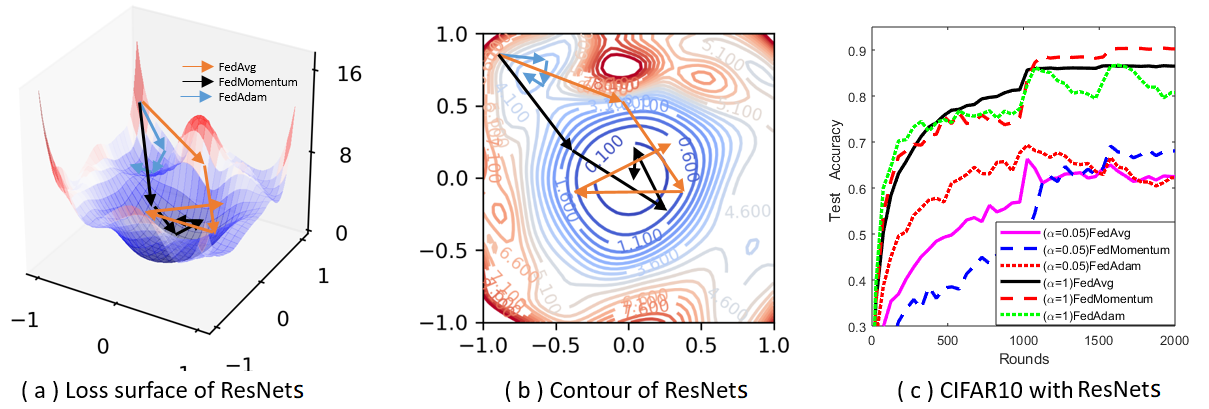}  
  \caption{Behaviors of FedAvg, FedMomentum and FedAdam. (a) and (b) are 3D loss surface and contour of ResNets with CIFAR10 data around one flat local minimal (\cite{li2018visualizing}). 
  (c) shows the learning curves of the different algorithms with a multi-stage learning rate decay at different non-IID levels across 100 clients as specified by $\alpha$. The concentration parameter $\alpha$ in the Dirichlet distribution controls the degree of data dissimilarity across devices. Smaller $\alpha$ values correspond to higher levels of non-IID data distributions among clients. 
  }
  \label{fig:sub1}
\end{figure}


We examine theoretically and experimentally here the federated adaptive gradient methods (AGMs), including federated Adam and federated AMSGrad methods, with both the first-order momentum as search direction and the second-order momentum as adaptive learning rate (or stepsize), to solve Problem (\ref{Problem1}).
Figure \ref{fig:sub1} is a summary illustration which shows Federated AGMs are usually more efficient in early training stages (e.g., 1-1000 rounds for ResNets with CIFAR10 data), especially for non-IID data.  However, even equipped with a multi-stage learning rate scheme (\cite{chen2018universal,yang2018does}), a direct merge of Adam and FL  tends to trap in a narrow (bad) local minimal, leading to lower training accuracy.

Under FL, the stochastic  direction on the central server (virtual direction $\Delta_t$ in Algorithm \ref{alg:Fedadam}) aggregated by clients tends to be very small when approaching local minimal; thus, momenta will be small and the element-wise division of momenta could be inconsistent, due to the different speed of moving averages of momenta. What's more, the adaptive stepsize in FL may have a large span across coordinates.
To alleviate these issues, the base learning rate for FedAdam is often set smaller than it should be. However, similar to (\cite{li2018visualizing}), with a very small learning rate, Federated AGMs may get stuck at narrow and sharper local minimals, which exhibit bad generalization performance.
Hence, we propose to use several calibration schemes (\cite{zaheer2018adaptive,chen2018closing,tong2019calibrating}) to calibrate the adaptive learning rate in Federated AGMs.

Our main results are summarized as follows: 
\begin{enumerate}
    \item  We characterize the performance of Federated AGMs (FedAdam and FedAMSGrad): rapid progress in early stages but prone to be stuck at narrow local minimal with non-IID dataset. 
    \item We propose a calibration framework for Federated AGMs, which includes  $\epsilon$-calibrated, $p$-calibrated, and a calibration method using the activation function {\em softplus}, denoted by $s$-calibrated. This effort also unifies FedMomentum and Federated AGMs so to achieve the best of both approaches for fast convergence as well as good generalization performance.
    \item Theoretical analysis based on more practical reality - partial device participation over unbalanced and non-IID datasets for nonconvex objective functions - shows that convergence rate is highly related to calibration parameters,  gradient dissimilarity, and the interplay between the learning rate, the number of local iterations, and the quantity of participated clients.
    \item Experimental results show that the calibrated Federated AGMs equipped with stagewise local learning rate decay can achieve the best performance in both training and test accuracy on multiple FL tasks over the state of the art.
\end{enumerate}

\noindent\textbf{Notations.} For any vectors $a,b \in \mathbb{R}^d$, we use $a\odot b$ for element-wise product, $a^m$ for element-wise power of $m$, $\sqrt{a}$ for element-wise square root, $a/b$ for element-wise division, and $\langle a,b \rangle$ to denote the inner product of $a$ and $b$. We use $x^{(i)}$ to denote the parameter update on the $i$-th device, and $\|x\|$ to denote the $l_2$-norm of $x$.  Let $N$ denote the total number of clients, $[N]$ denote the integer set $\{1,2,...,N\}$, and $S (\leq N)$ be the maximum number of participated clients, $T$ be the total rounds that the sever updates its global model, $K$ be the number of local updates on each client, $O(\cdot)$ hide constants which do not rely on the problem parameters, $\Theta(\cdot)$ denote the same order of computation.
 
\section{Methods}

\subsection{AGMs Revisited.}

Distributed learning with data $z$ collected to the central server can be formulated as the minimization problem of: $ \min_{x\in \mathbb{R}^d}f(x) =  E_{z \sim \mathcal{D}}[f(x, z)]$, where both $f(x)$ and $f(x, z)$ are usually nonconvex. The SGD, its momentum variants  (\cite{ghadimi2013stochastic,wright1999numerical,wilson2016lyapunov,yang2016unified}), and its adaptive versions (AGMs) (\cite{duchi2011adaptive,kingma2014adam,zeiler2012adadelta}) can readily distribute their computation to multiple processors due to their stochasticity and simplicity. The updating rule of these methods can be generally written as: $x_{t+1}=x_{t}- \frac{\eta_t}{\sqrt{v_t}}\odot m_t$, where $\odot$ calculates element-wise product of the first-order momentum $m_t$ and the learning rate  $\frac{\eta_t}{\sqrt{v_t}}$. Here we call $\eta$ the base learning rate, $\frac{1}{\sqrt{v_t}}$ the adaptive learning rate. 
Researchers generally have an agreement on how to compute $m_t$ to accelerate the convergence, i.e., $m_t=\beta_1 m_{t-1}+(1-\beta_1)g_t$, $\beta_1\in [0,1)$, but there are various formulas for the second-order momentum $v_t$ and the related adaptive learning rate in \textsc{Adagrad} (\cite{duchi2011adaptive}), \textsc{Adadelta} (\cite{zeiler2012adadelta}), \textsc{Rmsprop} (\cite{tieleman2012lecture}), Adam (\cite{kingma2014adam}), AMSGrad (\cite{reddi2018convergence}), \textsc{Yogi} (\cite{zaheer2018adaptive}) and \textsc{AdaBound} (\cite{luo2019adaptive}).
Among these methods, Adam uses exponential moving averages of past squared gradients, i.e., $v_t = \beta_2 v_{t-1} +(1-\beta_2) g_t^2, \beta_2 \in [0,1).$ AMSGrad takes the larger second-order momentum estimated in the past iterations by $v_t = \max\{v_{t-1}, \hat{v}_{t}\}$, where $\hat{v}_{t} = \beta_2 v_{t-1} +(1-\beta_2) g_t^2$ to theoretically ensure convergence. In this paper, we mainly focus on Adam and AMSGrad, but our analysis is readily applicable to other AGMs.

\subsection{Federated AGMs. } 

For Federated AGMs (in Algorithm \ref{alg:Fedadam}) based on  the classical SGD in inner loops and the AGMs in outer loops,
we provide theoretical analysis and empirical studies in more practical reality: (1) partial device participation, which means that in each communication round, only a portion of the $N$ clients is active. Because clients are assumed to randomly leave or join the FL, we can randomly sample a set of clients ($S_t \subset [N]$). Different clients may have different weights $p_i$ in the FL, so active clients are sampled according to $p_i$'s. When the cardinality $S=|S_t|$ is $N$, all clients participate. (2) Local devices are presumably unbalanced in the capability of curating data, i.e., that clients may exhibit different amounts of local data as specified by $p_i$'s and become balanced if $p_i= \frac{1}{N}$. (3) There can be different levels of non-IID data across clients, as discussed in (\cite{hsu2019measuring}), which provides a simulation process. We draw a distribution $q$ 
with Dirichlet distribution, i.e., $q\sim Dir(\alpha \mathbb{I})$, where $\mathbb{I}=(1, .., 1)$ is a prior class distribution over a pre-specified number of classes and $\alpha$ is the so-called concentration parameter. When $\alpha \rightarrow 0$, it generates strongly different distributions among clients. When $\alpha \rightarrow \infty$, all clients have identical distributions to the prior. After $q$ is drawn, data for each class will be drawn according to $q$.

\begin{algorithm}[t]
\captionof{algorithm}{Federated AGMs:}
\label{alg:Fedadam}
\begin{algorithmic}
         \State {\bfseries Input:} The SGD learning rate $\gamma_t$, the AGM base learning rate $\eta$, momentum parameters $0\leq \beta_1, \beta_2 < 1$, the number of clients $N$, the number of inner iterations $K$. 
        \State {\bfseries Initialize} $x_{0}$ randomly, and $m_0 = 0$, $v_0 = 0$
        \For {$t = 0$ to $T-1$}
        \State Sample $S_t \subset [N]$ where $|S_t|=S$ based on probability \\$\;\;\;\;$ $\{p_i\}$ with replacement
        \For {client $i=1 \in S_t$ parallel}
        \State Receive $x_t$ from the central server, set $x_{t,0}^{(i)} = x_t$
        \For {$k = 0$ to $K-1$}
        \State Base optimizer step: $x_{t,k+1}^{(i)} = x_{t,k}^{(i)} - \gamma_t g_{t,k}^{(i)} $
        \EndFor
        \State Send local $x_{t,K}^{(i)}$ to the central server.
        \EndFor
        \State Exact-Average: $\tilde{x}_{t+1} = \frac{1}{S}\sum_{i\in S_{t}} x_{t,K}^{(i)}$  
        \State Virtual direction: $\Delta_t =  x_{t} - \tilde{x}_{t+1}$
        \State Update the first-order momentum:   $m_t = \beta_1 m_{t-1}+(1-\beta_1)\Delta_t$
        \State Update the second-order momentum:   
        \State(FedAdam): $v_t = \beta_2 v_{t-1}+ (1-\beta_2)\Delta_t^2$  
        \State (FedAMSGrad): $\hat{v}_t = \beta_2 v_{t-1}+ (1-\beta_2)\Delta_t^2$,  
        $v_t = \max\{\hat{v}_t, v_{t-1}\}$
        \State AGM update: $x_{t+1}=x_{t}-\frac{\eta}{\mbox{calibrate}({v_t})}\odot m_t$
        \EndFor
\end{algorithmic} 
\end{algorithm}

Algorithm \ref{alg:Fedadam} depicts the steps of Federated AGMs in a nested loop structure, the outer loops/communication rounds require message passing between the central server and active devices.
Each round corresponds to parallel inner loops that run within individual clients. 
The client $i$ updates $x_{t,k}^{(i)}$ at the $k$-th inner iteration of the $t$-th outer round, and performs $K$ steps of SGD without communication to other clients. The final local update $ x_{t,K}^{(i)} = x_{t,0}^{(i)} - \gamma_t g_t^{(i)}$ where $ g_t^{(i)} = \sum_{k=0}^{K-1} g_{t,k}^{(i)}$. The number of inner iterations $K$ should be a small number so that each client does not move too far away independently. In experiments, we set it to a number that makes sure a full epoch is finished (i.e., the amount of available data on client $i$ divided by the mini-batch size).
The central server then aggregates the local updates by averaging $\tilde{x}_{t+1} =  \frac{1}{S}\sum_{i\in S_{t}} x_{t,K}^{(i)}$. Then the virtual direction amounts to computing the sum of all the gradients obtained in the inner loops $\Delta_t = x_{t} - \tilde{x}_{t+1} =   \frac{1}{S}\sum_{i\in S_{t}} (x_{t} - x_{t,K}^{(i)})=   \frac{1}{S}\sum_{i\in S_{t}} (x_{t,0}^{(i)} - x_{t,K}^{(i)})= \frac{1}{S}\sum_{i\in S_{t}}\sum_{k=0}^{K-1} \gamma_t g_{t,k}^{(i)}$.

Algorithm \ref{alg:Fedadam} specifies the AGM steps in outer loops. For inner loops, although Algorithm \ref{alg:Fedadam} instantiates with SGD, other optimizers can be applicable, e.g., the gradient descent (GD) using full local on-device data, or stochastic variance reduced gradient (SVRG) method. The recent SCAFFOLD (\cite{karimireddy2019scaffold}) uses variance reduction technique in a FedAvg framework. We can also use SCAFFOLD in inner loops, which we have compared in our experiments.
The inner stepsize $\gamma_t$ is predefined and can differ from round to round. Our theoretical analysis shows that, to guarantee convergence, the total local aggregate $K\gamma_t$ in each client's update should be restricted to be constant, to warrant that the local alteration is not too far off across rounds. Thus the inner stepsize should be small if $K$ is large or vice versa.

In an outer step of AGM update, both momenta $m_t$ and $v_t$ are used to track more past information. The base learning rate $\eta$ is predefined.
The adaptive step size is determined by $v_t$ via a specific calibration, which we will discuss in the next subsection. 
When the $\text{calibrate}(v_t)$ function takes the form of $\sqrt{v_t} + \epsilon $ where $\epsilon>0$ is small, Algorithm 1 gives us the Federated Adam (FedAdam). In Figure \ref{fig:sub1} (c), FedAdam ($\epsilon = 10^{-8}$ as commonly suggested) shows rapid initial progress but in later stages, has worse accuracy than other methods. 
With the increasing level of non-IID, FedAdam remains its fast convergence rate, but it tends to be more likely trapped in narrow local minimal, and the final accuracy is often surpassed by FedMomentum. Our study shows that careful calibration of the adaptive stepsize can be important for Federated AGMs to improve performance in later stages.

\subsection{Different Calibration Schemes.}

\begin{figure*}[t!]
\centering
  \includegraphics[width=\textwidth]{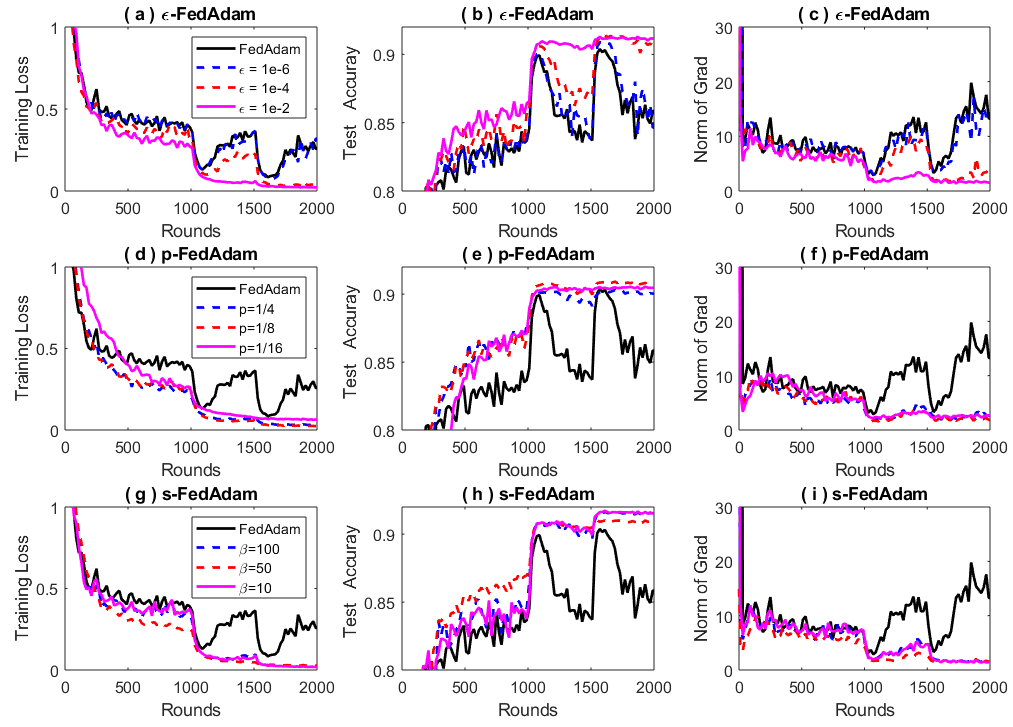}
  \caption{The training loss (a, d, g), test accuracy (b, e, h) and the norm of gradients (c, f, i) for $\epsilon$-, $p$-, $s$-FedAdam with a multi-stage decay of $\gamma$ for ResNets on the CIFAR10 dataset. After $\gamma$ is decayed to $0.1*\gamma$ at round 1000, the test accuracy of the FedAdam ($\epsilon = 10^{-8}$) first increases quickly and then gets stuck at a sharp local minimal in the few steps before round 1500.  At rounds 1500 and 2000, $\gamma$ is further decayed. The test accuracy degenerates again. Monitoring the norm of the gradients in the plots on the last column helps us examine the local areas of the stationary points. Without careful calibration, FedAdam is stuck at a sharp local minimal after round 1000 (the gradient norms vary a lot in the proximity of a sharp local minimal). However, the $p$- and $s$-FedAdam converge to a flat local minimal. }
  \label{fig:sub2}
\end{figure*}

 As a special intermediate product in FL, the virtual direction is decided by clients number, inner stepsize $\gamma_t$, and stochastic gradient of each client. We theoretically prove that $\gamma_t$ is a small amount, and stochastic gradient will close to zero when the federated algorithm converges; thus, the aggregated virtual direction ( $\Delta_t$ in Algorithm \ref{alg:Fedadam}) will be approximating to zero. Hence the base learning rate $\eta$ needs to be set small to avoid violent fluctuation caused by adaptive stepsize. On the other hand, a small $\eta$ is hard to escape the narrow local minimum.
This becomes worse with the increasing dissimilarity of data distributions across clients.
When the training/test accuracy curve reaches a plateau in the later stages, Federated AGMs may reach a too small $\eta$ to jump out of a sharp local minimum, and sharp local minimizers (e.g., Fig. 1 (b) top-left corner) often have worse generalization performance than the optimizer whose neighborhood is flat and convex. 

In this work, we further provide a careful examination of three calibration techniques that all regulate the span of the adaptive stepsizes under FL.
We take FedAdam as an example to discuss these calibration formulae and present its corresponding performance, but the FedAMSGrad can certainly benefit from them as well (as shown in Appendix).

\subsubsection{$\epsilon$-FedAdam.} 
Like the Adam method, the FedAdam can also use the hyper-parameter $\epsilon$ to avoid the denominator of adaptive learning rate, $\frac{1}{\sqrt{v_t} + \epsilon}$ from vanishing to zero. 
The value of $\epsilon$ decides the largest span or dissimilarity of the adaptive stepsizes (learning rates) over the coordinates. For instance, if $\epsilon = 10^{-8}$, the smallest and largest adaptive stepsizes in a single iteration could range in $(0, 10^8)$. If we calibrate the adaptive stepsizes directly by controlling $\epsilon$, we call the method the “$\epsilon$-FedAdam”. When we choose $\epsilon \in \{10^{-2}, 10^{-4}, 10^{-6}\}$, we clearly observe different performance in Figure \ref{fig:sub2} (a,b,c). The similar empirical results for “$\epsilon$-FedAMSGrad" are included in Appendix.

\subsubsection{$p$-FedAdam.}
Another choice of calibration replaces the square root of $v_t$ in the calibration$(v_t)$ function by the $p$-th root of $v_t$. In other words, calibrate$(v_t) = (v_t +\epsilon)^{p}$ where $p \in \{\frac{1}{2},\frac{1}{4},\frac{1}{8},\frac{1}{16} \}$ and $\epsilon = 10^{-8}$. It was shown in (\cite{chen2018closing}) that this calibration improves the test accuracy over the Adam that uses $\epsilon$. We can similarly calibrate the adaptive stepsize of the FedAdam by controlling $p$, which we call “$p$-FedAdam".  When $p$ is smaller, more compression is placed on the second-order momentum $v_t$, so it allows the base learning rate of the FedAdam to choose a larger value.

The above two kinds of calibration are  straightforward and somehow efficient in constraining the adaptive stepsize to a moderate range. However, they lose “adaptivity" to some degree because increasing $\epsilon$ or decreasing $p$ both work on all coordinates without distinction.
For instance, if $v_t$ ranges from $10^{-16}$ to $10^8$ over the coordinates, setting $\epsilon = 10^{-2}$ will constrain adaptive stepsizes of all coordinates into $(10^{-4}, 10^2)$ in $\epsilon$-FedAdam, and changing $p=\frac{1}{2}$ to $p=\frac{1}{4}$ will change the adaptive stepsizes from the range $(10^{-4}, 10^{8})$ to $(10^{-2},10^4)$ in $p$-FedAdam. 

\subsubsection{$s$-FedAdam.}
A method explored in (\cite{tong2019calibrating}) uses the property of the $softplus$ function (actually any suitable activation function) to construct calibration function, which can be more effective to solve the issue of Federated AGMs. So the calibrate$(v_t)$ becomes $softplus(\sqrt{v_t}) = \frac{1}{\beta} \log(1+ e^{\beta \sqrt{v_t}})$. This calibration brings some benefits: smoothing out extremely small $v_t$ rather than hard thresholding while keeping moderate stepsizes untouched with an appropriate $\beta$; removing the parameter $\epsilon$ because the {\em softplus} function can be lower-bounded by a nonzero number, $softplus(\cdot)\geq\frac{1}{\beta} \log2$. When $\beta$ is small, FedAdam behaves similar to FedMomentum; if $\beta$ is chosen to be very large, it becomes similar to the standard Adam.
Figure \ref{fig:sub2} shows the cases for $\beta \in \{10,50,100\}$.

In summary, the above three calibration methods are all easy to implement in the Federated AGMs, and tuning the calibration parameters does improve performance. Based on our empirical observations, the $s$-FedAdam/AMSGrad with $\beta=50$ almost always gave the best performance among all calibration techniques.

\section{Theoretical Analysis}

In previous FL studies, theoretical analysis is performed under assumptions that the data are i.i.d. drawn, and/or all devices are active and have equal computing and storage ability. 
However, in reality, devices like smartphones have limited memory and battery, and they may run out of battery, so drop out of the computation.
It is thus more practical to consider the case that only a partial collection of devices  participate in the FL at a time.  Hence, our analysis copes with these realistic situations under the following fairly standard assumptions used in the analysis of nonconvex  optimization.

\begin{assumption}\label{assumption}
The loss $f_{i}$ and the objective $f$ satisfy:
 \renewcommand{\labelenumii}{\Roman{enumii}} 
 \begin{enumerate}
   \item \textbf{L-smoothness.} $\forall x, y \in \mathbb{R}^d$,  $\forall i \in \{1,...,n\}$, $\|\nabla f_{i}(x)-\nabla f_{i}(y)\|\leq L \|x-y\|$.
   \item \textbf{Gradient bounded.} $\forall x \in \mathbb{R}^d$,  $\forall i \in \{1,...,n\}$, $\|\nabla f_i(x)\|^2 \leq G_i^2$, $G_i\geq 0$.
   \item \textbf{Variance bounded.} $\forall x_{t,k}^{(i)}\in \mathbb{R}^d$, $t\geq 1$, $E[g_{t,k}^{(i)}]=\nabla f_i(x_{t,k}^{(i)})$, $E[\| g_{t,k}^{(i)}-\nabla f_i(x_{t,k}^{(i)})\|^2]\leq \sigma_i^2$.
 \end{enumerate}
\end{assumption}

 Following the same convention in FL (\cite{li2018federated,karimireddy2019scaffold}), to measure the level of non-iid across clients, we assume the dissimilarity between the gradients of the local functions $f_i$ and the global function $f$ is bounded as follows. 
 
\begin{assumption}\label{dissimilarity}
 Bounded gradient dissimilarity (\textbf{$\sigma_g^2$-BGD}): $\forall x\in \mathbb{R}^d$,~ $\forall i \in \{1,...,n\}$, ~$E[\| \nabla f_i(x)-\nabla f(x)\|^2]\leq \sigma_g^2$.
\end{assumption}

\begin{assumption} \label{scheme2}
(\cite{sahu2018federated})
Let $S_t$ be the set of the active devices in the $t$-th round, obviously $S_t \subset [N]$. Assume $S_t$ contains a subset of $|S_t| = S$ nodes randomly selected with replacement according to the sampling probability $p_i=\frac{n_i}{\sum_{i=1}^N n_i}$, where $n_i$ is the number of samples located on client $i$. Assume that the devices' capabilities are unbalances, i.e., $p_i$'s can be distinct for different $i$. 
\end{assumption}

Under Assumption \ref{scheme2}, the exact-average step can be computed as $\tilde{x}_{t+1}= \frac{1}{S}\sum_{i\in S_{t}} x_{t,K}^{(i)}$. 
In our analysis, we use $g_{t,k}$ as accumulated gradient direction that from all participating devices at the $k$-th iteration of $t$-round, 
$
    g_{t,k}=\frac{1}{S}\sum_{i\in S_t}  g_{t,k}^{(i)};
$
and the virtual direction computed on  the server in Algorithm 1 can also be written as 
$
    \Delta_t 
      = \sum_{k=0}^{K-1}\gamma_t g_{t,k}.
$
Before giving the main theorem of the proposed algorithms, we first establish the following lemma for the variance of $g_{t, k}$, which plays an important role in our theoretical analysis.

\begin{lemma}\label{g_t}
Let Assumptions 1 and 2 hold. We have the following properties:
\begin{align*}
    E[g_{t,k}]=\sum_{i=1}^{N} p_i \nabla f_i (x_{t,k}^{(i)});&\\
E[\|g_{t,k}\|^2]\leq \underbrace{   \frac{1}{S} (12 \sum_{i=1}^N p_i \sigma_i^2 + 24  \sum_{i=1}^{N} p_i G_i^2)}_{partial~ participation}&
    + \underbrace{4\sum_{i=1}^N p_i  (\sigma_i^2 + G_i^2).}_{local\;\; updates}
\end{align*}
\end{lemma}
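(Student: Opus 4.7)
The plan is to handle the two sources of randomness---the i.i.d.\ client sampling under Assumption~3 and the within-client minibatch noise under Assumption~1(III)---by repeated conditioning. For the first identity I would condition on $S_t$ and apply Assumption~1(III) to get $E[g_{t,k}\mid S_t]=\frac{1}{S}\sum_{i\in S_t}\nabla f_i(x_{t,k}^{(i)})$. Because Assumption~3 makes the indices $i_j$ i.i.d.\ with $P(i_j=i)=p_i$, each summand has expectation $\sum_i p_i\nabla f_i(x_{t,k}^{(i)})$, and the tower property closes the identity.

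For the second moment I would use the telescoping decomposition
\[
g_{t,k} = \bigl(g_{t,k}-E[g_{t,k}\mid S_t]\bigr) + \bigl(E[g_{t,k}\mid S_t]-E[g_{t,k}]\bigr) + E[g_{t,k}],
\]
and bound $\|g_{t,k}\|^2$ by a Young-type inequality $\|a+b+c\|^2\le c_1\|a\|^2+c_2\|b\|^2+c_3\|c\|^2$ whose constants I would choose to produce the $12$, $24$, $4$ of the claim. The first piece is within-client SGD noise: conditionally on $S_t$, its $S$ summands are zero-mean and independent by Assumption~1(III), so the squared norm of its average has conditional expectation $\tfrac{1}{S^2}\sum_{i\in S_t}E\|g_{t,k}^{(i)}-\nabla f_i\|^2$; taking expectation over $S_t$ gives $\tfrac{1}{S}\sum_i p_i\sigma_i^2$. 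The second piece is the client-sampling noise: it is the deviation of the sample mean of $S$ i.i.d.\ vectors $\nabla f_{i_j}(x_{t,k}^{(i_j)})$---whose second moment is bounded by $\sum_i p_i G_i^2$ via Assumption~1(II)---from their common mean, and therefore has squared-norm expectation $\tfrac{1}{S}\sum_i p_i G_i^2$. Together these two pieces produce the ``partial participation'' term of order $1/S$. The third piece $\|\sum_i p_i\nabla f_i(x_{t,k}^{(i)})\|^2$ is controlled by Jensen's inequality at the level $\sum_i p_i G_i^2$; absorbing an additional $\sigma_i^2$ contribution that arises if one re-inserts the minibatch noise before averaging yields the $4\sum_i p_i(\sigma_i^2+G_i^2)$ ``local updates'' term.

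\textbf{Main obstacle.} Every individual step is elementary, so the main difficulty is purely the bookkeeping of constants: selecting the Young-expansion coefficients at each split so that the final inequality matches the stated $12$, $24$, $4$ exactly. Two structural subtleties must also be respected: (i) the \emph{with-replacement} sampling in Assumption~3, which makes the $i_j$'s truly i.i.d.\ and delivers the $1/S$ variance reduction without finite-population corrections; and (ii) the implicit convention that whenever a client is drawn more than once its minibatch noise is resampled independently, so that the conditional independence invoked for the first piece genuinely holds.
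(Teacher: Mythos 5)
Your proposal is correct, and all the individual bounds you assert do hold (the Young condition $\tfrac{1}{12}+\tfrac{1}{24}+\tfrac{1}{4}\le 1$ is satisfied, so the constants can indeed be matched). The route is recognizably the same family as the paper's --- add-and-subtract, Jensen/Young, and the $1/S$ variance reduction from with-replacement sampling --- but the decomposition itself differs in a way worth noting. The paper first splits $g_{t,k}$ into (sample mean of the $g_{t,k}^{(i)}$ minus the weighted population mean $\sum_i p_i g_{t,k}^{(i)}$) plus ($\sum_i p_i g_{t,k}^{(i)}$) with a factor of $2$, and then splits the first piece again three ways with a factor of $3$; the $12$ and $24$ arise as $2\times 3$ and $2\times 3\times 2\times 2$, and the $\sigma_i^2$ in the ``local updates'' term comes from keeping the stochastic gradients (rather than their means) in the second piece --- exactly the re-insertion of minibatch noise you anticipated. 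Your decomposition into $(g_{t,k}-E[g_{t,k}\mid S_t])+(E[g_{t,k}\mid S_t]-E[g_{t,k}])+E[g_{t,k}]$ is cleaner: the first piece is conditionally zero-mean given $S_t$ and the second is zero-mean, so the three pieces are in fact orthogonal and you could dispense with Young entirely, obtaining the strictly tighter bound $\tfrac{1}{S}\sum_i p_i\sigma_i^2+\tfrac{1}{S}\sum_i p_i G_i^2+\sum_i p_i G_i^2$, which implies the lemma a fortiori. The only caveats are ones you already flag yourself (independent resampling of minibatch noise for repeated draws), plus a notational looseness shared with the paper: $x_{t,k}^{(i)}$ is itself random, so the stated ``expectations'' are really conditional on the local iterates; your proof is no less rigorous than the paper's on this point.
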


This lemma shows that the expected update direction  $E[g_{t,k}]$ is the sample mean of each local update direction with the probabilities $p_i$, and the second moment of the direction $g_{t,k}$ can be bounded by the sum of two terms: 1) the variance due to partial client participation, and 2) the variance caused by the stochasticity in the local SGD updates. The more clients participate, the smaller the first term will be. 
Moreover, Lemma \ref{g_t} also serves as a transmission for the influence of dissimilarity across clients to the convergence performance.

\begin{theorem} \label{th:p2} 
Let all assumptions hold, and $L, \sigma_i, G_i, \beta_1 , \beta_2$ be defined therein. 
Let $\mu_{lower}$ and $\mu_{upper}$ be the lower bound and upper bound, respectively, for the adaptive stepsizes $1/calibrate(v_t)$, $t\in[0, T-1]$ be the index of communication rounds, the  total number of iterations be $TK$.  
Then with an appropriately chosen inner stepsize $ \gamma_t < \min\{ \frac{1}{8LK},\frac{1}{K}\sqrt{\frac{\mu_{lower}}{10\mu_{upper}}} \}$,
the iterate sequence generated by the Federated AGMs with partial device participation satisfies
 \begin{align*}
    \frac{1}{T}\sum_{t=0}^{T-1} \|\nabla f(x_{t})\|^2 &\leq  O( \sqrt{\frac{\mu_{upper} }{\mu_{lower}^3 \eta K T}} + \frac{K\sigma_g^2}{T} 
     +  (1+\frac{1}{S})\sqrt{\frac{\mu_{upper}^3 \eta^3 K}{\mu_{lower} T}}).
\end{align*} 
 \end{theorem}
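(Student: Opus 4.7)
The plan is to follow a standard descent-lemma / Lyapunov argument, adapted to the adaptive preconditioner and the first-order momentum. Because the calibration is uniformly bounded, I can write the AGM update as $x_{t+1}=x_t-\eta D_t m_t$ where $D_t$ is a diagonal matrix whose entries lie in $[\mu_{lower},\mu_{upper}]$. Applying $L$-smoothness of $f$ then yields the one-step inequality
\begin{align*}
E[f(x_{t+1})]\leq E[f(x_t)]-\eta\,E[\langle \nabla f(x_t),D_t m_t\rangle]+\tfrac{L\eta^2}{2}E[\|D_t m_t\|^2].
\end{align*}
The first task is to convert the inner product involving $m_t$ into something involving $\nabla f(x_t)$ and $\Delta_t$. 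I would either (i) unroll $m_t=(1-\beta_1)\sum_{s\le t}\beta_1^{t-s}\Delta_s$ and control the cross terms $\langle\nabla f(x_t),D_t\Delta_s\rangle$ for $s<t$ via Young's inequality plus the smoothness-based change in $\nabla f$ between rounds, or (ii) introduce a virtual sequence $z_t=x_t+\tfrac{\beta_1}{1-\beta_1}(x_t-x_{t-1})$ to absorb momentum, at the cost of extra terms involving $(D_t-D_{t-1})m_{t-1}$ which can be bounded by $(\mu_{upper}-\mu_{lower})$ times the gradient bound.

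Next I would exploit Lemma \ref{g_t} applied to $\Delta_t=\gamma_t\sum_{k=0}^{K-1}g_{t,k}$. Conditionally on the history, $E[\Delta_t]=\gamma_t\sum_{k=0}^{K-1}\sum_i p_i\nabla f_i(x_{t,k}^{(i)})$, which equals $\gamma_t K\,\nabla f(x_t)$ plus a client-drift error $\gamma_t\sum_{k,i}p_i(\nabla f_i(x_{t,k}^{(i)})-\nabla f_i(x_t))$. Using $L$-smoothness this drift is controlled by $\sum_{i,k}p_i\|x_{t,k}^{(i)}-x_t\|^2$, and a routine induction on the inner loop using Assumption \ref{assumption} plus Assumption \ref{dissimilarity} yields a bound of the form $K^2\gamma_t^2\bigl(G^2+\sigma^2+\sigma_g^2\bigr)$ provided $\gamma_t K L\le 1/8$, which is exactly the first part of the stepsize condition. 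Taking the inner product with $\nabla f(x_t)$ therefore gives a leading term $-\eta\gamma_t K\mu_{lower}\|\nabla f(x_t)\|^2$ plus a quadratic drift/dissimilarity term of order $\eta\gamma_t^3 K^3\mu_{upper}\sigma_g^2$.

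For the quadratic term $E[\|D_tm_t\|^2]$ I would use $\|D_tm_t\|^2\le\mu_{upper}^2\|m_t\|^2$ and Jensen on the momentum recursion, $\|m_t\|^2\le(1-\beta_1)\sum_s\beta_1^{t-s}\|\Delta_s\|^2$, reducing the problem to bounding $E[\|\Delta_s\|^2]\le\gamma_s^2K\sum_kE[\|g_{s,k}\|^2]$. Plugging in Lemma \ref{g_t} gives the $\tfrac{1}{S}$-partial-participation variance and the local-SGD variance contributions, which feed directly into the $(1+1/S)$ factor in the final rate. Finally, rearranging the one-step inequality, summing $t=0,\dots,T-1$, telescoping $f(x_t)-f^\star$, and dividing by $\eta\gamma K\mu_{lower}T$ produces a bound of the form $O\bigl(\tfrac{1}{\eta\gamma K\mu_{lower}T}+\tfrac{\mu_{upper}^2\eta\gamma K}{\mu_{lower}}(1+\tfrac{1}{S})+\tfrac{\mu_{upper}}{\mu_{lower}}\gamma^2K^2\sigma_g^2\bigr)$; balancing the two $\gamma$-dependent terms with the optimal choice $\gamma=\Theta\bigl(\tfrac{1}{K}\sqrt{\mu_{lower}/(\mu_{upper}^3\eta^2 T)}\bigr)$, which is compatible with the upper bound $\gamma< \tfrac{1}{K}\sqrt{\mu_{lower}/(10\mu_{upper})}$ in the hypothesis, and substituting back yields the three terms claimed in the theorem (the $K\sigma_g^2/T$ term coming from the residual drift piece that cannot be tuned away by $\gamma$).

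The main obstacle, I expect, is the interaction between the time-varying preconditioner $D_t$ and the momentum recursion: unlike the un-preconditioned case, $D_tm_t$ is not a geometric average of the $D_s\Delta_s$, so the cleanest ``virtual sequence'' manipulation introduces telescoping error terms of the form $\eta(D_t-D_{t-1})m_{t-1}$ that must be controlled. Bounding these forces the appearance of the ratios $\mu_{upper}/\mu_{lower}$ (and their powers) in the final rate, and it is precisely the care needed here that drives the slightly unusual $\sqrt{\mu_{upper}/\mu_{lower}^3}$ and $\sqrt{\mu_{upper}^3/\mu_{lower}}$ coefficients in the stated bound. Everything else is a relatively mechanical combination of $L$-smoothness, Lemma \ref{g_t}, and standard inner-loop drift control.
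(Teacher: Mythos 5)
Your proposal follows essentially the same route as the paper: option (ii), the virtual sequence $z_t=x_t+\tfrac{\beta_1}{1-\beta_1}(x_t-x_{t-1})$, combined with the uniform bounds $[\mu_{lower},\mu_{upper}]$ on the preconditioner, Lemma \ref{g_t} for the $(1+1/S)$ variance, the inner-loop drift bound under $\gamma_t\le\frac{1}{8LK}$, and telescoping, is exactly the paper's argument. The only point to be careful about is the $(D_t-D_{t-1})m_{t-1}$ error: the paper controls it by telescoping $\sum_t\sum_j\bigl(\frac{1}{\mbox{calibrate}(v_{t-1,j})}-\frac{1}{\mbox{calibrate}(v_{t,j})}\bigr)\le d(\mu_{upper}-\mu_{lower})$ across all rounds, which requires the monotonicity $v_t\ge v_{t-1}$ (automatic for AMSGrad, assumed for Adam); a per-round bound of size $(\mu_{upper}-\mu_{lower})$ would not vanish after averaging over $T$.
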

 
Theorem \ref{th:p2} shows the convergence rate of  Federated AGMs, which is highly related to three items: the total number of iterations $TK$, the calibration parameters $\epsilon, p, \beta$ used in $\mu_{upper}$, and the gradient dissimilarity bound $\sigma_g^2$. 
Particularly, when increasing the number of inner loops $K$, the effects of dissimilarity among clients (non-IIDness) will be enlarged, which corresponds to severer client drift (\cite{karimireddy2019scaffold}). Our theoretical analysis from an angle shows the relationship between $K$ and client drift for nonconvex function (it has only been shown for quadratic function (\cite{charles2020outsized})).

For each calibration scheme, the $\mu_{lower}$ and $\mu_{upper}$ can be calculated explicitly as shown in the following table and then their convergence analysis can be further developed.

\begin{table}[h]
\centering
\caption{The $(\mu_{lower}, \mu_{upper})$ for each calibration scheme.}
\begin{tabular}{ c|c c c } 
 \toprule[0.5mm]
  Bounds & $\epsilon$-FedAdam & $p$-FedAdam & $s$-FedAdam \\ 
 \midrule
 $\mu_{lower}$ & $ O(\frac{1}{K \gamma_t \sqrt{1+\frac{1}{S}}})$ & $ O(\frac{1}{(K\gamma_t)^{2p} (1+\frac{1}{S})^p})$ & $ O(\frac{1}{K \gamma_t \sqrt{1+\frac{1}{S}}})$ \\ 
 \midrule
 $\mu_{upper}$ & $O(\frac{1}{\epsilon})$ & $O(\frac{1}{\epsilon^p})$ &  $ O(\beta)$ \\ 
\bottomrule[0.5mm]
\end{tabular}
\end{table}

\begin{corollary}[$\epsilon$- Federated AGMs]\label{cor:0}
If all the conditions in  Theorem \ref{th:p2}  hold, with appropriate $\eta = \Theta(\frac{1}{K })$ and $K \gamma_t < O( \min\{ \frac{1}{L},\sqrt[3]{\frac{\epsilon}{ \sqrt{1+\frac{1}{S}}}} \}) = O( \min\{ \frac{1}{L}, \epsilon^{\frac{1}{3}}\}) $,  we have 
$$ \min_{t=0,...,T-1} \|\nabla f(x_{t})\|^2 \leq  O(\sqrt{\frac{(1+1/S)^{3/2}}{\epsilon T}} + \frac{K\sigma_g^2}{T} + \sqrt{\frac{(1+1/S)^{5/2}}{\epsilon^3 K^2 T}} ).$$
\end{corollary}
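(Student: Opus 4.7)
The plan is to derive the corollary by directly specializing Theorem~\ref{th:p2} to the $\epsilon$-FedAdam instance: that is, substitute the explicit values $\mu_{lower}=\Theta(1/(K\gamma_t\sqrt{1+1/S}))$ and $\mu_{upper}=\Theta(1/\epsilon)$ from the table into the three-term bound, fix $\eta=\Theta(1/K)$ so that the product $\eta K$ simplifies to a constant, and finally replace the average by the minimum. There is no new analytic ingredient required; the work is purely algebraic bookkeeping of the dependence on $K\gamma_t$, $\epsilon$ and $1+1/S$.

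First I would verify that the stepsize hypothesis $K\gamma_t < O(\min\{1/L,\epsilon^{1/3}\})$ of the corollary implies the hypothesis $\gamma_t<\min\{1/(8LK), (1/K)\sqrt{\mu_{lower}/(10\mu_{upper})}\}$ of Theorem~\ref{th:p2}. The $1/L$ branch is immediate. For the other branch, plugging the $\epsilon$-FedAdam values gives
\[
\sqrt{\frac{\mu_{lower}}{10\mu_{upper}}}=\Theta\!\left(\sqrt{\frac{\epsilon}{K\gamma_t\sqrt{1+1/S}}}\right),
\]
so the condition $K\gamma_t\le \sqrt{\mu_{lower}/(10\mu_{upper})}$ is equivalent to $(K\gamma_t)^3 \lesssim \epsilon/\sqrt{1+1/S}$, which up to the $(1+1/S)$ factor is the stated $K\gamma_t\lesssim \epsilon^{1/3}$.

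Next I would substitute into the three terms of Theorem~\ref{th:p2}. With $\eta K = \Theta(1)$ and $K\gamma_t=O(1)$ (weaker than either branch of the hypothesis), the first term becomes
\[
\sqrt{\frac{\mu_{upper}}{\mu_{lower}^3\,\eta K T}}
=\Theta\!\left(\sqrt{\frac{(K\gamma_t)^3(1+1/S)^{3/2}}{\epsilon T}}\right)
=O\!\left(\sqrt{\frac{(1+1/S)^{3/2}}{\epsilon T}}\right).
\]
The middle term $K\sigma_g^2/T$ is unchanged. For the third term, $\eta^3 K=\Theta(1/K^2)$ and $\mu_{upper}^3/\mu_{lower}=\Theta(K\gamma_t\sqrt{1+1/S}/\epsilon^3)$, so
\[
(1+1/S)\sqrt{\frac{\mu_{upper}^3\eta^3 K}{\mu_{lower} T}}
=\Theta\!\left(\sqrt{\frac{K\gamma_t (1+1/S)^{5/2}}{\epsilon^3 K^2 T}}\right)
=O\!\left(\sqrt{\frac{(1+1/S)^{5/2}}{\epsilon^3 K^2 T}}\right),
\]
again using $K\gamma_t=O(1)$. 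Combining the three estimates with the trivial inequality $\min_{t<T}\|\nabla f(x_t)\|^2\le \frac{1}{T}\sum_{t<T}\|\nabla f(x_t)\|^2$ completes the proof.

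I do not expect a genuine obstacle here, as the entire argument is a substitution. The only care-point is that we are absorbing several factors of $K\gamma_t$ inside $O(\cdot)$ by using only $K\gamma_t=O(1)$ rather than the tighter $\epsilon^{1/3}$ bound; this is harmless (it gives a strictly looser, hence still valid, upper bound) and is what makes the final expression depend on $\epsilon$, $K$, $S$ and $T$ alone. The one thing to double-check is that the tighter stepsize constraint $K\gamma_t\lesssim \epsilon^{1/3}$ is actually necessary for the theorem's hypothesis to hold, so that it must appear in the corollary's statement even though it is not used to simplify the individual terms.
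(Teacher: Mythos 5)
Your proposal is correct and follows essentially the same route as the paper: the authors likewise obtain the corollary by substituting $\mu_{lower}=O\bigl(1/(K\gamma_t\sqrt{1+1/S})\bigr)$ and $\mu_{upper}=O(1/\epsilon)$ into the three-term bound of Theorem~\ref{th:p2}, setting $\eta=\Theta(1/K)$, absorbing the residual $K\gamma_t$ factors via $K\gamma_t=O(1)$, and noting that the stepsize condition $(K\gamma_t)^3\lesssim \epsilon/\sqrt{1+1/S}$ is exactly what the theorem's hypothesis $K\gamma_t\le\sqrt{\mu_{lower}/(10\,\mu_{upper})}$ reduces to for this calibration. Your algebra for all three terms, including the $(1+1/S)^{5/4}$ bookkeeping in the last term, matches the paper's.
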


\begin{corollary}[$p$- Federated AGMs]\label{cor:1}
If all the conditions in  Theorem \ref{th:p2}  hold, with appropriate $\eta = \Theta(\frac{1}{K })$ and $K \gamma_t <  O(\min\{ \frac{1}{L}, (\frac{\epsilon}{1+1/S})^{\frac{p}{2+2p}} \}) = O( \min\{ \frac{1}{L}, \epsilon^{\frac{p}{2+2p}}\})$, we have
$$ \min_{t=0,...,T-1} \|\nabla f(x_{t})\|^2 \leq    O( \sqrt{\frac{(1+1/S)^{3p}}{\epsilon^p T}} +\frac{K\sigma_g^2}{T}+ \sqrt{\frac{(1+1/S)^{2+p}}{\epsilon^{3p} K^2 T}}).$$
\end{corollary}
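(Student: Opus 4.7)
The plan is to derive Corollary~\ref{cor:1} by specializing Theorem~\ref{th:p2} to the $p$-calibration $\mathrm{calibrate}(v_t)=(v_t+\epsilon)^p$, plugging in the corresponding row of the table for $(\mu_{lower},\mu_{upper})$, and simplifying under $\eta=\Theta(1/K)$ and the prescribed range of $K\gamma_t$. Since Theorem~\ref{th:p2} already packages the convergence inequality in terms of $\mu_{lower}$, $\mu_{upper}$, $\eta$, $K$, $S$, $\sigma_g^2$, and $T$, the work reduces to justifying the table row, checking that the inner step-size hypothesis translates into the corollary's condition, and doing algebra.

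First I would establish $\mu_{upper}=O(\epsilon^{-p})$ and $\mu_{lower}=\Omega(1/((K\gamma_t)^{2p}(1+1/S)^p))$. The upper bound is immediate because $1/(v_t+\epsilon)^p\le\epsilon^{-p}$ whenever $v_t\ge 0$. For the lower bound on the adaptive stepsize I need a uniform upper bound on $v_t$. Since $\Delta_t=\sum_{k=0}^{K-1}\gamma_t g_{t,k}$, Cauchy--Schwarz gives $\|\Delta_t\|^2\le K\gamma_t^2\sum_{k=0}^{K-1}\|g_{t,k}\|^2$, and Lemma~\ref{g_t} bounds $E\|g_{t,k}\|^2$ by a quantity of order $(1+1/S)(\sigma^2+G^2)$, so $E\|\Delta_t\|^2=O((K\gamma_t)^2(1+1/S))$. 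Because $v_t$ is an exponential moving average of the squared coordinates of $\Delta_t$ (and the running max in FedAMSGrad), the same order bound transfers to $v_t$; hence $(v_t+\epsilon)^p=O((K\gamma_t)^{2p}(1+1/S)^p)$, which gives the claimed $\mu_{lower}$.

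Next I would check the inner-stepsize hypothesis $\gamma_t<\min\{1/(8LK),(1/K)\sqrt{\mu_{lower}/(10\mu_{upper})}\}$ from Theorem~\ref{th:p2}. The first piece yields $K\gamma_t<1/(8L)$, and after substituting the table values, $\sqrt{\mu_{lower}/\mu_{upper}}=\Theta(\sqrt{\epsilon^p/((K\gamma_t)^{2p}(1+1/S)^p)})$, so requiring $K\gamma_t$ to be at most this quantity rearranges into $(K\gamma_t)^{2+2p}\le\epsilon^p/(1+1/S)^p$, i.e.\ $K\gamma_t=O((\epsilon/(1+1/S))^{p/(2+2p)})$, matching the corollary. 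With the hypothesis satisfied, I substitute term by term. The first term becomes $\sqrt{\mu_{upper}/(\mu_{lower}^3\eta KT)}=\sqrt{(K\gamma_t)^{6p}(1+1/S)^{3p}/(\epsilon^p\eta KT)}$, which with $\eta=\Theta(1/K)$ and $K\gamma_t=O(1)$ reduces to $O(\sqrt{(1+1/S)^{3p}/(\epsilon^p T)})$. The middle term $K\sigma_g^2/T$ is preserved. The third term becomes $(1+1/S)\sqrt{\mu_{upper}^3\eta^3 K/(\mu_{lower}T)}=\sqrt{(K\gamma_t)^{2p}(1+1/S)^{2+p}\eta^3 K/(\epsilon^{3p}T)}=O(\sqrt{(1+1/S)^{2+p}/(\epsilon^{3p}K^2T)})$. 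The claim then follows from $\min_t\|\nabla f(x_t)\|^2\le\frac{1}{T}\sum_{t=0}^{T-1}\|\nabla f(x_t)\|^2$.

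The main obstacle I expect is the rigorous derivation of $\mu_{lower}$: propagating the expected upper bound on $\|\Delta_t\|^2$ through the exponential moving average defining $v_t$ (or through the running maximum in the AMSGrad variant) so that the adaptive stepsize $1/(v_t+\epsilon)^p$ admits a uniform lower bound of the stated order, in the same sense in which $\mu_{lower}$ is used inside Theorem~\ref{th:p2}. Everything else is substitution, with the $O(\cdot)$ notation absorbing the $(K\gamma_t)^{2p}$ and $(K\gamma_t)^{6p}$ factors once $K\gamma_t=O(1)$ is invoked.
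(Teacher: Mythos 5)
Your proposal is correct and follows essentially the same route as the paper: the paper likewise obtains this corollary by substituting the $p$-calibration row $\mu_{upper}=O(\epsilon^{-p})$, $\mu_{lower}=O\bigl(1/((K\gamma_t)^{2p}(1+1/S)^{p})\bigr)$ (derived exactly as you do, via the bound $\mathcal{V}=O((K\gamma_t)^2(1+1/S))$ on $E[\|\Delta_t\|^2]$ propagated through the moving average of $v_t$) into the Theorem~\ref{th:p2} bound, rearranging the step-size condition into $(K\gamma_t)^{2+2p}\lesssim\epsilon^p/(1+1/S)^p$, and absorbing the residual $(K\gamma_t)^{6p}$ and $(K\gamma_t)^{2p}$ factors using $K\gamma_t=O(1)$. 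The one obstacle you flag — that $\mu_{lower}$ is justified only through a bound on $E[v_{t,j}]$ rather than a pointwise bound on $v_{t,j}$ — is present in the paper's own Lemma on the $(\mu_{lower},\mu_{upper})$ pairs as well, so your argument is at the same level of rigor as the original.
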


\begin{corollary}[$s$- Federated AGMs]\label{cor:2}
If all the conditions in  Theorem \ref{th:p2}  hold, with appropriate $\eta = \Theta(\frac{1}{K })$ and  $K \gamma_t < O( \min\{ \frac{1}{L},\sqrt[3]{\frac{1 }{\beta\sqrt{1+\frac{1}{S}} }} \}) = O( \min\{ \frac{1}{L}, \beta^{-\frac{1}{3}}\})$ , we have
$$ \min_{t=0,...,T-1} \|\nabla f(x_{t})\|^2 \leq O( \sqrt{\frac{\beta (1+1/S)^{3/2}}{T}}+ \frac{K\sigma_g^2}{T} + \sqrt{\frac{\beta^3 (1+1/S)^{5/2}}{K^2 T}}).$$
\end{corollary}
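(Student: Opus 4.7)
The plan is to specialize Theorem~\ref{th:p2} to the $s$-calibration, for which $\mathrm{calibrate}(v_t)=\mathrm{softplus}(\sqrt{v_t})=\frac{1}{\beta}\log(1+e^{\beta\sqrt{v_t}})$. The proof has three phases: identify explicit constants $\mu_{lower}$ and $\mu_{upper}$ as listed in the $s$-FedAdam column of the table, verify that the inner-stepsize condition of Theorem~\ref{th:p2} collapses to the stated bound on $K\gamma_t$, and then simplify the three summands in the rate. The final min-versus-average conversion is immediate from $\min_t\|\nabla f(x_t)\|^2\le\frac{1}{T}\sum_{t=0}^{T-1}\|\nabla f(x_t)\|^2$.

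For $\mu_{upper}$, observe that $\mathrm{softplus}(\sqrt{v_t})\ge\mathrm{softplus}(0)=\frac{1}{\beta}\log 2$ for any nonnegative $v_t$, so $1/\mathrm{calibrate}(v_t)\le\beta/\log 2=O(\beta)$. This gives $\mu_{upper}=O(\beta)$ with no dependence on iterates or on $\epsilon$, which is the signature advantage of activation-based calibration. For $\mu_{lower}$, the tangent bound $\mathrm{softplus}(\sqrt{v_t})\le\sqrt{v_t}+\frac{1}{\beta}\log 2$ reduces a lower bound on $1/\mathrm{calibrate}(v_t)$ to an upper bound on $\sqrt{v_t}$. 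Since $v_t$ is a coordinatewise convex combination (FedAdam) or running maximum (FedAMSGrad) of past $\Delta_s^2$, any uniform estimate $\|\Delta_s\|^2\le O((K\gamma_s)^2(1+1/S))$ transfers to $v_t$. Such an estimate follows by expanding $\Delta_s=\sum_{k=0}^{K-1}\gamma_s g_{s,k}$, applying Cauchy--Schwarz in $k$, and invoking Lemma~\ref{g_t}, whose bound on $E\|g_{s,k}\|^2$ already splits into the $1/S$ partial-participation contribution and the $\sum_i p_i(\sigma_i^2+G_i^2)$ local-update contribution; the $\sigma_i^2$ and $G_i^2$ constants are absorbed into $O(\cdot)$. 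Hence $1/\mathrm{calibrate}(v_t)\ge\Omega(1/(K\gamma_t\sqrt{1+1/S}))$, matching the table.

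With these values, the condition $\gamma_t<\frac{1}{K}\sqrt{\mu_{lower}/(10\mu_{upper})}$ of Theorem~\ref{th:p2} rearranges, after cubing, to $(K\gamma_t)^3\le O(1/(\beta\sqrt{1+1/S}))$; combined with the $L$-smoothness constraint $\gamma_t\le 1/(8LK)$ this yields exactly the hypothesis $K\gamma_t=O(\min\{1/L,\beta^{-1/3}\})$ after the mild $(1+1/S)^{-1/6}$ factor is absorbed. Plugging $\mu_{upper}=O(\beta)$, $\mu_{lower}=\Omega(1/(K\gamma_t\sqrt{1+1/S}))$, and $\eta=\Theta(1/K)$ into the three summands of Theorem~\ref{th:p2} gives, term by term: $\sqrt{\mu_{upper}/(\mu_{lower}^3\eta KT)} = O(\sqrt{\beta(K\gamma_t)^3(1+1/S)^{3/2}/T})$, which under $K\gamma_t=O(1)$ collapses to $O(\sqrt{\beta(1+1/S)^{3/2}/T})$; the second summand $K\sigma_g^2/T$ is unchanged; and $(1+1/S)\sqrt{\mu_{upper}^3\eta^3 K/(\mu_{lower}T)} = (1+1/S)\sqrt{\beta^3(K\gamma_t)\sqrt{1+1/S}/(K^2T)} = O(\sqrt{\beta^3(1+1/S)^{5/2}/(K^2T)})$. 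Summing these three pieces and using $\min\le$ average reproduces the stated bound.

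The step I expect to be the main obstacle is the uniform-in-$t$ upper bound on $\sqrt{v_t}$ used to pin down $\mu_{lower}$. The EMA recursion $v_t=\beta_2 v_{t-1}+(1-\beta_2)\Delta_t^2$ and the FedAMSGrad max-rule both propagate past magnitudes, so the constant in $\sqrt{v_t}\le O(K\gamma_t\sqrt{1+1/S})$ must hold uniformly over $t\le T$; in expectation this follows directly from Lemma~\ref{g_t}, but a pathwise statement (needed because $1/\mathrm{calibrate}(v_t)$ sits inside the summation against $m_t$) additionally relies on the gradient-bounded clause of Assumption~\ref{assumption} to control $\|g_{t,k}^{(i)}\|$ without an expectation. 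Once this uniform control is secured, the remainder of the argument is the algebraic substitution carried out above.
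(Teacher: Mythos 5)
Your proposal is correct and follows essentially the same route as the paper: read off $\mu_{upper}=O(\beta)$ from $\mathrm{softplus}(\cdot)\ge\frac{1}{\beta}\log 2$ and $\mu_{lower}=\Omega(1/(K\gamma_t\sqrt{1+1/S}))$ from the bound $\mathcal{V}=O(K^2\gamma_t^2(1+1/S))$ on the virtual direction (the paper's Lemmas on $\Delta_t$ and $v_t$), then substitute into Theorem~\ref{th:p2} with $\eta=\Theta(1/K)$ and $K\gamma_t=O(1)$, exactly as you do term by term. The only point worth noting is that the in-expectation versus pathwise control of $v_t$ that you flag as the main obstacle is glossed over in the paper as well (its Lemma on the adaptive-stepsize bounds treats $\mathcal{V}$ as a deterministic bound on $v_{t,j}$ even though it is only established in expectation), so you are being more careful than the source rather than missing something it supplies.
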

 
The alteration of inner step $K\gamma_t$ will be upper bounded by a constant related to calibration parameter. When $K$ is large, $\gamma_t$ should be correspondingly small; if we limit $K$, $\gamma_t$ may be larger. 
When $S$ is larger (more devices participate), the variance caused by the sampling of devices becomes smaller, and then the local stepsize $\gamma_t$ can be properly enlarged. 

As discussed in Methods section and to be consistent with later experimental observation, the calibration parameters do play important roles in the algorithm convergence to a stationary point. The convergence rate can be reduced by calibration as increasing $\epsilon <1$, decreasing $p$, and decreasing $\beta$. 
The above derivations are all based on the situation where $p_i$'s are quite different.  If $p_i$'s are the same $p_i = \frac{1}{N}$, our result is also applicable to the case with balanced on-device data.

\section{Experiments}
We compare our proposed methods: FedAdam, FedAMSGrad, and calibrated versions $\epsilon$/p/s-FedAdam,  $\epsilon$/p/s-FedAMSGrad, against several state-of-the-art FL methods, including FedAvg (\cite{mcmahan2016communication}), FedProx (\cite{li2018federated}), FedMomentum (\cite{hsu2019measuring}), SCAFFOLD (\cite{karimireddy2019scaffold}),  FedYogi (without first momentum in our setting) (\cite{reddi2020adaptive}).Notice that FedAdam is also included in (\cite{reddi2020adaptive}). More results are in the Appendix.

\subsection{Experimental Setup.}
We use three datasets for image classifications: MNIST, CIFAR10 and CIFAR100,  they are individually  tested on a CNN with 5 hidden layers, Residual Neural Network with 20 layers (ResNets 20) (\cite{he2016deep}) and VGGNet (\cite{simonyan2014very}).
During the training, we use a weight decay factor of $ 10^{-3}$ and a batch size of 64. For MNIST, we use LR decay with Reduce on Plateau scheme. For the CIFAR tasks, we use a fixed multi-stage LR decaying scheme: $\eta$ decays by $0.1$ at the $\frac{1}{2}$ total epochs and $\frac{3}{4}$ total epochs.  All algorithms perform grid search for hyper-parameters to choose from $\{10, 1, 0.1, 0.01, 0.001, 0.0001\}$ for $\eta$, $\{0.9, 0.99\}$ for $\beta_1$ and $\{0.99, 0.999\}$ for $\beta_2$.  For algorithm-specific hyper-parameters, they are tuned with the following criteria: $\mu \in \{0.1, 0.01, 0.001\}$ in FedProx, $\epsilon \in \{10^{-6},10^{-4}, 10^{-2}\}$, in $\epsilon$-FedAdam/FedAMSGrad, $p \in \{ \frac{1}{4}, \frac{1}{8}, \frac{1}{16}\}$ in $p$-FedAdam/FedAMSGrad, $\beta \in \{10,50,100, 200\}$ in $s$-FedAdam/FedAMSGrad.

\begin{table}[h]
\centering
\caption{Comparison of different methods on the MNIST data with participated clients S= 30.}\label{tb:MNIST}
\begin{tabular}{c |  c c c c }
\toprule[0.5mm]
 Method &K=10& K=50 & K= 100    \\
 \hline
FedAvg & $97.22\pm 0.23$  &  $97.67 \pm 0.26$& $97.56 \pm 0.10$  \\
FedProx & $97.21\pm 0.07$  &  $97.78 \pm 0.09$& $97.66 \pm 0.16$   \\
  FedMomentum&$96.79 \pm 0.42$ & $97.74 \pm 0.15$  & $97.57 \pm 0.18$    \\
 FedYogi&  $97.80\pm 0.18$ & $97.52 \pm 0.58$  & $96.80\pm 0.68$    \\
 SCAFFOLD&  $97.80\pm 0.53$ & $97.81 \pm 0.18$  & $97.60\pm 0.25$    \\
 \toprule[0.5mm]
  FedAdam &$98.57\pm 0.26$& $98.10\pm 0.11$ & $97.59\pm 0.11$  \\
  FedAMSGrad & $98.55\pm0.16$ & $98.15\pm0.18$ & $97.72\pm0.14$  \\
   $\epsilon$-FedAdam &  $\textbf{98.63}\pm \textbf{0.04}$ & $98.1\pm0.11$ & $97.66\pm0.23$   \\ 
  $\epsilon$-FedAMSGrad &  $98.55\pm0.16$ & $98.15\pm0.18$ & $97.72\pm0.14$  \\
  $p$-FedAdam  &  $98.41\pm0.14$ & $98.17\pm0.19$ & $97.72\pm0.06$   \\
  $p$-FedAMSGrad & $98.42\pm0.12$ & $98.05\pm0.14$ & $\textbf{97.76}\pm\textbf{0.12}$   \\
  $s$-FedAdam&  $98.48\pm0.06$ & $98.13\pm0.04$ & $97.53\pm0.21$   \\
  $s$-FedAMSGrad& $98.54\pm0.13$ & $\textbf{98.19}\pm\textbf{0.07}$ & $\textbf{97.76}\pm\textbf{0.13}$ \\
  \toprule[0.5mm]
\end{tabular}
\end{table}

\subsection{MNIST.}
As a sanity check, the MNIST dataset is used in our experiments where data decentralization is created with the sort-and-partition procedure (SP). Each device has data for two digits. Results are present in Table \ref{tb:MNIST}, showing that the FedAdam and FedAMSGrad can improve test accuracy in all settings. As expected, the test accuracy is further improved by the proposed calibrated versions.

\begin{figure}[h]
  \centering
  \includegraphics[trim=80 0 80 0,clip,width=\linewidth]{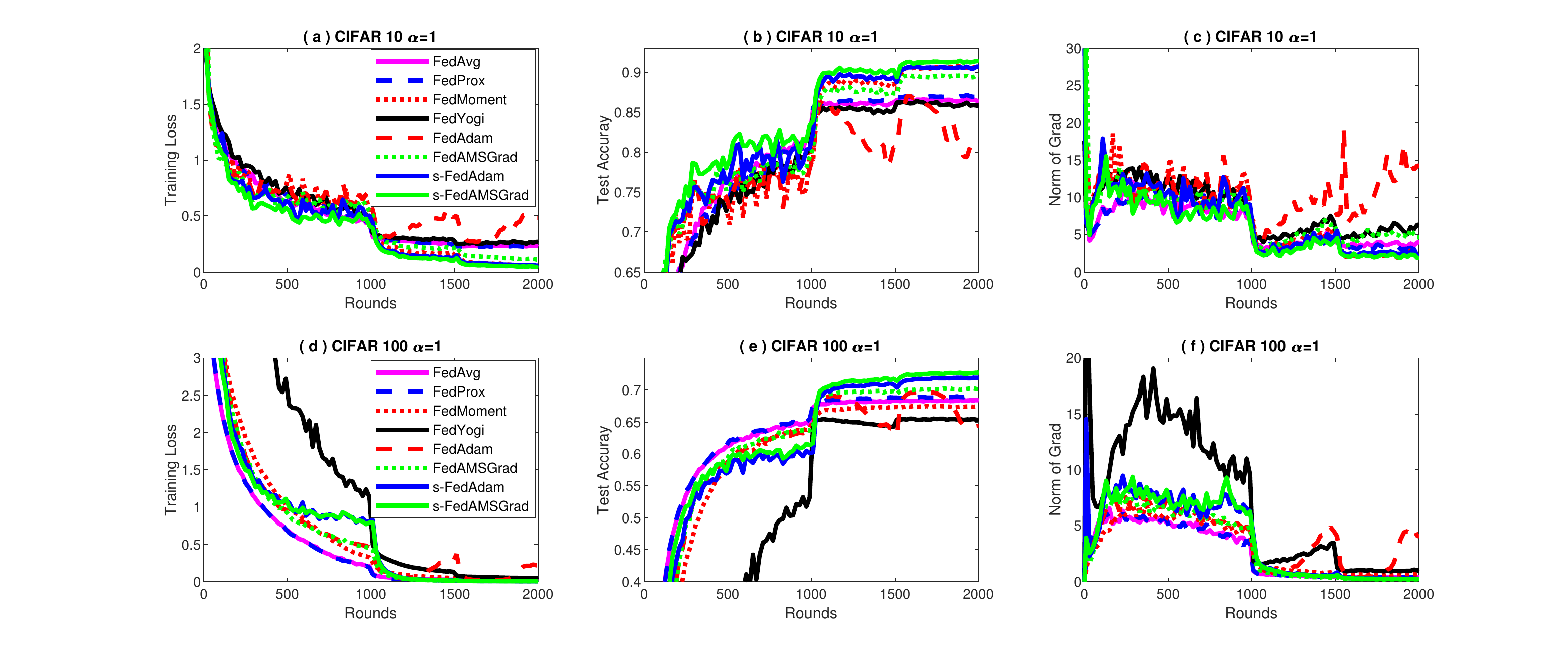}
  \caption{The comparison of different algorithms in terms of training loss (a,d,g), and testing accuracy (b,e,h), and norm of gradient (c,f,i) vs. communication rounds. The number of participated clients S= 10 and the number of local updates K = 10   for each client.  }
  \label{fig:sub3}
\end{figure}

\subsection{CIFAR10.} 
Using the PyTorch framework, we run the ResNets 20 model on CIFAR10 and results are shown in Figure \ref{fig:sub3}, \ref{fig:sub4}. Similar to (\cite{hsu2019measuring}), the federated CIFAR10 data is generated as: the number of clients is set to be 100 and the number of data points located on each client is set to be 500 for training data and 100 for testing data. 
 For each client, we use Dirichlet distribution to generate non-IID data.
Particularly, we observe that more computations are needed for smaller  $\alpha$ values, which means that clients are more dissimilar to each other. It is also safe to conclude that FL algorithms, with large dissimilarity across clients, tend to perform worse; however, calibrated Federated AGMs can always improve the learning performance.

\begin{figure}[t!]
  \centering
  \includegraphics[trim=50 10 40 0,clip,width=0.8\linewidth]{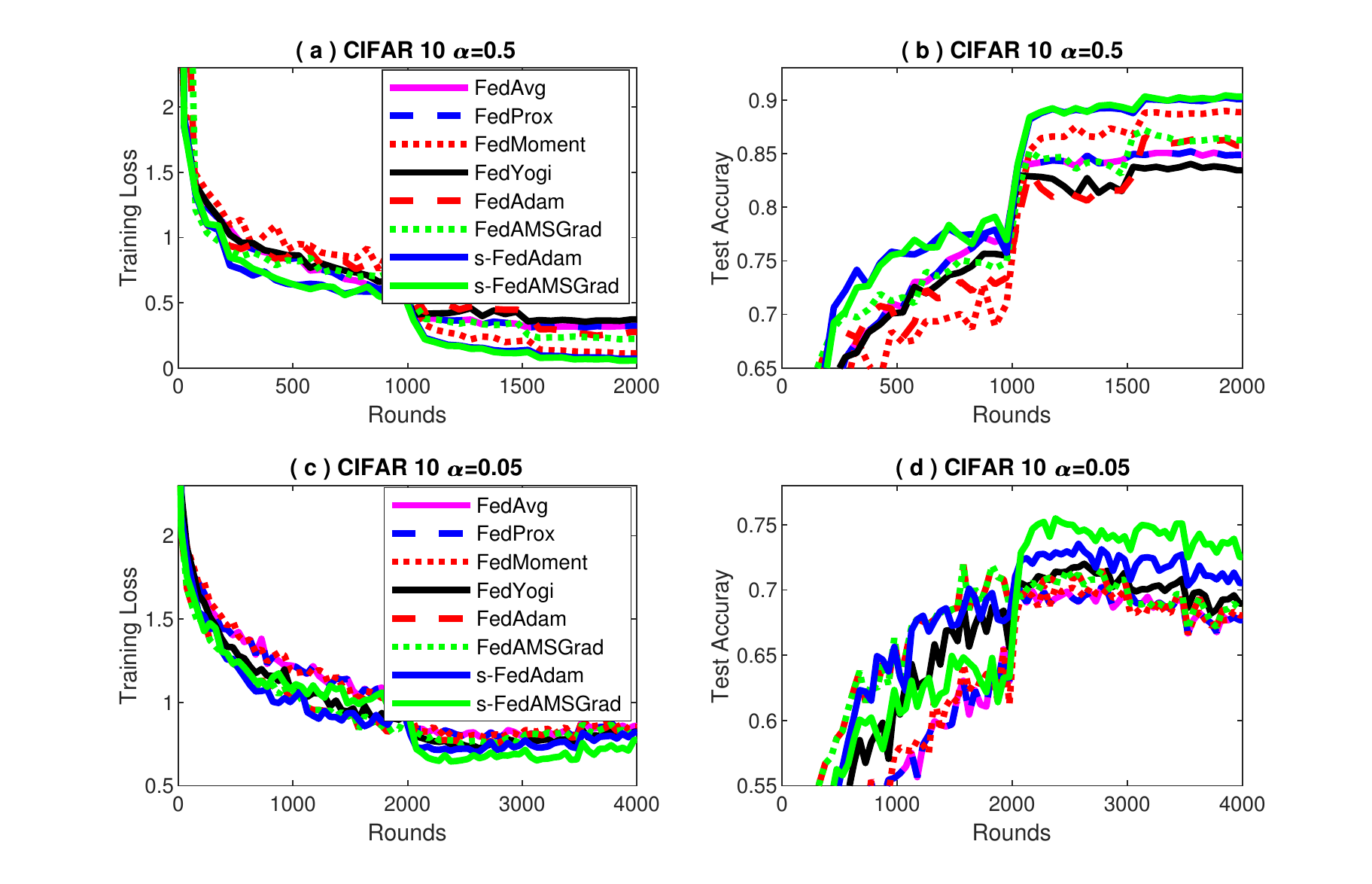}
  \caption{The comparison of algorithms under different non-IID level, $\alpha = \{0.5, 0.05\}$. The setting is same as in Figure \ref{fig:sub3}.}
  \label{fig:sub4}
  \vspace{-0.4cm}
\end{figure}

\begin{figure}[t]
  \centering
  \includegraphics[trim=50 0 40 0,clip,width=0.7\linewidth]{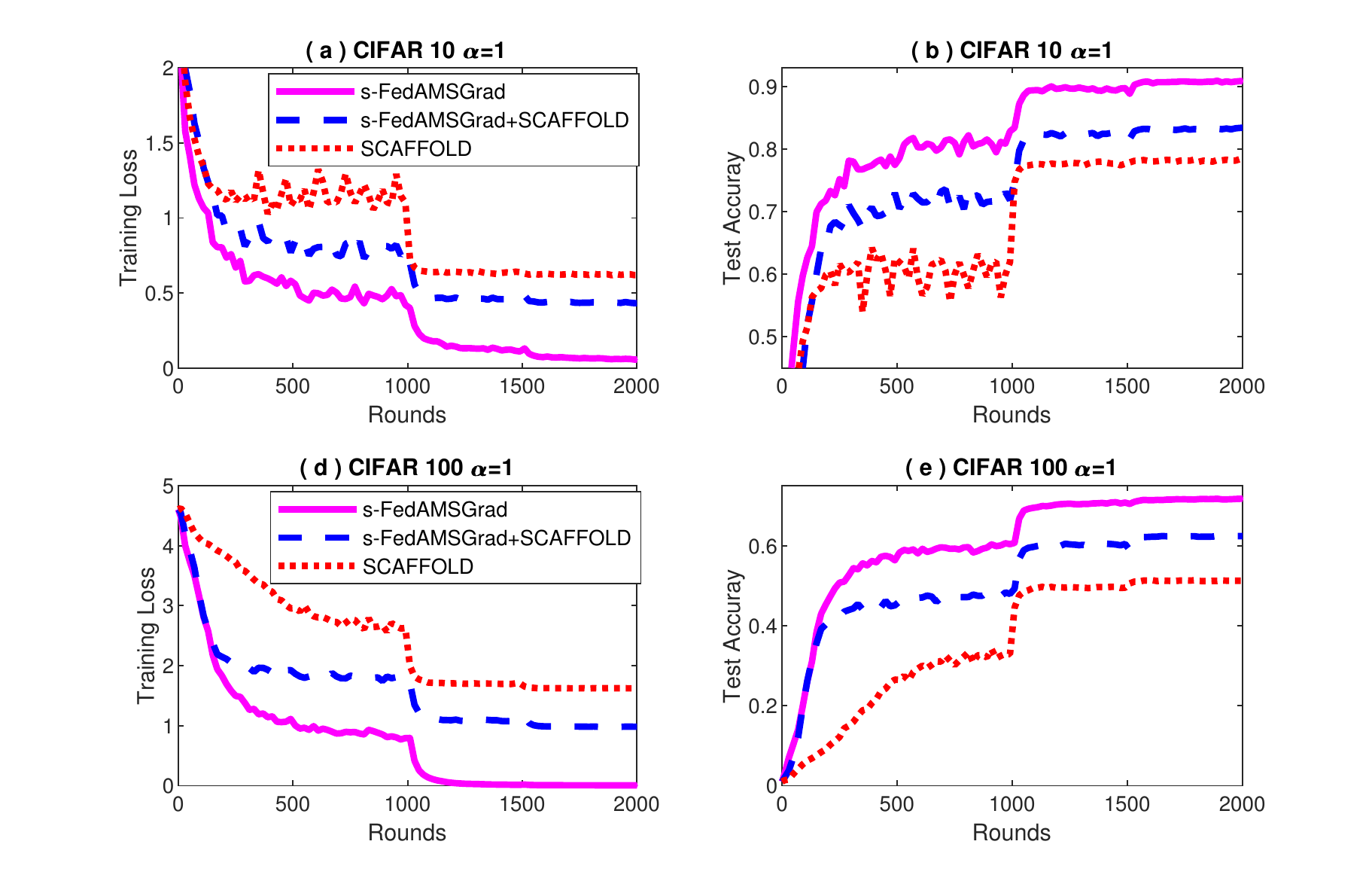}
  \caption{The comparison of SCAFFOLD, s-FedAMSGrad, and s-FedAMSGrad with SCAFFOLD under CIFAR10 and CIFAR100.  }
  \label{fig:sub5}
\end{figure}

\subsection{CIFAR100.}
The popular architecture VGGNet is also tested on CIFAR100 dataset to compare all algorithms. The federated CIFAR100 data is generated as: first split 100 labels into 10 large groups; then, we utilize Dirichlet distribution on top of 10 large groups to generate non-IID data. We set the number of participated clients $S= 10$, the number of local updates $K = 10$  for each client in Figure \ref{fig:sub5}, and more results about $S = 30$ are in appendix. The $s$-FedAMSGrad consistently achieves the highest test accuracy. SCAFFOLD tends to degrade in training large-scale deep learning applications. It may be due to the local client control variate in SCAFFOLD, which lags behind when only a small set of clients participate in FL at each round. However, when the $s$-FedAMSGrad with SVRG updates in inner loops (as done in SCAFFOLD) is used, the issue is alleviated.

In summary, we observe that Federated AGMs can improve test performance over existing federated learning methods, calibration further helps Federated AGMs, and stage-wise training (with exponential decay of the local learning rate $\gamma_t$) is also useful for federated learning of deep neural networks. Compared with calibrated Federated AGMs, the FedAvg suffer more from client drift, and SCAFFOLD tends to more easily trap in local minimals.

\section{Conclusion}
In this paper,  we propose a family of federated versions of adaptive gradient methods where clients run multiple steps of stochastic gradient descent updates before communication to a central server. Different calibration methods previously proposed for adaptive gradient methods are discussed and compared, providing insights into how the adaptive stepsize works in these methods. Our theoretical analysis shows that the convergence of the algorithms may rely on not only the inner loops, participating clients, gradient dissimilarity, but also to the calibration parameters. Empirically, we show that the proposed federated adaptive methods with careful calibration of the adaptive stepsize can converge faster and are less prone to trap into bad local minimizers in FL. 

\newpage
\section*{Appendix for Effective Federated Adaptive Gradient Methods with Non-IID Decentralized Data}
\renewcommand{\thesection}{\Alph{section}.\arabic{section}}
\setcounter{section}{0}

\section{Architecture Used in Our Experiments}

Here we mainly introduce the MNIST architecture with Pytorch used in our empirical study, ResNets and DenseNets are well-known architectures used in many works and we do not include details here.

\begin{center}
 \begin{tabular}{||c | c ||} 
 \hline
 layer & layer setting \\ [0.5ex] 
 \hline\hline
 F.relu(self.conv1(x)) & self.conv1 = nn.Conv2d(1, 6, 5)\\ 
 \hline
 F.max\_pool2d(x, 2, 2) & \\
 \hline
 F.relu(self.conv2(x))  &self.conv2 = nn.Conv2d(6, 16, 5) \\
 \hline
 x.view(-1, 16*4)  & \\
 \hline
 F.relu(self.fc1(x)) & self.fc1 = nn.Linear(16*4*4, 120)\\ 
 \hline
 x= F.relu(self.fc2(x)) &self.fc2 = nn.Linear(120, 84) \\ 
 \hline
  x = self.fc3(x)& self.fc3 = nn.Linear(84, 10)\\
  \hline
  F.log\_softmax(x, dim=1) & \\
 \hline
\end{tabular}
\end{center}

\section{More Empirical Results}

\subsection{Non-IIDness}

As introduced in the main paper, $\alpha$ is concentration parameter for Dirichlet distribution to control the degree of dissimilarity across devices. With $\alpha \rightarrow 0$, we can generate real different distributions among clients, with $\alpha \rightarrow \infty$, all clients have identical distributions to the prior. 

\begin{figure}[H]
\centering
  \centering
  \includegraphics[width=0.9\linewidth]{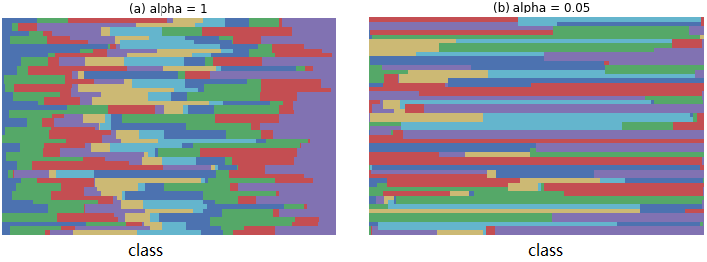}
  \caption{class distribution generated by Dirichlet distribution with different $\alpha$ for CIFAR10 dataset}
  \label{fig:dirichlet}
\end{figure}

\subsection{CIFAR 10}

\begin{figure}[H]
\centering
  \centering
  \includegraphics[trim=80 0 80 0,clip,width=\textwidth]{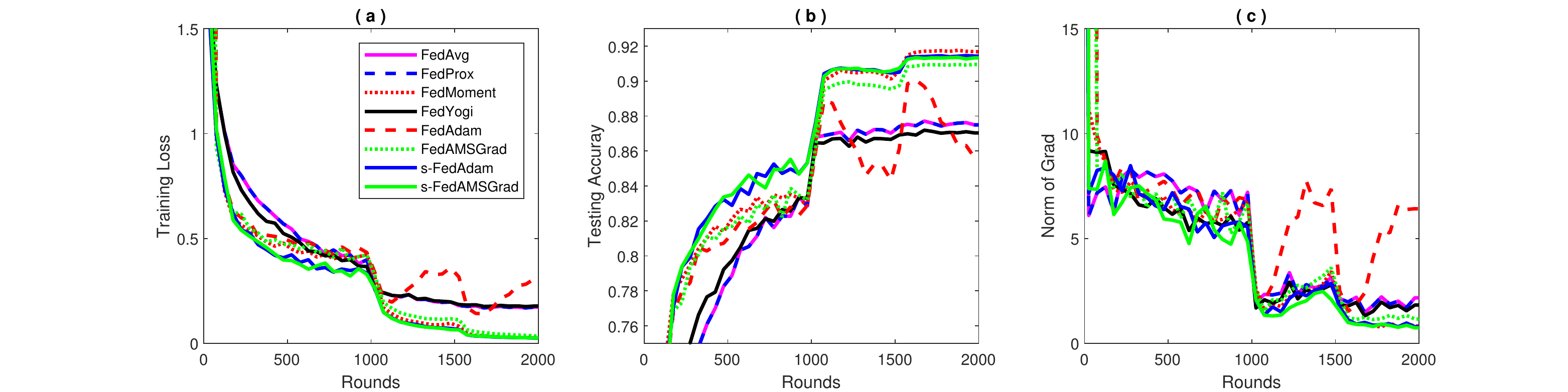}
 \caption{Comparison of all methods for CIFAR10 data with participated clients S= 10,  1 epoch for inner loop  and concentration parameter $\alpha = 100$. }
 \label{fig:2}
\end{figure}

We first observe how the identicalness affects experimental performance.
Figure \ref{fig:2} shows the performance of all the FL methods for CIFAR10 data with participated clients $S=10$, and concentration parameter $\alpha = 100$.
Figure \ref{fig:3} and Figure \ref{fig:4} use concentration parameter $\alpha = 1$ for calibrated FedAdam and calibrated FedAMSGrad, respectively.;
Figure \ref{fig:5} and Figure \ref{fig:6} use concentration parameter $\alpha = 0.05$ for calibrated FedAdam and calibrated FedAMSGrad, respectively.
Notice that with the increase of concentration parameter $\alpha$, the data located on local clients are more identical, and the performance will be better.

Then we provide comparisons of calibrated Federated AGMs with multi-stage LR decay scheme for ResNets with CIFAR10 dataset. 
Figure \ref{fig:3} and Figure \ref{fig:5} are comparisons of  $\epsilon$-FedAdam, $p$-FedAdam and $s$-FedAdam with multi-stage LR decay scheme for ResNets with CIFAR10 dataset; Figure \ref{fig:4} and Figure \ref{fig:6} are comparisons of $\epsilon$-FedAMSGrad, $p$-FedAMSGrad and $s$-FedAMSGrad with multi-stage LR decay scheme for ResNets with CIFAR10 dataset. We can see that no matter FedAdam or FedAMSGrad, calibration techniques posed on second-order momentum matter lot. With appropriate calibrating parameters, Federated AGMs can always improve the training and testing performance a lot.

\begin{figure}[H]
\centering
  \centering
  \includegraphics[trim=80 0 80 0,clip,width=\linewidth]{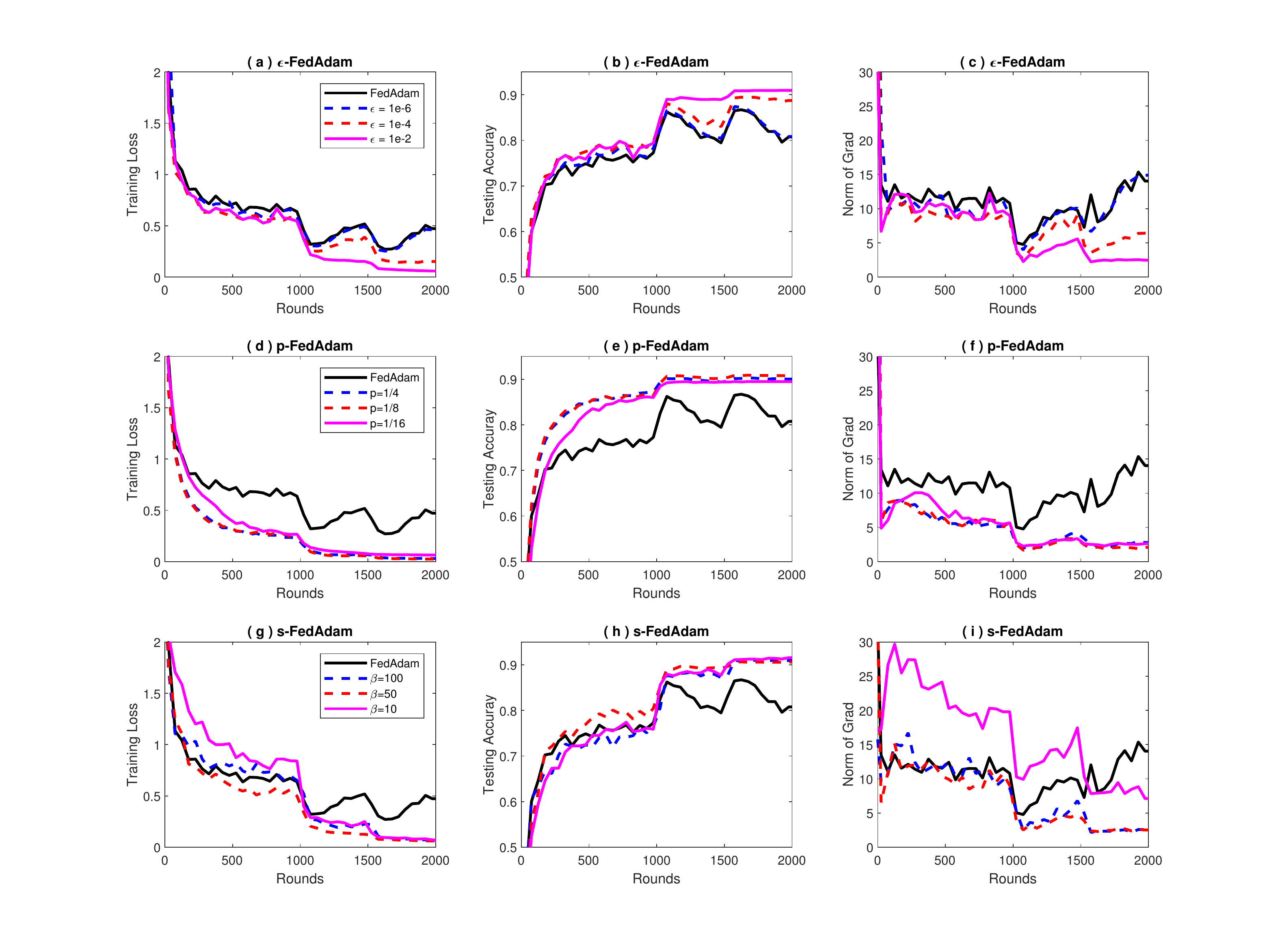}
  \caption{The training loss (a, d, g), testing accurcay (b, e, h) and norm of gradient (c, f, i) for $\epsilon$-FedAdam, $p$-FedAdam and $s$-FedAdam with multi-stage LR decay scheme for ResNets with CIFAR10 dataset, participated clients S= 10,  1 epoch for inner loop   and concentration parameter $\alpha = 1$.}
  \label{fig:3}
\end{figure}

\begin{figure}[H]
\centering
  \centering
  \includegraphics[trim=80 0 80 0,clip,width=\linewidth]{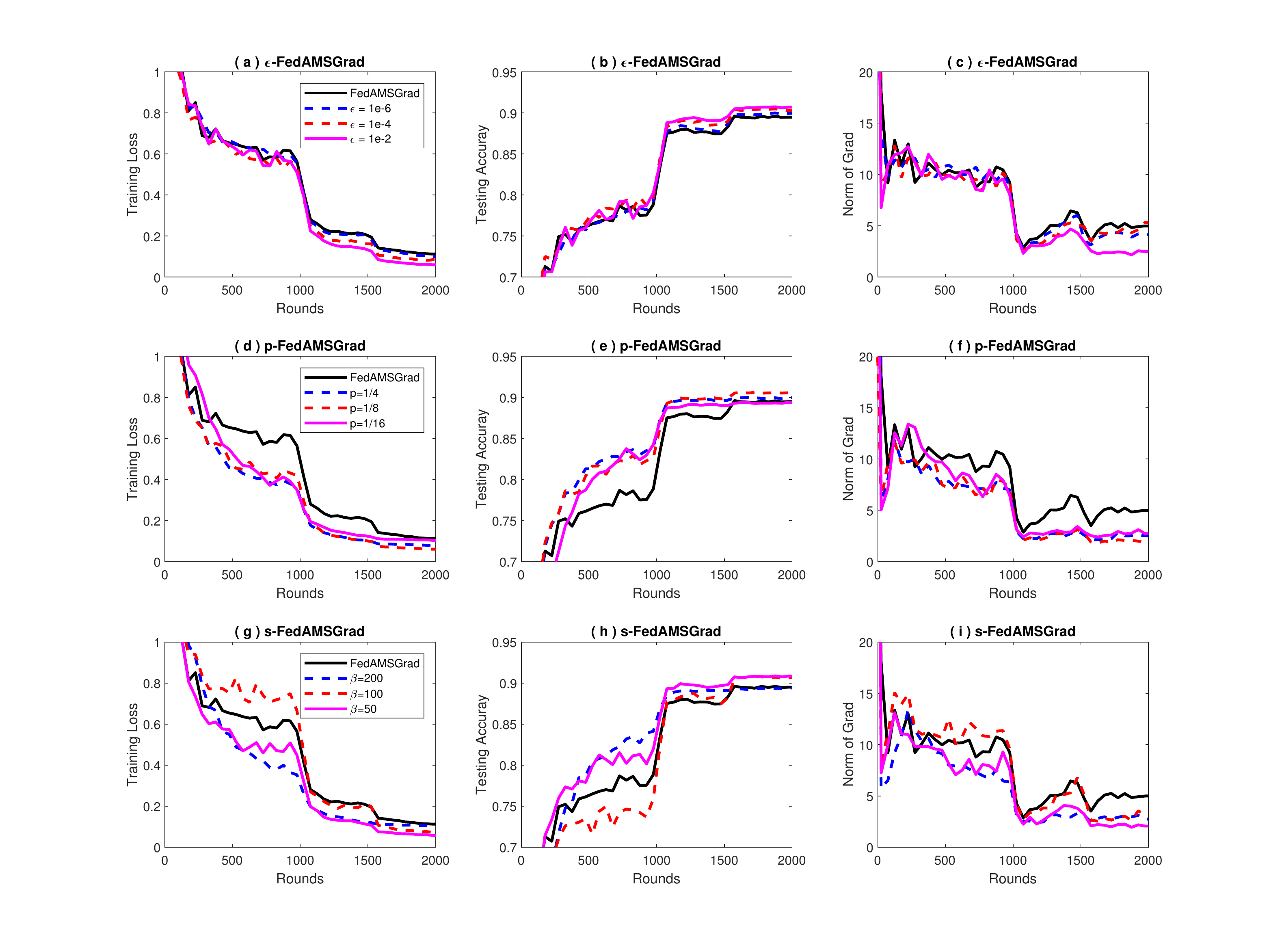}
  \caption{The training loss (a, d, g), testing accurcay (b, e, h) and norm of gradient (c, f, i) for $\epsilon$-FedAMSGrad, $p$-FedAMSGrad and $s$-FedAMSGrad with multi-stage LR decay scheme for ResNets with CIFAR10 dataset, participated clients S= 10,  1 epoch for inner loop   and concentration parameter $\alpha = 1$.}
  \label{fig:4}
\end{figure}

\begin{figure}[H]
\centering
  \centering
  \includegraphics[trim=80 0 80 0,clip,width=\textwidth]{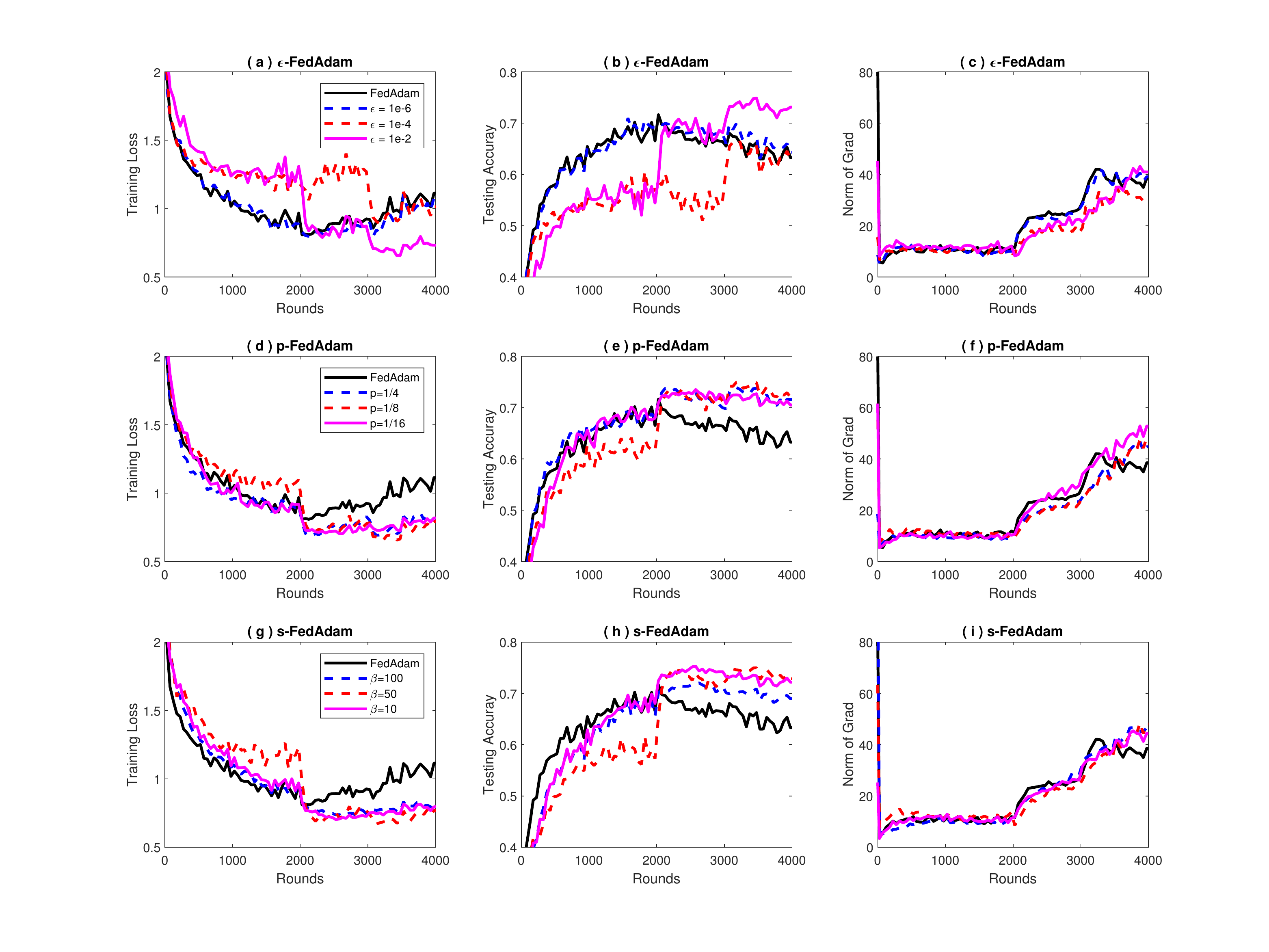}
  \caption{The training loss (a, d, g), testing accurcay (b, e, h) and norm of gradient (c, f, i) for $\epsilon$-FedAdam, $p$-FedAdam and $s$-FedAdam with multi-stage LR decay scheme for ResNets with CIFAR10 dataset, participated clients S= 10,  1 epoch for inner loop  and concentration parameter $\alpha = 0.05$.}
  \label{fig:5}
\end{figure}

\begin{figure}[H]
\centering
  \centering
  \includegraphics[trim=80 0 80 0,clip,width=0.95\textwidth]{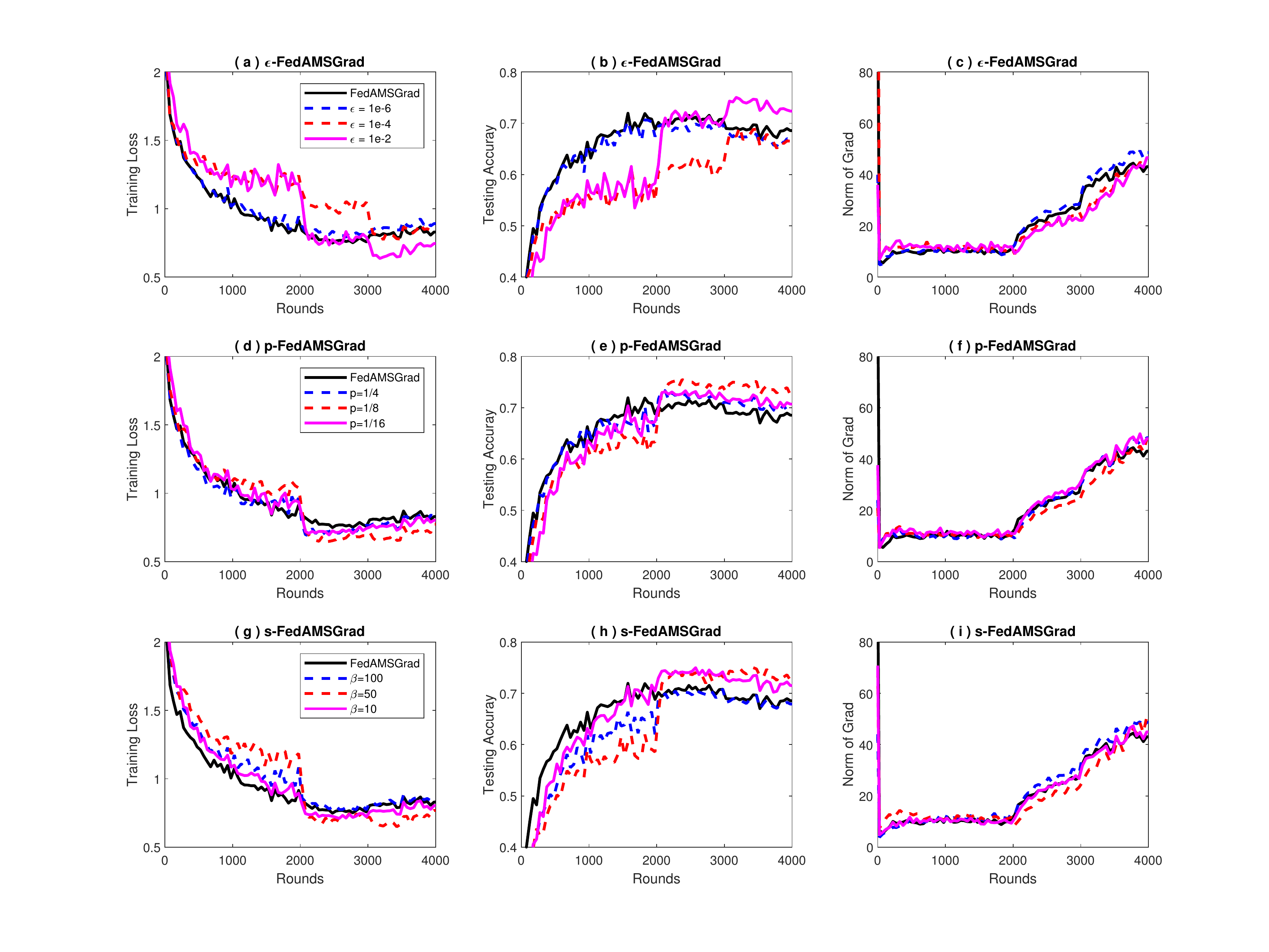}
  \vspace{-0.5cm}
  \caption{The training loss (a, d, g), testing accurcay (b, e, h) and norm of gradient (c, f, i) for $\epsilon$-FedAMSGrad, $p$-FedAMSGrad and $s$-FedAMSGrad with multi-stage LR decay scheme for ResNets with CIFAR10 dataset, participated clients S= 10,  1 epoch for inner loop   and concentration parameter $\alpha = 0.05$.}
  \label{fig:6}
\end{figure}

\subsection{CIFAR 100}
\begin{figure}[H]
\centering
  \centering
  \includegraphics[trim=80 0 80 0,clip,width=0.95\linewidth]{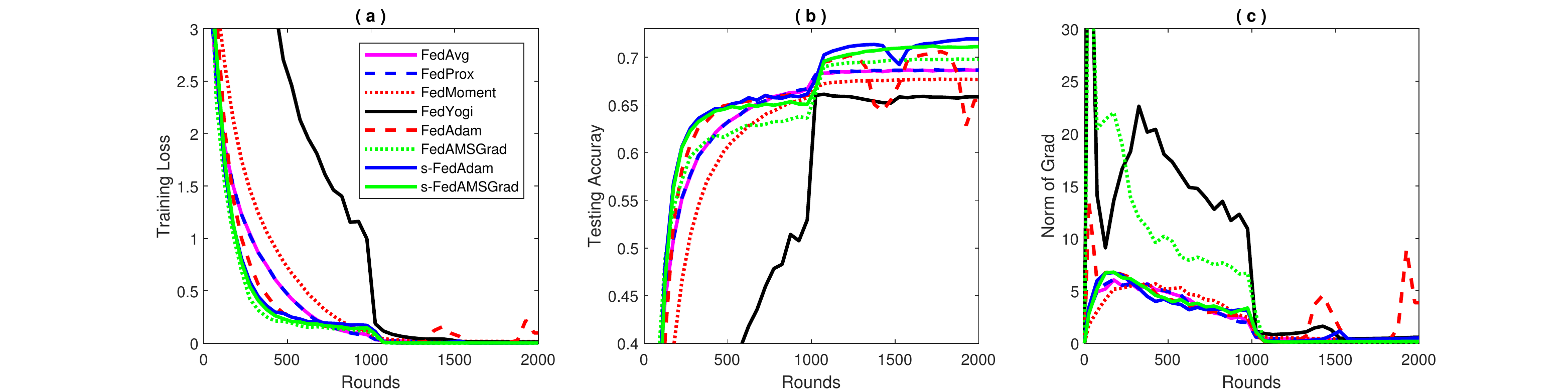}
  \vspace{-0.3cm}
  \caption{Comparison of all methods for CIFAR100 data with participated clients S= 30,  1 epoch for inner loop   and concentration parameter $\alpha = 1$}
  \label{fig:7}
\end{figure}

We also provide more comparisons of calibrated Federated AGMs with multi-stage LR decay scheme for VGGNet with CIFAR100 dataset (see Figure \ref{fig:7}).  And Figure \ref{fig:8} is a comparison among  s-FedAMSGrad, s-FedAMSGrad+SFAFFOLD and SCAFFOLD methods under different non-IIDness ($\alpha \in \{1, 0.5, 0.05\}$) in CIFAR10 and CIFAR100.

\begin{figure}[H]
\centering
  \centering
  \includegraphics[trim=80 0 80 0,clip,width=\linewidth]{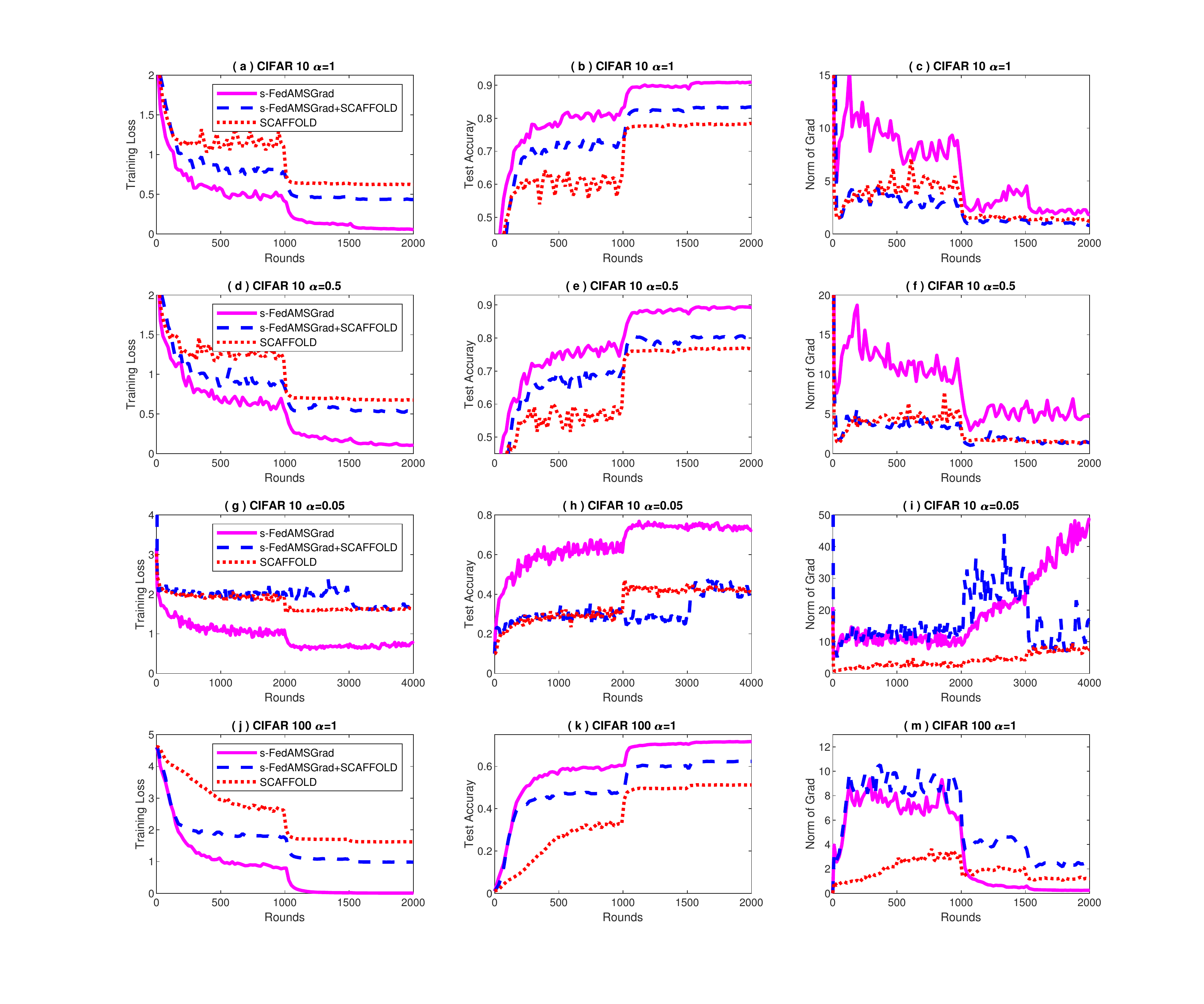}
  \caption{Comparison among s-FedAMSGrad, s-FedAMSGrad+SFAFFOLD and SCAFFOLD.}
  \label{fig:8}
\end{figure}

\newpage

\section{Theoretical Analysis Details}
We provide theoretical analysis of our proposed Federated AGMs and especially with different calibration techniques. Different with the previous analysis of FedAdam in \cite{reddi2020adaptive}, we include the momentum analysis into the whole theoretical part, and the effect of calibration parameters play an essential role in the overall convergence performance.


\subsection{Prepared Lemmas}
Let's define first with the stochastic gradient under partial device participation,
\begin{equation*}
    g_{t,k}=\frac{1}{S}\sum_{i\in S_t}  g_{t,k}^{(i)};
\end{equation*}
and the virtual direction will be 
\begin{align*}
    \Delta_t =  x_{t} - \tilde{x}_{t+1}
     =   \frac{1}{S}\sum_{i\in S_{t}} (x_{t} - x_{t,K}^{(i)})
     =  \frac{1}{S}\sum_{i\in S_{t}} (x_{t,0}^{(i)} - x_{t,K}^{(i)})
     = \frac{1}{S}\sum_{i\in S_{t}}\sum_{k=0}^{K-1}  \gamma_t g_{t,k}^{(i)} = \sum_{k=0}^{K-1}  \gamma_t g_{t,k};
\end{align*}
With full device participation, we can accordingly have,
\begin{equation*}
    \tilde{g}_{t,k}=\sum_{i=1}^{N} p_i g_{t,k}^{(i)};    
    ~~~~\Delta_t  = \sum_{k=0}^{K-1}  \gamma_t \tilde{g}_{t,k}.
\end{equation*}
In the following analysis, we always consider the partial device participated case.

We have a series of prepared lemmas to help with optimization convergence rate analysis.
 
 \begin{lemma}\label{g_t_full}
Assume the above assumptions hold, we can easily derive the properties of unbiased stochastic gradient with full device participation,
\begin{equation*}
    E[\tilde{g}_{t,k}]=\sum_{i=1}^{N} p_i \nabla f_i(x_{t,k}^{(i)});
\end{equation*}
\begin{equation*}
    E[\| \tilde{g}_{t,k} -\sum_{i=1}^{N} p_i\nabla f_i(x_{t,k}^{(i)})\|^2]\leq  \sum_{i=1}^N p_i \sigma_i^2;
\end{equation*}
\begin{equation*}
	E[\|\tilde{g}_{t,k}\|^2]\leq 2\sum_{i=1}^N p_i (\sigma_i^2 + G_i^2).
\end{equation*}
\end{lemma}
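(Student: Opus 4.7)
The plan is to verify the three claims directly from the definition $\tilde{g}_{t,k} = \sum_{i=1}^{N} p_i g_{t,k}^{(i)}$, using only Assumption 1 (parts II and III) together with linearity of expectation, convexity of $\|\cdot\|^2$, and the elementary inequality $\|a+b\|^2 \leq 2\|a\|^2 + 2\|b\|^2$. No sampling randomness enters, since this is the full-participation case in which every client contributes deterministically with weight $p_i$.

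For the first identity, I would take expectation through the finite sum and substitute $E[g_{t,k}^{(i)}] = \nabla f_i(x_{t,k}^{(i)})$ from Assumption 1 (III), yielding $E[\tilde{g}_{t,k}] = \sum_i p_i \nabla f_i(x_{t,k}^{(i)})$ immediately. For the variance bound, I would write the centered quantity as $\sum_i p_i\bigl(g_{t,k}^{(i)} - \nabla f_i(x_{t,k}^{(i)})\bigr)$ and apply Jensen's inequality on the probability measure $\{p_i\}$ to move the weights outside the squared norm:
\[
E\Bigl\|\sum_{i=1}^{N} p_i\bigl(g_{t,k}^{(i)} - \nabla f_i(x_{t,k}^{(i)})\bigr)\Bigr\|^2 \leq \sum_{i=1}^{N} p_i\, E\bigl\|g_{t,k}^{(i)} - \nabla f_i(x_{t,k}^{(i)})\bigr\|^2 \leq \sum_{i=1}^{N} p_i \sigma_i^2,
\]
where the final step uses Assumption 1 (III). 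Note that I deliberately avoid appealing to cross-client independence, which would produce the strictly sharper $\sum_i p_i^2 \sigma_i^2$ and not match the lemma's stated right-hand side; Jensen's inequality is the correct tool here.

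For the second-moment bound, I would decompose
\[
\tilde{g}_{t,k} = \Bigl(\tilde{g}_{t,k} - \textstyle\sum_i p_i \nabla f_i(x_{t,k}^{(i)})\Bigr) + \sum_i p_i \nabla f_i(x_{t,k}^{(i)}),
\]
then apply $\|a+b\|^2 \leq 2\|a\|^2 + 2\|b\|^2$ and take expectation. The first resulting term is bounded by $2\sum_i p_i \sigma_i^2$ by the variance estimate just established. The second term is deterministic, and another application of Jensen's inequality combined with the gradient bound from Assumption 1 (II) gives $\bigl\|\sum_i p_i \nabla f_i(x_{t,k}^{(i)})\bigr\|^2 \leq \sum_i p_i \|\nabla f_i(x_{t,k}^{(i)})\|^2 \leq \sum_i p_i G_i^2$. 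Adding the two pieces yields $2\sum_i p_i(\sigma_i^2 + G_i^2)$ as required.

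There is no genuinely hard step: each claim reduces to a single standard inequality, and the main care is simply to use Jensen (rather than independence) so that the constants match the stated bounds, and to keep the per-client quantities $\sigma_i^2$ and $G_i^2$ weighted by $p_i$ rather than $p_i^2$. The three properties will subsequently feed into Lemma \ref{g_t} (the partial-participation version), where the additional sampling variance produces the extra $1/S$ factor.
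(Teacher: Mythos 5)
Your proposal is correct and follows essentially the same route as the paper's proof: linearity of expectation for the first identity, Jensen's inequality over the weights $\{p_i\}$ for the variance bound, and the decomposition with $\|a+b\|^2 \leq 2\|a\|^2 + 2\|b\|^2$ plus Jensen and the gradient bound for the second moment. Your side remark about using Jensen rather than cross-client independence correctly explains why the stated bound carries $p_i$ rather than $p_i^2$, which is exactly what the paper's argument yields.
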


\begin{proof}
From the problem formulation,
    \begin{equation*}
    E[\tilde{g}_{t,k}]= E[\sum_{i=1}^N p_i g_{t,k}^{(i)}]
    =\sum_{i=1}^N p_i E[g_{t,k}^{(i)}]
    =\sum_{i=1}^N p_i \nabla f_i(x_{t,k}^{(i)}).
    \end{equation*}
    \begin{align*}
    E[\| \tilde{g}_{t,k} -\sum_{i=1}^N p_i \nabla f_i(x_{t,k}^{(i)})\|^2]&= E[\|\sum_{i=1}^N p_i g_{t,k}^{(i)}-\sum_{i=1}^N p_i\nabla f_i(x_{t,k}^{(i)})\|^2]\\
    &\leq\sum_{i=1}^N p_i E[\| g_{t,k}^{(i)}-\nabla f_i(x_{t,k}^{(i)})\|^2]
    \leq \sum_{i=1}^N p_i \sigma_i^2.
    \end{align*}
    The inequality holds due to Jensen's inequality.
    \begin{align*}
	E[\|\tilde{g}_{t,k}\|^2]& =E[\|\tilde{g}_{t,k} - \sum_{i=1}^N p_i \nabla f_i(x_{t,k}^{(i)}) +\sum_{i=1}^N p_i \nabla f_i(x_{t,k}^{(i)})\|^2]\\
	&\leq 2 E[\| \tilde{g}_{t,k} -\sum_{i=1}^N p_i \nabla f_i(x_{t,k}^{(i)})\|^2] + 2E[\|\sum_{i=1}^N p_i \nabla f_i(x_{t,k}^{(i)})\|^2] \\
	&\leq  2\sum_{i=1}^N p_i  \sigma_i^2 + 2E[\|\sum_{i=1}^N p_i\nabla f_i(x_{t,k}^{(i)})\|^2]\\
	&\leq 2\sum_{i=1}^N p_i  \sigma_i^2 + 2\sum_{i=1}^N p_i G_i^2.
	\end{align*}
\end{proof}   
 
\begin{lemma}\label{g_t_2}
Assume the above assumptions hold, we can easily derive the properties of unbiased stochastic gradient with partial device participation,
\begin{equation*}
    E[g_{t,k}]=\sum_{i=1}^{N} p_i \nabla f_i (x_{t,k}^{(i)});
\end{equation*}
\begin{equation*}
	E[\|g_{t,k}\|^2]\leq \underbrace{   \frac{1}{S} (12 \sum_{i=1}^N p_i \sigma_i^2 + 24    \sum_{i=1}^{N} p_i G_i^2)}_{partial~ participation}
	+ \underbrace{4\sum_{i=1}^N p_i  (\sigma_i^2 + G_i^2).}_{local~update}
\end{equation*}

With all device active, the partial participation term can be removed.
\end{lemma}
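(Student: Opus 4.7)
The plan is to handle the expectation claim and the second-moment claim separately, in both cases using the tower property of conditional expectation to split the randomness into the client-sampling step (done with replacement according to probabilities $p_i$) and the per-client SGD noise. Throughout, I would write $g_{t,k} = \frac{1}{S}\sum_{j=1}^S g_{t,k}^{(Z_j)}$, where the $Z_j$ are i.i.d.\ draws from $[N]$ with $P(Z_j=i)=p_i$, and condition first on the $Z_j$'s so that the per-client gradient randomness is independent across the summands.

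For the mean, I would condition on $S_t$ and use Assumption~1.III ($E[g_{t,k}^{(i)}] = \nabla f_i(x_{t,k}^{(i)})$ for each fixed $i$) to replace each inner stochastic gradient with its true gradient. Then for each draw $Z_j$, $E_S[\nabla f_{Z_j}(x_{t,k}^{(Z_j)})] = \sum_{i} p_i \nabla f_i(x_{t,k}^{(i)})$, and averaging over the $S$ draws gives the claimed formula for $E[g_{t,k}]$. This step is straightforward.

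For the second-moment bound, I would decompose each summand as $g_{t,k}^{(Z_j)} = \xi_j + \mu_j$, where $\xi_j = g_{t,k}^{(Z_j)} - \nabla f_{Z_j}(x_{t,k}^{(Z_j)})$ is the zero-mean SGD noise and $\mu_j = \nabla f_{Z_j}(x_{t,k}^{(Z_j)})$ is the ``true'' sampled gradient. Then $\|g_{t,k}\|^2 \le 2\|\tfrac{1}{S}\sum_j \xi_j\|^2 + 2\|\tfrac{1}{S}\sum_j \mu_j\|^2$. For the noise term, zero-mean independence gives $E\|\tfrac{1}{S}\sum_j \xi_j\|^2 = \tfrac{1}{S} E\|\xi_1\|^2 \le \tfrac{1}{S}\sum_i p_i \sigma_i^2$, which produces the $\sigma_i^2/S$ contribution in the partial-participation term. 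For the gradient term, I would further decompose $\mu_j$ around its sampling-mean $\bar\mu = \sum_i p_i \nabla f_i(x_{t,k}^{(i)})$ and use the usual variance-of-sample-mean identity: the variance part picks up another $\tfrac{1}{S}\sum_i p_i \|\nabla f_i(x_{t,k}^{(i)})\|^2 \le \tfrac{1}{S}\sum_i p_i G_i^2$ (giving the $G_i^2/S$ contribution in the partial-participation term), while the mean part $\|\bar\mu\|^2$ is controlled by Lemma~\ref{g_t_full}, producing the $4\sum_i p_i(\sigma_i^2+G_i^2)$ local-update term.

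The main obstacle is not the structure but the bookkeeping: the constants $12$, $24$, $4$ in the lemma come from the particular sequence of triangle inequalities $\|a+b\|^2 \le 2\|a\|^2+2\|b\|^2$ and Jensen applications used to collapse everything down to $\sigma_i^2$ and $G_i^2$. Different orderings of the splits give different (but equivalent up to constants) bounds, and these particular coefficients are simply the loose worst-case numbers that fall out of one consistent chain; the essential qualitative claim, namely that the partial-participation contribution decays as $1/S$ while the local-SGD contribution is an $S$-independent $O(1)$ term, is what must be tracked carefully. With full participation ($S=N$, deterministic selection), the sampling variance vanishes and the $1/S$ block disappears, recovering the second-moment bound of Lemma~\ref{g_t_full} up to constants.
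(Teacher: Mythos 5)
Your proposal is correct and follows the same basic template as the paper's proof (tower property over client sampling and data sampling, then repeated $\|a+b\|^2\le 2\|a\|^2+2\|b\|^2$ splits), but the order of the decomposition is genuinely different. The paper first adds and subtracts the full-participation stochastic average $\sum_i p_i g_{t,k}^{(i)}$, so its $S$-independent ``local update'' term is $2E[\|\sum_i p_i g_{t,k}^{(i)}\|^2]\le 4\sum_i p_i(\sigma_i^2+G_i^2)$ via Lemma~\ref{g_t_full}, and only inside the resulting sampling-variance term does it peel off the SGD noise (a three-way split giving the factors $6$, hence $12$ and $24$). You instead strip the SGD noise $\xi_j$ off first and then apply the variance-of-sample-mean identity to the true gradients; this is cleaner bookkeeping and in fact slightly tighter, because your $S$-independent term is only $\|\sum_i p_i\nabla f_i(x_{t,k}^{(i)})\|^2\le \sum_i p_i G_i^2$ --- the $\sigma_i^2$ contribution lives entirely in the $1/S$ block --- so attributing the $4\sum_i p_i(\sigma_i^2+G_i^2)$ term to Lemma~\ref{g_t_full} is a small mislabeling (that lemma's bound on $\|\sum_i p_i \nabla f_i\|^2$ needs no $\sigma_i^2$); your chain still yields a valid upper bound of the stated form since the constants only shrink. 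One shared caveat: with sampling \emph{with replacement}, a client drawn twice contributes the same realization of its stochastic gradient twice, so the cross terms $E[\langle\xi_j,\xi_{j'}\rangle]$ do not strictly vanish and the clean $\frac{1}{S}E\|\xi_1\|^2$ identity is only approximate; the paper's step from $E[\|\frac{1}{S}\sum_{i\in S_t}(g_{t,k}^{(i)}-\sum_i p_i g_{t,k}^{(i)})\|^2]$ to $\frac{1}{S^2}\sum_{i\in S_t}E[\|\cdot\|^2]$ makes the same implicit uncorrelatedness assumption, so this is not a gap relative to the paper, but it is worth a remark if you write the argument out in full.
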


\begin{proof}
From the problem formulation, $ g_{t,k}=  \frac{1}{S}\sum_{i\in S_{t}}  g_{t,k}^{(i)}$,
    \begin{align*}
    E[g_{t,k}]&= E[\frac{1}{S}\sum_{i\in S_{t}} g_{t,k}^{(i)}]
    =\frac{1}{S} E[\sum_{i\in S_{t}}   g_{t,k}^{(i)}] 
    =\frac{1}{S} E^{z}[\sum_{i\in S_{t}}   E^{(i)} [g_{t,k}^{(i)}] ]\\
    &=\frac{1}{S} \sum_{i\in S_{t}}   \sum_{i=1}^{N} p_i \nabla f_i (x_{t,k}^{(i)})
    = \sum_{i=1}^{N} p_i \nabla f_i (x_{t,k}^{(i)}).
    \end{align*}
Suppose $E^{z}$ is expectation over data sampling and $E^{(i)}$ is expectation over client sampling, then we have the following:    
    \begin{align*}
	&E[\|g_{t,k}\|^2] = E^{z}E^{(i)}[\|g_{t,k}\|^2] 
	=E^{z}E^{(i)}[\|\frac{1}{S} \sum_{i\in S_t} g_{t,k}^{(i)}\|^2]\\
	&=E^{z}E^{(i)}[\|\frac{1}{S} \sum_{i\in S_t} g_{t,k}^{(i)}-\sum_{i=1}^{N} p_i g_{t,k}^{(i)} +\sum_{i=1}^{N} p_i g_{t,k}^{(i)}\|^2]\\
	&\leq 2 E^{z}E^{(i)}[\|\frac{1}{S} \sum_{i\in S_t} g_{t,k}^{(i)}-\sum_{i=1}^{N} p_i g_{t,k}^{(i)}\|^2] + 2E^{z} [\| \sum_{i=1}^{N} p_i g_{t,k}^{(i)}\|^2] \\
	&\leq  2 E^{z} \frac{1}{S^2} \sum_{i\in S_t}E^{(i)}[\| g_{t,k}^{(i)}-\sum_{i=1}^{N} p_i g_{t,k}^{(i)}\|^2] + 2E^{z} [\| \sum_{i=1}^{N} p_i g_{t,k}^{(i)}\|^2]\\
	&\leq  2 E^{z} \frac{1}{S^2} \sum_{i\in S_t}E^{(i)}[\| g_{t,k}^{(i)}-\nabla f_i(x_{t,k}^{(i)}) + \nabla f_i(x_{t,k}^{(i)}) -\sum_{i=1}^N p_i \nabla f_i (x_{t,k}^{(i)}) +\sum_{i=1}^N p_i \nabla f_i (x_{t,k}^{(i)}) -\sum_{i=1}^{N} p_i g_{t,k}^{(i)}\|^2] \\
	&+ 2E^{z} [\| \sum_{i=1}^{N} p_i g_{t,k}^{(i)}\|^2]\\
	&\leq 6 E^{z} \frac{1}{S^2} \sum_{i\in S_t}E^{(i)}[\| g_{t,k}^{(i)}-\nabla f_i(x_{t,k}^{(i)})\|^2] 
	+ 6 E^{z}  \frac{1}{S^2} \sum_{i\in S_t}E^{(i)}[\| \nabla f_i(x_{t,k}^{(i)}) -\sum_{i=1}^{N} p_i \nabla f_i(x_{t,k}^{(i)})\|^2]\\
	&+ 6 E^{z}  \frac{1}{S^2} \sum_{i\in S_t}E^{(i)}[\|  \sum_{i=1}^{N} p_i \nabla f_i(x_{t,k}^{(i)}) - \sum_{i=1}^{N} p_i g_{t,k}^{(i)}\|^2]+ 2E^{z} [\| \sum_{i=1}^{N} p_i g_{t,k}^{(i)}\|^2]\\
	&\leq  6   \frac{1}{S} \sum_{i=1}^{N} p_i \sigma_i^2
	+ 6  E^z  \frac{1}{S^2} \sum_{i\in S_t} E^{(i)}(2\|\nabla f_i(x_{t,k}^{(i)})\|^2 + 2\|\sum_{i=1}^{N} p_i \nabla f_i(x_{t,k}^{(i)})\|^2)
	+6    \frac{1}{S}  \sum_{i=1}^{N} p_i \sigma_i^2 \\
	&+ 2E^{z} [\| \sum_{i=1}^{N} p_i g_{t,k}^{(i)}\|^2]\\
	&\leq  6  \frac{1}{S} \sum_{i=1}^N p_i \sigma_i^2 + 24   \frac{1}{S}   \sum_{i=1}^{N} p_i G_i^2
	+6    \frac{1}{S} \sum_{i=1}^{N} p_i \sigma_i^2 
	+ 2  (2\sum_{i=1}^N p_i  \sigma_i^2 + 2\sum_{i=1}^N p_i G_i^2)\\
	&=\underbrace{   \frac{1}{S} (12 \sum_{i=1}^N p_i \sigma_i^2 + 24    \sum_{i=1}^{N} p_i G_i^2)}_{partial~ participation}
	+ \underbrace{4\sum_{i=1}^N p_i  (\sigma_i^2 + G_i^2).}_{full~participation}
	\end{align*}
\end{proof}   

Now we are going to measure the virtual direction and the related slow momentum and second-order momentum. Notice that these vectors are calculated beyond each working nodes self iteration, then we only use subscript $t$ to denote the current iterate.

\begin{lemma}\label{delta_t}
	For virtual direction, we have
	\begin{equation}
	E[\|\Delta_{t}\|^2]\leq \mathcal{V},
	\end{equation}
	where $\mathcal{V} := \frac{K^2 \gamma_t^2}{S} (12 \sum_{i=1}^N p_i \sigma_i^2 + 24 \sum_{i=1}^{N} p_i G_i^2) 
	+ 4K^2 \gamma_t^2 \sum_{i=1}^N p_i  (\sigma_i^2 + G_i^2)$.
\end{lemma}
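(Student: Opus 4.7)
The plan is to reduce the bound on $E[\|\Delta_t\|^2]$ to the per-iterate bound on $E[\|g_{t,k}\|^2]$ already established in Lemma \ref{g_t_2}. Since the virtual direction is expressible as the telescoped sum $\Delta_t = \sum_{k=0}^{K-1} \gamma_t g_{t,k}$ (as shown in the derivation right after Algorithm \ref{alg:Fedadam}), I only need to push the squared norm inside the sum and apply the existing bound term-by-term.

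First, I would pull the stepsize $\gamma_t$ out of the sum to obtain $E[\|\Delta_t\|^2] = \gamma_t^2\, E\bigl[\|\sum_{k=0}^{K-1} g_{t,k}\|^2\bigr]$. Next, I would apply the standard Jensen/Cauchy--Schwarz inequality $\|\sum_{k=0}^{K-1} a_k\|^2 \leq K \sum_{k=0}^{K-1} \|a_k\|^2$ to the $K$ vectors $g_{t,k}$, giving
\begin{equation*}
E[\|\Delta_t\|^2] \leq K\gamma_t^2 \sum_{k=0}^{K-1} E[\|g_{t,k}\|^2].
\end{equation*}
Then I would invoke Lemma \ref{g_t_2} to bound each $E[\|g_{t,k}\|^2]$ uniformly in $k$ by $\frac{1}{S}(12\sum_{i=1}^N p_i \sigma_i^2 + 24 \sum_{i=1}^N p_i G_i^2) + 4\sum_{i=1}^N p_i(\sigma_i^2 + G_i^2)$. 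Since this upper bound does not depend on $k$, summing over the $K$ inner iterations contributes an extra factor of $K$, producing exactly the quantity $\mathcal{V}$ as defined in the statement.

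There is no real obstacle here: the lemma is essentially a one-line consequence of Cauchy--Schwarz combined with Lemma \ref{g_t_2}. The only subtlety worth flagging is that the Cauchy--Schwarz step is loose compared with the ``sum of variances equals variance of sum'' identity that one would get if the $g_{t,k}$ across inner steps were independent and mean-zero; however, because the iterates $x_{t,k}^{(i)}$ drift across inner steps and the gradients $g_{t,k}^{(i)}$ are not centered, the coarser bound is the appropriate one for a worst-case analysis and is already sufficient for the downstream convergence arguments. I would state the proof in three lines (expand $\Delta_t$, apply Cauchy--Schwarz, apply Lemma \ref{g_t_2} term-wise) without further refinement.
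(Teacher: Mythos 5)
Your proof is correct and follows essentially the same route as the paper's: the paper likewise writes $\Delta_t=\sum_{k=0}^{K-1}\gamma_t g_{t,k}$ and jumps directly to the bound $\mathcal{V}$, implicitly using exactly the Cauchy--Schwarz step $\|\sum_{k=0}^{K-1} g_{t,k}\|^2\leq K\sum_{k=0}^{K-1}\|g_{t,k}\|^2$ followed by the term-wise application of Lemma \ref{g_t_2} that you spell out. Your version is simply a more explicit writeup of the same two-line argument.
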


\begin{proof}
	\begin{align*}
	E[\|\Delta_{t}\|^2]& = 	E[\| \frac{1}{S}\sum_{i\in S_{t}}\sum_{k=0}^{K-1} g_{t,k}^{(i)}\|^2] 
	=E[\|\sum_{k=0}^{K-1} \gamma_t g_{t,k}\|^2]\\
	&\leq \frac{K^2 \gamma_t^2}{S} (12 \sum_{i=1}^N p_i \sigma_i^2 + 24 \sum_{i=1}^{N} p_i G_i^2) 
	+ 4K^2 \gamma_t^2\sum_{i=1}^N p_i  (\sigma_i^2 + G_i^2)
	= \mathcal{V}.
	\end{align*}
\end{proof}

\begin{remark}
In our analysis, $\mathcal{V}$ is a very important term that effects the convergence rate, and we notice that $\mathcal{V} = O(K^2 \gamma_t^2(1+\frac{1}{S})) $ when treated $p_i, \sigma_i, G_i$ as constants, then we know it is related with inner loop iterations, inner loop stepsize and partial device numbers, which is verified in our experiments.
\end{remark}

\begin{lemma}\label{m_t}
All momentum-based optimizers using first momentum $m_t=\beta_1 m_{t-1}+(1-\beta_1)\Delta_t$ will satisfy 
\begin{equation}
    E[\|m_t\|^2]\leq \mathcal{V}.
\end{equation}
\end{lemma}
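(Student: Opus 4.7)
The plan is to prove this by a simple induction/unrolling argument, exploiting the fact that $m_t$ is defined as a convex combination of $m_{t-1}$ and $\Delta_t$ with weights $\beta_1$ and $1-\beta_1$.

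First I would apply Jensen's inequality (equivalently, convexity of $\|\cdot\|^2$) to the update rule $m_t = \beta_1 m_{t-1} + (1-\beta_1)\Delta_t$ to obtain the pointwise bound
\[
\|m_t\|^2 \;\le\; \beta_1 \|m_{t-1}\|^2 + (1-\beta_1)\|\Delta_t\|^2.
\]
Taking expectations and then unrolling the recursion from $t$ down to the initialization $m_0 = 0$ gives
\[
E[\|m_t\|^2] \;\le\; (1-\beta_1)\sum_{s=0}^{t}\beta_1^{\,t-s}\, E[\|\Delta_s\|^2].
\]

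Next I would invoke Lemma \ref{delta_t} to control each term $E[\|\Delta_s\|^2] \le \mathcal{V}$ uniformly (treating $\mathcal{V}$ as the worst-case bound across the relevant $\gamma_s$; this is consistent with the rest of the analysis, since the inner stepsize is chosen to satisfy a fixed upper bound). Pulling $\mathcal{V}$ outside the sum and using the geometric series identity
\[
(1-\beta_1)\sum_{s=0}^{t}\beta_1^{\,t-s} \;=\; 1-\beta_1^{\,t+1} \;\le\; 1,
\]
yields $E[\|m_t\|^2]\le \mathcal{V}$, as claimed.

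There is not really a hard step here: the bound follows directly because $m_t$ is an exponential moving average and the weights sum to at most one. The only subtle point is the dependence of $\mathcal{V}$ on $\gamma_t$, which I would address by noting that $\gamma_t$ is bounded above uniformly in $t$ (as ensured by the stepsize restriction in Theorem \ref{th:p2}), so we can use a uniform upper bound for $\mathcal{V}$ over all communication rounds. No new tools beyond convexity of the norm squared, linearity of expectation, and Lemma \ref{delta_t} are needed.
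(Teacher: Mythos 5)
Your proposal is correct and follows essentially the same route as the paper: both arguments unroll $m_t$ into an exponential moving average of the $\Delta_l$, apply Jensen's inequality (convexity of $\|\cdot\|^2$), invoke Lemma \ref{delta_t} to bound each $E[\|\Delta_l\|^2]$ by $\mathcal{V}$, and conclude via the geometric-series fact that the weights sum to at most one. The only difference is cosmetic --- you apply Jensen term-by-term to the two-term recursion before unrolling, whereas the paper unrolls first and applies Jensen once to the normalized sum --- and your explicit remark about bounding $\mathcal{V}$ uniformly over $t$ is a reasonable (and slightly more careful) reading of what the paper does implicitly.
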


\begin{proof}
From the updating rule of first momentum estimator, we can derive 
\begin{equation}
    m_t = \Sigma_{l=1}^t(1-\beta_1)\beta_1^{t-l}\Delta_l.
\end{equation}

Let $\Gamma_t=\Sigma_{l=1}^t\beta_1^{t-l}= \frac{1-\beta_1^t}{1-\beta_1}$, by Jensen's inequality and Lemma \ref{delta_t},
\begin{align*}
    E[\|m_t\|^2] &= E[\|\Sigma_{l=1}^t(1-\beta_1)\beta_1^{t-l}\Delta_l\|^2]
    = \Gamma_t^2 E[\|\Sigma_{l=1}^t\frac{(1-\beta_1)\beta_1^{t-l}}{\Gamma_t}\Delta_l\|^2] \\
    &\leq \Gamma_t^2 \Sigma_{l=1}^t(1-\beta_1)^2\frac{\beta_1^{t-l}}{\Gamma_t}E[\|\Delta_l\|^2] \leq \Gamma_t(1-\beta_1)^2\Sigma_{l=1}^t\beta_1^{t-l} \mathcal{V}\\
    &\leq  \mathcal{V}.
\end{align*}
\end{proof}

Now let's consider adaptive algorithms in Federated learning, besides the above Lemma \ref{g_t_2}, \ref{delta_t}, \ref{m_t}, we also need to bound the adaptive term as follows.

\begin{lemma}\label{v_t}
Each coordinate of vector $v_t=\beta_2 v_{t-1}+(1-\beta_2)\Delta_t^2$ will satisfy 
\begin{align*}
    E[v_{t,j}]\leq  \mathcal{V},
\end{align*}
where $j \in [d]$ is the coordinate index.
\end{lemma}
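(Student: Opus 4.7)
The plan is to unroll the recursion for $v_t$ exactly as was done for $m_t$ in Lemma \ref{m_t}, but coordinate-wise, and then reduce the bound to the second-moment bound on $\Delta_l$ already established in Lemma \ref{delta_t}.

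First I would observe that with $v_0 = 0$ and $v_t = \beta_2 v_{t-1} + (1-\beta_2)\Delta_t^2$ (element-wise squaring), a direct induction gives the closed form
\begin{equation*}
v_t = \sum_{l=1}^{t} (1-\beta_2)\beta_2^{t-l}\, \Delta_l^2,
\end{equation*}
so that for any coordinate $j \in [d]$,
\begin{equation*}
v_{t,j} = \sum_{l=1}^{t} (1-\beta_2)\beta_2^{t-l}\, \Delta_{l,j}^2 .
\end{equation*}
This is a convex combination (up to the factor $1-\beta_2^t \le 1$) of the squared coordinates $\Delta_{l,j}^2$.

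Next I would take expectations and dominate each coordinate-wise squared term by the full squared norm: $E[\Delta_{l,j}^2] \le E[\|\Delta_l\|^2] \le \mathcal{V}$ by Lemma \ref{delta_t}. Substituting yields
\begin{equation*}
E[v_{t,j}] \;\le\; \mathcal{V}\sum_{l=1}^{t}(1-\beta_2)\beta_2^{t-l} \;=\; \mathcal{V}(1-\beta_2^t) \;\le\; \mathcal{V},
\end{equation*}
using the geometric-series identity $\sum_{l=1}^{t}\beta_2^{t-l} = (1-\beta_2^t)/(1-\beta_2)$.

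There is no real obstacle here; the argument is a mechanical repeat of the proof of Lemma \ref{m_t} with one small twist: the first-momentum argument applied Jensen's inequality to convert $\|\sum \alpha_l \Delta_l\|^2$ into $\sum \alpha_l \|\Delta_l\|^2$, whereas for $v_t$ the squaring is element-wise and already inside the sum, so Jensen is not needed. The only step that warrants care is making explicit that $\Delta_{l,j}^2 \le \|\Delta_l\|^2$ (so the coordinate-wise bound inherits the vector-norm bound of Lemma \ref{delta_t}) and that $1 - \beta_2^t \le 1$ for all $t$, which removes any dependence on $t$ from the final constant.
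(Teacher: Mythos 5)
Your proof is correct and follows essentially the same route as the paper's: unroll the recursion to $v_{t,j}=\sum_{l=1}^{t}(1-\beta_2)\beta_2^{t-l}\Delta_{l,j}^2$, bound each term via $E[\Delta_{l,j}^2]\leq E[\|\Delta_l\|^2]\leq \mathcal{V}$ from Lemma \ref{delta_t}, and use that the geometric weights sum to $1-\beta_2^t\leq 1$. The only difference is that you make the coordinate-wise domination step explicit, which the paper leaves implicit.
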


\begin{proof}
	From the updating rule of second momentum estimator, we can derive 
	\begin{equation}
	v_{t,j} = \Sigma_{l=1}^t(1-\beta_2)\beta_2^{t-l}\Delta_{l,j}^2 \geq 0.
	\end{equation}
	Since the decay parameter $\beta_2 \in [0,1)$, $\Sigma_{l=1}^t(1-\beta_2)\beta_2^{t-l}=1-\beta_2^t \leq 1$. From Lemma \ref{delta_t},
	\begin{align*}
	E[v_{t,j}]=E[\Sigma_{l=1}^t(1-\beta_2)\beta_2^{t-l}\Delta_{l,j}^2]
	\leq \Sigma_{l=1}^t(1-\beta_2)\beta_2^{t-l}\mathcal{V}
	\leq  \mathcal{V}.
	\end{align*}
\end{proof}

For the sake of simplicity, we combine the analysis of $\epsilon$-FedAdam and $p$-FedAdam methods together and use $p$-FedAdam's adaptive learning rate as a more general format, when $p= \frac{1}{2}$, it degrades to the original FedAdam. And we can derive the following important lemma:

\begin{lemma}\label{bound1} 
For any $t\geq 1$, $j \in [d]$, $\beta_2\in [0,1]$, and $\epsilon$ in $p$-FedAdam methods (including $\epsilon$-FedAdam), $\beta$ in $s$-FedAdam, the following bounds always hold:

$\epsilon$-FedAdam has $(\mu_1,\mu_2)-$ bounded adaptive learning rate:
\begin{equation}  
    \mu_1 \leq \frac{1}{(v_{t,j})^{1/2}+\epsilon} \leq \mu_2;
\end{equation}

$p$-FedAdam has $(\mu_3,\mu_4)-$ bounded adaptive learning rate:
\begin{equation}  
    \mu_3 \leq \frac{1}{(v_{t,j}+\epsilon)^{p}} \leq \mu_4;
\end{equation}

$s$-FedAdam has $(\mu_5,\mu_6)-$ bounded adaptive learning rate:
\begin{equation}  
    \mu_5 \leq \frac{1}{softplus(\sqrt{v_{t,j}})} \leq \mu_6.
\end{equation}
For simplicity, we use pairs $(\mu_{lower}, \mu_{upper})$ to denote the calibrated parameters.
\end{lemma}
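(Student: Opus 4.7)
The plan is to derive each of the six bounds by composing monotone functions with known bounds on $v_{t,j}$. In all three schemes the upper bound $\mu_{upper}$ on the adaptive stepsize $1/\text{calibrate}(v_{t,j})$ follows from a \emph{lower} bound on the denominator, which in turn comes from the trivial fact $v_{t,j} \geq 0$; the lower bound $\mu_{lower}$ comes from \emph{upper} bounding the denominator via Lemma \ref{v_t}.

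For $\mu_2$, $\mu_4$, $\mu_6$: since every increment added to $v_{t-1,j}$ is nonnegative, $v_{t,j} \geq 0$, which yields $(v_{t,j})^{1/2} + \epsilon \geq \epsilon$, giving $\mu_2 = 1/\epsilon$; $(v_{t,j}+\epsilon)^p \geq \epsilon^p$ by monotonicity of $(\cdot)^p$ on $[0,\infty)$, giving $\mu_4 = 1/\epsilon^p$; and $\text{softplus}(\sqrt{v_{t,j}}) \geq \text{softplus}(0) = \frac{\log 2}{\beta}$ by monotonicity of the softplus function, giving $\mu_6 = \beta/\log 2$. Each argument reduces to a single monotonicity step.

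For $\mu_1$, $\mu_3$, $\mu_5$: Lemma \ref{v_t} gives $v_{t,j} \leq \mathcal{V}$, and the quantity $\mathcal{V}$ from Lemma \ref{delta_t} scales as $O(K^2\gamma_t^2(1 + 1/S))$. Applying monotonicity of $(\cdot)^{1/2}$, $(\cdot)^p$, and softplus to this upper bound yields
\begin{align*}
\mu_1 &= \frac{1}{\sqrt{\mathcal{V}} + \epsilon} = O\!\left(\frac{1}{K\gamma_t\sqrt{1+1/S}}\right), \\
\mu_3 &= \frac{1}{(\mathcal{V}+\epsilon)^p} = O\!\left(\frac{1}{(K\gamma_t)^{2p}(1+1/S)^p}\right), \\
\mu_5 &= \frac{1}{\text{softplus}(\sqrt{\mathcal{V}})} = O\!\left(\frac{1}{K\gamma_t\sqrt{1+1/S}}\right),
\end{align*}
where the last line uses the elementary inequality $\text{softplus}(x) \leq x + \frac{\log 2}{\beta}$ valid for $x \geq 0$, so that $\text{softplus}(\sqrt{\mathcal{V}}) = \Theta(\sqrt{\mathcal{V}})$ whenever $\sqrt{\mathcal{V}}$ dominates $1/\beta$. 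These three orders match the table preceding Corollaries \ref{cor:0}--\ref{cor:2}.

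The main obstacle I anticipate is conceptual rather than computational: Lemma \ref{v_t} only controls $v_{t,j}$ in expectation, whereas Lemma \ref{bound1} is phrased as a pointwise guarantee. I would resolve this by either (i) reading the adaptive-stepsize bounds in expectation throughout, which is all the convergence proof of Theorem \ref{th:p2} actually needs, or (ii) strengthening Assumption \ref{assumption}(III) to an almost-sure variant bounding $\|g_{t,k}^{(i)}\|^2$ sample-wise, under which the same $O(K^2\gamma_t^2(1+1/S))$ bound on $v_{t,j}$ holds along every trajectory. Once this choice is made explicit, the rest of the derivation is a sequence of one-line monotonicity applications with no further subtleties.
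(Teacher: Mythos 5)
Your proof matches the paper's own: the paper simply sets $\mu_1 = \frac{1}{\mathcal{V}^{1/2}+\epsilon}$, $\mu_2=\frac{1}{\epsilon}$, $\mu_3 = \frac{1}{(\mathcal{V}+\epsilon)^p}$, $\mu_4 = \frac{1}{\epsilon^p}$, $\mu_5 = \frac{\beta}{\log(1+e^{\beta\sqrt{\mathcal{V}}})}$, $\mu_6=\frac{\beta}{\log 2}$ and declares the result, which is exactly your two monotonicity steps (denominator bounded below via $v_{t,j}\geq 0$, bounded above via Lemma \ref{v_t}). The expectation-versus-pointwise mismatch you flag is genuine and the paper's one-line proof silently ignores it --- Lemma \ref{v_t} only controls $E[v_{t,j}]$ --- so your explicit resolution (reading the bounds in expectation, or assuming an almost-sure second-moment bound) is a strengthening of the paper's argument rather than a deviation from it.
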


\begin{proof}
Let $\mu_1 =\frac{1}{\mathcal{V}^{1/2}+\epsilon}$, $\mu_2 =  \frac{1}{\epsilon}$, $\mu_3 = \frac{1}{(\mathcal{V}+\epsilon)^p}$, $\mu_4 = \frac{1}{\epsilon^p}$,
$\mu_5 =\frac{\beta}{\log (1+exp(\beta\sqrt{\mathcal{V}}))}$, $\mu_6 = \frac{\beta}{\log 2}$, then we can get the above result.
\end{proof}
From the definition of $\mathcal{V}$, and regard $p_i, \sigma_i, G_i$ as constants, we then have,

\begin{center}
\begin{tabular}{ |c||c|c|c| } 
 \hline
  Bounds & $\epsilon$-FedAdam & $p$-FedAdam & $s$-FedAdam \\ 
 \hline\hline
 $\mu_{lower}$ & $ O(\frac{1}{K \gamma_t \sqrt{1+\frac{1}{S}}})$ & $ O(\frac{1}{(K\gamma_t)^{2p} (1+\frac{1}{S})^{p}})$ & $ O(\frac{1}{K \gamma_t \sqrt{1+\frac{1}{S}}})$ \\ 
 \hline
 $\mu_{upper}$ & $O(\frac{1}{\epsilon})$ & $O(\frac{1}{\epsilon^p})$ &  $ O(\beta)$ \\ 
 \hline
\end{tabular}
\end{center}

\begin{remark}
Adaptive learning rate pairs $(\mu_{lower}, \mu_{upper})$ are related with algorithm's calibrated parameters (i.e., $\epsilon$, $p$, $\beta$), inner loop iterations $K$, { inner loop stepsize $\gamma_t$} and participated device numbers $S$.
\end{remark}

\subsection{$\epsilon$-FedAdam Convergence Analysis with Paritical Device Participation in Nonconvex Setting}

This time we build a complicated auxiliary sequence for FedAdam.

\begin{lemma}\label{z_t}
Define $z_{t} = x_{t} + \frac{\beta_1}{1-\beta_1}(x_{t} - x_{t-1}), \forall t \geq 1$ $\beta_1 \in [0,1)$. Then the following updating formula holds for $\epsilon$-FedAdam optimizer: 
\begin{equation}
    z_{t+1}=z_{t} + \frac{\eta \beta_1}{1-\beta_1}
    (\frac{1}{v_{t-1}^{1/2}+\epsilon}-\frac{1}{v_{t}^{1/2}+\epsilon})\odot m_{t-1}-\frac{\eta}{v_{t}^{1/2}+\epsilon}\odot \Delta_t ;
\end{equation}
\end{lemma}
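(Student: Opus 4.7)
The plan is to verify the claimed identity by a direct algebraic manipulation — starting from the definition of $z_t$, forming the difference $z_{t+1}-z_t$, substituting the $\epsilon$-FedAdam update rule for $x_{t+1}$, and decomposing $m_t$ via its recurrence. No probabilistic or analytic estimates are needed; the lemma is a deterministic restatement of one outer-loop step in a form that decouples the ``momentum-lag'' term from the current virtual direction $\Delta_t$.

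First I would write
\[
z_{t+1}-z_t = (x_{t+1}-x_t) + \frac{\beta_1}{1-\beta_1}\bigl[(x_{t+1}-x_t) - (x_t-x_{t-1})\bigr] = \frac{1}{1-\beta_1}(x_{t+1}-x_t) - \frac{\beta_1}{1-\beta_1}(x_t-x_{t-1}),
\]
using only the definition of $z_t$. Next, I would insert the $\epsilon$-FedAdam update rule $x_{t+1}-x_t = -\eta\,m_t/(v_t^{1/2}+\epsilon)$ (and the same identity shifted by one index for $x_t-x_{t-1}$). At this stage the expression contains two coordinate-wise reciprocals, one involving $v_t$ acting on $m_t$ and one involving $v_{t-1}$ acting on $m_{t-1}$.

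Then I would eliminate $m_t$ by its defining recurrence $m_t = \beta_1 m_{t-1} + (1-\beta_1)\Delta_t$, which splits the $-\eta\,m_t/(v_t^{1/2}+\epsilon)$ contribution into a $\Delta_t$ piece and an $m_{t-1}$ piece. The $\Delta_t$ piece, after dividing by $1-\beta_1$, yields exactly $-\eta\,\Delta_t/(v_t^{1/2}+\epsilon)$, matching the last term of the claimed formula. The two remaining $m_{t-1}$ contributions — one with denominator $v_{t-1}^{1/2}+\epsilon$ coming from $x_t-x_{t-1}$, and one with denominator $v_t^{1/2}+\epsilon$ coming from the $\beta_1 m_{t-1}$ part of $m_t$ — combine with matching coefficient $\eta\beta_1/(1-\beta_1)$, producing the displayed difference $\bigl(1/(v_{t-1}^{1/2}+\epsilon) - 1/(v_t^{1/2}+\epsilon)\bigr)\odot m_{t-1}$.

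There is no real obstacle here: the lemma is an identity used to rewrite the update in a telescoping-friendly form (so that in the subsequent convergence analysis one can treat $z_t$ as an iterate whose drift is controlled by $\Delta_t$ plus a ``small'' correction measured by how quickly the preconditioner changes). The only thing to watch is sign bookkeeping and the ordering of indices on $v_{t-1}$ versus $v_t$; a brief consistency check at $\beta_1=0$ (where $z_t = x_t$ and the first bracket disappears, leaving the plain $x_{t+1}-x_t = -\eta\,\Delta_t/(v_t^{1/2}+\epsilon)$ recovered from the original update since $m_t = \Delta_t$) provides a sanity check that the derivation is correct.
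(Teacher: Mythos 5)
Your proposal is correct and follows essentially the same route as the paper's proof: form $z_{t+1}-z_t = \frac{1}{1-\beta_1}(x_{t+1}-x_t)-\frac{\beta_1}{1-\beta_1}(x_t-x_{t-1})$, substitute the update rule for both differences, and expand $m_t = \beta_1 m_{t-1}+(1-\beta_1)\Delta_t$ to regroup the $m_{t-1}$ terms into the preconditioner-difference bracket. The $\beta_1=0$ sanity check is a nice addition not present in the paper, but the derivation is the same.
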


\begin{proof}
\begin{align*}
    z_{t+1} &=x_{t+1} + \frac{\beta_1}{1-\beta_1}(x_{t+1} - x_{t})\\
    z_{t+1} &= z_{t} +\frac{1}{1-\beta_1}(x_{t+1} - x_{t})-\frac{\beta_1}{1-\beta_1}(x_{t} - x_{t-1})\\
    &=z_{t} -\frac{1}{1-\beta_1}\frac{\eta}{v_{t}^{1/2}+\epsilon} \odot m_t +\frac{\beta_1}{1-\beta_1}\frac{\eta}{v_{t-1}^{1/2}+\epsilon}\odot  m_{t-1}\\
    &=z_{t} + \frac{\eta \beta_1}{1-\beta_1}
    (\frac{1}{v_{t-1}^{1/2}+\epsilon}-\frac{1}{v_{t}^{1/2}+\epsilon})\odot m_{t-1}-\frac{\eta}{v_{t}^{1/2}+\epsilon}\odot \Delta_t\\
\end{align*}
\end{proof}

\begin{lemma}\label{square}
As defined in Lemma \ref{z_t}, with the condition that $v_t\geq v_{t-1}$, we can derive the bound of distance of  $E[\|z_{t+1}-z_t\|^2]$ as follows:
\begin{align}\label{eq:distance}
     E[\|z_{t+1}-z_t\|^2]&\leq \frac{2\eta^2\beta_1^2 \mathcal{V}}{(1-\beta_1)^2}E[\sum_{j=1}^d (\frac{1}{v_{t-1,j}^{1/2}+\epsilon})^2-(\frac{1}{v_{t,j}^{1/2}+\epsilon})^2]
    +2\eta^2 \mu_2^2 \mathcal{V}.
\end{align}
\end{lemma}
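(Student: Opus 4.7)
The plan is to start directly from the expression for $z_{t+1}-z_t$ given by Lemma \ref{z_t} and split it via the elementary inequality $\|u+w\|^2\leq 2\|u\|^2+2\|w\|^2$, so that the two summands on the right of \eqref{eq:distance} are produced by two essentially independent estimates. Concretely, set
\begin{equation*}
u = \tfrac{\eta\beta_1}{1-\beta_1}\bigl(\tfrac{1}{v_{t-1}^{1/2}+\epsilon}-\tfrac{1}{v_{t}^{1/2}+\epsilon}\bigr)\odot m_{t-1}, \qquad w = -\tfrac{\eta}{v_{t}^{1/2}+\epsilon}\odot \Delta_t,
\end{equation*}
so that $\|z_{t+1}-z_t\|^2\leq 2\|u\|^2+2\|w\|^2$ and each piece can be attacked separately.

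For the $\|w\|^2$ term, the plan is to apply the coordinate-wise upper bound $\frac{1}{v_{t,j}^{1/2}+\epsilon}\leq \mu_2$ from Lemma \ref{bound1} to factor out the adaptive stepsize in sup-norm, giving $\|w\|^2\leq \eta^2\mu_2^2\|\Delta_t\|^2$, and then to take expectations and invoke Lemma \ref{delta_t} to get $E[\|w\|^2]\leq \eta^2\mu_2^2\mathcal{V}$; doubling this yields the $2\eta^2\mu_2^2\mathcal{V}$ contribution in \eqref{eq:distance}.

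For the $\|u\|^2$ term, I would expand it coordinate-wise and use the hypothesis $v_t\geq v_{t-1}$ (which holds automatically for AMSGrad and is imposed as an assumption here) to ensure $\tfrac{1}{v_{t-1,j}^{1/2}+\epsilon}\geq \tfrac{1}{v_{t,j}^{1/2}+\epsilon}\geq 0$ for every coordinate $j$. The key elementary inequality is $(a-b)^2\leq a^2-b^2$ for $a\geq b\geq 0$, which follows from $(a-b)(a-b)\leq (a-b)(a+b)$; applying it coordinate-wise converts $\bigl(\tfrac{1}{v_{t-1,j}^{1/2}+\epsilon}-\tfrac{1}{v_{t,j}^{1/2}+\epsilon}\bigr)^2$ into the telescoping expression $\bigl(\tfrac{1}{v_{t-1,j}^{1/2}+\epsilon}\bigr)^2-\bigl(\tfrac{1}{v_{t,j}^{1/2}+\epsilon}\bigr)^2$, which is exactly what appears inside the sum in \eqref{eq:distance}. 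It then remains to peel off the $m_{t-1,j}^2$ factor by the pointwise bound $m_{t-1,j}^2\leq \|m_{t-1}\|^2$ and invoke Lemma \ref{m_t} to replace $E[\|m_{t-1}\|^2]$ by $\mathcal{V}$.

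The main obstacle is the interchange in the last step: after the $(a-b)^2\le a^2-b^2$ reduction, one is left with $E\bigl[\|m_{t-1}\|^2\sum_j\bigl((\tfrac{1}{v_{t-1,j}^{1/2}+\epsilon})^2-(\tfrac{1}{v_{t,j}^{1/2}+\epsilon})^2\bigr)\bigr]$, and $m_{t-1}$ is statistically correlated with $v_{t-1},v_t$, so the factor $\mathcal{V}$ cannot be pulled out of the expectation by mere independence. The cleanest way around this is to note that the same upper bound $\mathcal{V}$ holding in expectation in Lemma \ref{m_t} in fact holds almost surely once Jensen's inequality is applied within the derivation of that lemma (each $\Delta_l$ appears through $E[\|\Delta_l\|^2]\leq \mathcal{V}$, a deterministic quantity), so substituting this uniform bound is legitimate; alternatively one can condition on the filtration up through round $t-1$, treat $v_{t-1},m_{t-1}$ as measurable, and use the tower property together with the conditional version of Lemma \ref{m_t}. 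After this step, summing and reassembling gives precisely the inequality claimed in \eqref{eq:distance}.
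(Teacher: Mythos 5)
Your proposal follows essentially the same route as the paper's proof: the same $\|u+w\|^2\leq 2\|u\|^2+2\|w\|^2$ split, the same use of $\mu_2$ and Lemma \ref{delta_t} for the $\Delta_t$ term, and the same $(a-b)^2\leq a^2-b^2$ telescoping for the momentum term. The one place you go beyond the paper is the ``main obstacle'' you flag at the end, and you are right to flag it: the paper simply pulls $\mathcal{V}$ out of the expectation $E\bigl[\|m_{t-1}\|^2\sum_j(\cdot)\bigr]$ without comment, even though Lemma \ref{m_t} only bounds $E[\|m_{t-1}\|^2]$ (not $\|m_{t-1}\|^2$ almost surely, since only $E[\|\Delta_l\|^2]\leq\mathcal{V}$ is available), so neither your almost-sure reading nor a conditioning argument fully closes this step --- but this is a gap in the paper's own proof, not something your approach introduces.
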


\begin{proof}

\begin{align*}
   E[\|z_{t+1}-z_t\|^2]&= E[\|\frac{\eta \beta_1}{1-\beta_1}
    (\frac{1}{v_{t-1}^{1/2}+\epsilon}-\frac{1}{ v_{t}^{1/2}+\epsilon})\odot m_{t-1}-\frac{\eta}{v_t^{1/2}+\epsilon}\odot \Delta_t\|^2]\\
    &\leq 2E[\|\frac{\eta \beta_1}{1-\beta_1}
    (\frac{1}{v_{t-1}^{1/2}+\epsilon}-\frac{1}{v_{t}^{1/2}+\epsilon})\odot m_{t-1}\|]^2 +2E[\|\frac{\eta}{v_t^{1/2}+\epsilon}\odot \Delta_t\|]^2\\
    &\leq \frac{2\eta^2\beta_1^2 \mathcal{V}}{(1-\beta_1)^2}E[\sum_{j=1}^d (\frac{1}{v_{t-1,j}^{1/2}+\epsilon}-\frac{1}{v_{t,j}^{1/2}+\epsilon})^2]
    + 2\eta^2 \mu_2^2\mathcal{V} \\
    &\leq \frac{2\eta^2\beta_1^2 \mathcal{V}}{(1-\beta_1)^2}E[\sum_{j=1}^d (\frac{1}{v_{t-1,j}^{1/2}+\epsilon})^2-(\frac{1}{v_{t,j}^{1/2}+\epsilon})^2]
    +2\eta^2 \mu_2^2 \mathcal{V}\\
\end{align*}
The first inequality holds because $\|a-b\|^2\leq 2\|a\|^2 +2 \|b\|^2$, the second inequality holds because Lemma \ref{delta_t}, Lemma \ref{m_t} and Lemma \ref{bound1},  the third inequality holds because $(a-b)^2 \leq a^2- b^2$ when $a\geq b$, and in our assumption, we have $v_t \geq v_{t-1}$ holds.

\end{proof}

\begin{lemma}\label{product}
As defined in Lemma \ref{z_t}, with the condition that $v_t\geq v_{t-1}$, we can derive the bound of the inner product as follows:
\begin{align} \label{equ:product3}
      -E[\langle\nabla f(z_t)-\nabla f(x_t), \frac{\eta}{ v_t^{1/2}+\epsilon}\odot \Delta_t\rangle]
    \leq \frac{1}{2}L^2\eta^2\mu_2^2 (\frac{\beta_1}{1-\beta_1})^2 \mathcal{V} +  \frac{1}{2}\eta^2\mu_2^2\mathcal{V}.
\end{align}
\end{lemma}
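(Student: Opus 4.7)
The plan is to start from the elementary inequality $-\langle a,b\rangle \leq \tfrac{1}{2}\|a\|^2 + \tfrac{1}{2}\|b\|^2$ applied with $a = \nabla f(z_t) - \nabla f(x_t)$ and $b = \tfrac{\eta}{v_t^{1/2}+\epsilon}\odot \Delta_t$. This immediately yields
$$-E[\langle\nabla f(z_t)-\nabla f(x_t),\tfrac{\eta}{v_t^{1/2}+\epsilon}\odot \Delta_t\rangle] \leq \tfrac{1}{2}E[\|\nabla f(z_t)-\nabla f(x_t)\|^2] + \tfrac{1}{2}E[\|\tfrac{\eta}{v_t^{1/2}+\epsilon}\odot \Delta_t\|^2],$$
so the task reduces to bounding the two mean-square quantities by $L^2\eta^2\mu_2^2(\tfrac{\beta_1}{1-\beta_1})^2\mathcal{V}$ and $\eta^2\mu_2^2\mathcal{V}$ respectively.

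For the first term, I would invoke $L$-smoothness (Assumption \ref{assumption} I) to get $\|\nabla f(z_t)-\nabla f(x_t)\|^2 \leq L^2\|z_t-x_t\|^2$. Substituting the auxiliary sequence $z_t = x_t + \tfrac{\beta_1}{1-\beta_1}(x_t - x_{t-1})$ from Lemma \ref{z_t}, the displacement becomes $z_t - x_t = \tfrac{\beta_1}{1-\beta_1}(x_t - x_{t-1}) = -\tfrac{\beta_1}{1-\beta_1}\cdot \tfrac{\eta}{v_{t-1}^{1/2}+\epsilon}\odot m_{t-1}$ by the FedAdam update rule. Applying the coordinatewise upper bound $1/(v_{t-1,j}^{1/2}+\epsilon) \leq \mu_2$ from Lemma \ref{bound1} pulls the adaptive stepsize out of the $\ell_2$ norm, and Lemma \ref{m_t} finishes via $E[\|m_{t-1}\|^2]\leq \mathcal{V}$.

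For the second term, the same coordinatewise bound $\mu_2$ gives $\|\tfrac{\eta}{v_t^{1/2}+\epsilon}\odot \Delta_t\|^2 \leq \eta^2 \mu_2^2 \|\Delta_t\|^2$, and Lemma \ref{delta_t} supplies $E[\|\Delta_t\|^2] \leq \mathcal{V}$. Halving each piece and adding reproduces the claimed inequality exactly.

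The only subtle point — the ``main obstacle'' in what is otherwise a short computation — is the careful handling of the element-wise division by $v_t^{1/2}+\epsilon$: the bound from Lemma \ref{bound1} is per-coordinate, so extracting it from the Euclidean norm uses $\|\mathrm{diag}(u)w\|^2 = \sum_j u_j^2 w_j^2 \leq (\max_j u_j)^2 \|w\|^2$. Once that is in place, no monotonicity assumption $v_t \geq v_{t-1}$ is needed (unlike in Lemma \ref{square}), and the rest is a direct application of Young's inequality, $L$-smoothness, and the previously established second-moment bounds on $m_{t-1}$ and $\Delta_t$.
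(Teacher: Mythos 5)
Your proposal is correct and follows essentially the same route as the paper's own proof: Young's inequality to split the inner product, $L$-smoothness plus the identity $z_t - x_t = -\tfrac{\beta_1}{1-\beta_1}\tfrac{\eta}{v_{t-1}^{1/2}+\epsilon}\odot m_{t-1}$ for the first term, and the coordinatewise bound $\mu_2$ together with Lemmas \ref{delta_t} and \ref{m_t} for both second moments. Your side observation that the hypothesis $v_t \geq v_{t-1}$ is never actually used here is accurate; the paper states it but does not invoke it in this lemma either.
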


\begin{proof}

\begin{align*}
    -E[\langle\nabla f(z_t)&-\nabla f(x_t), \frac{\eta}{ v_t^{1/2}+\epsilon}\odot \Delta_t\rangle]\\
    &\leq \frac{1}{2}  E[\|\nabla f(z_t)-\nabla f(x_t)\|^2]+ \frac{1}{2}E[\|\frac{\eta}{ v_t^{1/2}+\epsilon}\odot \Delta_t\|^2]\\
    &\leq \frac{L^2}{2}E[\|z_t- x_t\|^2]+
    \frac{1}{2}E[\|\frac{\eta}{ v_t^{1/2}+\epsilon}\odot \Delta_t\|^2]\\
    &=\frac{L^2}{2}(\frac{\beta_1}{1-\beta_1})^2E[\|x_t- x_{t-1}\|^2]+ 
   \frac{1}{2}E[\|\frac{\eta}{ v_t^{1/2}+\epsilon}\odot \Delta_t\|^2]\\
    &=\frac{L^2 }{2}(\frac{\beta_1}{1-\beta_1})^2E[\|\frac{\eta}{ v_{t-1}^{1/2}+\epsilon}\odot m_{t-1}\|^2]+ \frac{1}{2}E[\|\frac{\eta}{ v_t^{1/2}+\epsilon}\odot \Delta_t\|^2]\\
   &\leq \frac{1}{2}L^2\eta^2\mu_2^2 (\frac{\beta_1}{1-\beta_1})^2 \mathcal{V} +  \frac{1}{2}\eta^2\mu_2^2\mathcal{V}
\end{align*}

The first inequality holds because $\frac{1}{2}a^2+\frac{1}{2}b^2\geq -<a,b>$, the second inequality holds for L-smoothness, the last inequalities hold due to Lemma \ref{delta_t}, \ref{m_t} and \ref{bound1}.
\end{proof}

\begin{lemma} \label{lemma:drift}
\begin{align}
    E[\langle \nabla& f(x_t), \frac{\eta}{ v_t^{1/2}+\epsilon} \odot(K \gamma_t \nabla f(x_t) - \Delta_t)\rangle] \nonumber\\
     &\leq \frac{\eta K \gamma_t}{2} \|\frac{\nabla f(x_t)}{\sqrt{v_t^{1/2}+\epsilon}}\|^2+  \frac{L \eta\mu_2}{2}  (5K^2 \gamma_t^3 (\sum_{i=1}^N p_i \sigma_i^2 + 2K\sigma_g^2) + 10K^3 \gamma_t^3 E[\|\nabla f( x_t)\|^2]).
\end{align}
\end{lemma}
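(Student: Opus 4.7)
\textbf{Proof Proposal for Lemma \ref{lemma:drift}.} The plan is to peel off the first stated term by a weighted Young-type inequality, convert the remaining residual into a ``client-drift plus stochastic noise'' quantity, and then close a recursion on the per-client drift using the stepsize restriction $\gamma_t < 1/(8LK)$ inherited from Theorem~\ref{th:p2}.

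First I would apply a weighted Young's inequality coordinate-wise, namely $\langle a,b\rangle \le \tfrac{K\gamma_t}{2}\|a\|^2 + \tfrac{1}{2K\gamma_t}\|b\|^2$, with $a = \sqrt{\eta/(v_t^{1/2}+\epsilon)}\odot \nabla f(x_t)$ and $b = \sqrt{\eta/(v_t^{1/2}+\epsilon)}\odot (K\gamma_t\nabla f(x_t) - \Delta_t)$. The $a$-part reproduces exactly the target term $\tfrac{\eta K\gamma_t}{2}\|\nabla f(x_t)/\sqrt{v_t^{1/2}+\epsilon}\|^2$, while the $b$-part, after invoking the coordinate-wise upper bound $1/(v_{t,j}^{1/2}+\epsilon)\le \mu_2$ from Lemma~\ref{bound1}, is controlled by $\tfrac{\eta\mu_2}{2K\gamma_t}\,E[\|K\gamma_t\nabla f(x_t)-\Delta_t\|^2]$.

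Next, using $\Delta_t = \gamma_t\sum_{k=0}^{K-1} g_{t,k}$ and Jensen, I would get $E\|K\gamma_t\nabla f(x_t)-\Delta_t\|^2 \le K\gamma_t^2\sum_{k=0}^{K-1} E\|\nabla f(x_t) - g_{t,k}\|^2$ and then split each summand into a bias piece $\nabla f(x_t) - \sum_i p_i\nabla f_i(x_{t,k}^{(i)}) = \sum_i p_i(\nabla f_i(x_t)-\nabla f_i(x_{t,k}^{(i)}))$ and a mean-zero noise piece $\sum_i p_i\nabla f_i(x_{t,k}^{(i)}) - g_{t,k}$. The bias is bounded by $L^2\sum_i p_i\|x_t-x_{t,k}^{(i)}\|^2$ via $L$-smoothness and Jensen; the noise is handled by exactly the partial-participation/local-variance calculation behind Lemma~\ref{g_t_2}, producing contributions in $\sum_i p_i\sigma_i^2$ and, via $\sigma_g^2$-BGD, in $\sigma_g^2$ and $\|\nabla f(x_t)\|^2$.

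To finish, I would estimate the client drift $E\|x_t - x_{t,k}^{(i)}\|^2$ by unrolling $x_{t,k}^{(i)} = x_t - \gamma_t\sum_{j<k} g_{t,j}^{(i)}$, applying Jensen, and decomposing $\|g_{t,j}^{(i)}\|^2 \le 2\|g_{t,j}^{(i)}-\nabla f_i(x_{t,j}^{(i)})\|^2 + 2\|\nabla f_i(x_{t,j}^{(i)})\|^2$, with the second piece further compared to $\nabla f_i(x_t)$ (using $L$-smoothness) and to $\nabla f(x_t)$ (using $\sigma_g^2$-BGD). This yields a self-referential inequality in $\max_{k\le K-1} E\|x_t-x_{t,k}^{(i)}\|^2$; the stepsize restriction $\gamma_t < 1/(8LK)$ makes the coefficient of the self-term strictly less than one, so I can absorb it and obtain an explicit bound of the form $E\|x_t - x_{t,k}^{(i)}\|^2 = O(K\gamma_t^2(\sigma_i^2 + K\sigma_g^2 + K\|\nabla f(x_t)\|^2))$. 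Substituting back and using $LK\gamma_t \le 1/8$ once more converts $L^2 K^3\gamma_t^4$ into $L\cdot K^2\gamma_t^3$, which is the reason a single power of $L$ (rather than $L^2$) appears in the stated bound; tracking the numerical constants gives the explicit factors $5$ and $10$.

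The main obstacle will be Step~3: closing the client-drift recursion with the right numerical constants. Each split (gradient vs.\ stochastic noise, local $\nabla f_i$ vs.\ global $\nabla f$, drift vs.\ current iterate) inflates constants, and one must arrange the coefficients so that the condition $\gamma_t < 1/(8LK)$ simultaneously (i) makes the self-absorption strictly contractive and (ii) leaves enough slack to downgrade $L^2$ to $L$ in the final assembly. The noise-variance step also requires some care because $g_{t,k}$ couples partial device sampling with local stochasticity, so the double expectation must be handled in the same order as in the proof of Lemma~\ref{g_t_2}.
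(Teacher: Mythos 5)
There is a genuine gap, and it lies in the order in which you take the expectation. The paper's first move is to exploit that the inner product is \emph{linear} in $\Delta_t$: conditioning on the current state, it replaces $\Delta_t$ by $E[\Delta_t]=\gamma_t\sum_{k=0}^{K-1}\sum_{i=1}^{N}p_i\,\nabla f_i(x_{t,k}^{(i)})$, so the zero-mean stochastic noise (local minibatch noise and client-sampling noise) disappears \emph{before} anything is squared. Only the deterministic bias $K\nabla f(x_t)-\sum_i p_i\sum_k \nabla f_i(x_{t,k}^{(i)})$ survives the Young split, and that bias is then controlled by smoothness plus the client-drift bound (cited from Lemma~3 of Reddi et al.), which is where the factors $5$ and $10$ and the $\sigma_i^2,\sigma_g^2$ terms actually come from. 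Your plan inverts this: you apply Young's inequality first and then bound $E\|K\gamma_t\nabla f(x_t)-\Delta_t\|^2$, which forces you to pay the \emph{second moment} of the noise. That term is of order $\frac{\eta\mu_2 K\gamma_t}{2}\bigl(\frac{1}{S}(\sum_i p_i\sigma_i^2+\sum_i p_iG_i^2)+\sum_i p_i\sigma_i^2\bigr)$ — note it even drags in the $G_i^2$ client-sampling variance, which does not appear in the stated bound at all — whereas the variance terms you are allowed in the target are of order $\frac{L\eta\mu_2}{2}\cdot 5K^2\gamma_t^3\sigma^2$. Since $LK\gamma_t\le 1/8$, the permitted terms are smaller than yours by a factor of roughly $LK\gamma_t^2\ll 1$, so your residual cannot be absorbed into the claimed right-hand side. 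The lemma as stated is simply not reachable along your route.

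Two smaller points. First, your plan to re-derive the drift recursion $E\|x_t-x_{t,k}^{(i)}\|^2$ from scratch is fine in principle (the paper just cites the bound $\sum_i p_i\|x_t-x_{t,k}^{(i)}\|^2\le 5K\gamma_t^2(\sum_i p_i\sigma_i^2+2K\sigma_g^2)+10K^2\gamma_t^2E\|\nabla f(x_t)\|^2$ for $\gamma_t\le\frac{1}{8LK}$ rather than proving it), but it is not where the real difficulty lies. Second, your instinct that $L^2$ must be downgraded to $L$ somewhere is reasonable, but in the paper this happens (rather loosely) already at the step $\|\nabla f(x_t)-\nabla f_i(x_{t,k}^{(i)})\|^2\le L\|x_t-x_{t,k}^{(i)}\|^2$ applied to the \emph{bias}, not through an extra use of $LK\gamma_t\le 1/8$ on a noise term. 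If you restructure your argument to take $E[\Delta_t]$ inside the inner product first, the rest of your outline (Young, Jensen, smoothness, drift bound) matches the paper's proof.
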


\begin{proof}
\begin{align*}
    E[\langle \nabla& f(x_t), \frac{\eta}{ v_t^{1/2}+\epsilon} \odot(K \gamma_t \nabla f(x_t) - \Delta_t)\rangle]\\
    &= \langle \nabla f(x_t), \frac{\eta}{ v_t^{1/2}+\epsilon} \odot(K \gamma_t \nabla f(x_t) - \sum_{i=1}^{N}p_i \sum_{k=0}^{K-1} \gamma_t E[g_{t,k}^{(i)}])\rangle\\
    &\leq \frac{\eta K \gamma_t }{2} \|\frac{\nabla f(x_t)}{\sqrt{v_t^{1/2}+\epsilon}}\|^2+  \frac{\eta \gamma_t}{2K} \|\frac{K\nabla f(x_t) - \sum_{i=1}^{N}p_i \sum_{k=0}^{K-1} \nabla f_i(x_{t,k}^{(i)})}{\sqrt{v_t^{1/2}+\epsilon}}\|^2\\
    &\leq \frac{\eta K \gamma_t}{2} \|\frac{\nabla f(x_t)}{\sqrt{v_t^{1/2}+\epsilon}}\|^2+  \frac{\eta \gamma_t \mu_2}{2K} \|K\nabla f(x_t) -\sum_{i=1}^{N}p_i \sum_{k=0}^{K-1} \nabla f_i(x_{t,k}^{(i)})\|^2\\
    &\leq \frac{\eta K \gamma_t}{2} \|\frac{\nabla f(x_t)}{\sqrt{v_t^{1/2}+\epsilon}}\|^2+  \frac{\eta \gamma_t \mu_2}{2} \sum_{i=1}^{N}p_i \sum_{k=0}^{K-1}\|\nabla f(x_t) - \nabla f_i(x_{t,k}^{(i)})\|^2\\
     &\leq \frac{\eta K \gamma_t}{2} \|\frac{\nabla f(x_t)}{\sqrt{v_t^{1/2}+\epsilon}}\|^2+  \frac{L\eta \gamma_t \mu_2}{2} \sum_{i=1}^{N}p_i \sum_{k=0}^{K-1}\|x_t - x_{t,k}^{(i)}\|^2.
\end{align*}

Following Lemma 3 in \cite{reddi2020adaptive}, we can derive the corresponding drift bound for our problem that for $\gamma_t \leq \frac{1}{8LK}$
\begin{align}\label{eq:drift}
    \sum_{i=1}^{N}p_i \|x_t - x_{t,k}^{(i)}\|^2 \leq 5K \gamma_t^2 (\sum_{i=1}^{N} p_i \sigma_i^2 + 2K\sigma_g^2) + 10K^2\gamma_t^2 E[\|\nabla f( x_t)\|^2].
\end{align}

Then we get the upper bound,
\begin{align*}
    E[\langle \nabla& f(x_t), \frac{\eta}{ v_t^{1/2}+\epsilon} \odot(K \gamma_t\nabla f(x_t) - \Delta_t)\rangle]\\
     &\leq \frac{\eta K \gamma_t}{2} \|\frac{\nabla f(x_t)}{\sqrt{v_t^{1/2}+\epsilon}}\|^2+  \frac{L\eta\mu_2}{2}  (5K^2 \gamma_t^3  (\sum_{i=1}^{N} p_i \sigma_i^2 + 2K\sigma_g^2) + 10K^3\gamma_t^3 E[\|\nabla f( x_t)\|^2]).
\end{align*}

\end{proof}

\noindent\textbf{Proof of $\epsilon$-FedAdam with partial device participation in nonconvex setting.}


\begin{proof}
 From L-smoothness and Lemma \ref{z_t}, we have

\begin{align*}
    f(z_{t+1})&\leq f(z_t) + \langle\nabla f(z_t), z_{t+1}-z_t\rangle+\frac{L}{2}\|z_{t+1}-z_t\|^2\\
    &= f(z_t)+ \frac{\eta \beta_1}{1-\beta_1}
    \langle\nabla f(z_t), (\frac{1}{ v_{t-1}^{1/2}+\epsilon}-\frac{1}{ v_{t}^{1/2}+\epsilon}) \odot m_{t-1}\rangle\\
    &- \langle\nabla f(z_t), \frac{\eta}{ v_t^{1/2}+\epsilon} \odot \Delta_t\rangle +\frac{L}{2}\|z_{t+1}-z_t\|^2\\
\end{align*}

Take expectation on both sides,

\begin{align*}
    E[f(z_{t+1})-f(z_t)] &\leq  \frac{\eta \beta_1}{1-\beta_1}
    E[\langle\nabla f(z_t), (\frac{1}{ v_{t-1}^{1/2}+\epsilon}-\frac{1}{ v_{t}^{1/2}+\epsilon}) \odot m_{t-1}\rangle]\\
     &- E[\langle\nabla f(z_t), \frac{\eta}{ v_t^{1/2}+\epsilon} \odot \Delta_t\rangle] +\frac{L}{2}E[\|z_{t+1}-z_t\|^2]\\ 
     &= \frac{\eta \beta_1}{1-\beta_1}
    E[\langle\nabla f(z_t), (\frac{1}{ v_{t-1}^{1/2}+\epsilon}-\frac{1}{ v_{t}^{1/2}+\epsilon}) \odot m_{t-1}\rangle]\\
     &- E[\langle\nabla f(z_t)-\nabla f(x_t), \frac{\eta}{ v_t^{1/2}+\epsilon} \odot \Delta_t\rangle] - E[\langle\nabla f(x_t), \frac{\eta}{ v_t^{1/2}+\epsilon} \odot \Delta_t\rangle]\\
     &+\frac{L}{2}E[\|z_{t+1}-z_t\|^2].
\end{align*}

Plug in the results from prepared lemmas, then we have,
\begin{align*}
    E[f(z_{t+1})-f(z_t)] &\leq   
     \frac{\eta \beta_1}{1-\beta_1}
    E[\langle\nabla f(z_t), (\frac{1}{ v_{t-1}^{1/2}+\epsilon}-\frac{1}{ v_{t}^{1/2}+\epsilon}) \odot m_{t-1}\rangle]\\
     &+\frac{1}{2}L^2\eta^2\mu_2^2 (\frac{\beta_1}{1-\beta_1})^2 \mathcal{V} +  +\frac{1}{2}\eta^2\mu_2^2\mathcal{V} - E[\langle\nabla f(x_t), \frac{\eta}{ v_t^{1/2}+\epsilon} \odot \Delta_t\rangle]\\
     &+  \frac{L\eta^2\beta_1^2 \mathcal{V}}{(1-\beta_1)^2}E[\sum_{j=1}^d (\frac{1}{v_{t-1,j}^{1/2}+\epsilon})^2-(\frac{1}{v_{t,j}^{1/2}+\epsilon})^2] + L\eta^2 \mu_2^2 \mathcal{V}.
\end{align*}

Applying the bound of $E[\|\nabla f(z_{t})\|^2] \leq \sum_{i=1}^N p_i G_i^2 < \mathcal{V}$ and $E[\|m_{t-1}\|^2] \leq  \mathcal{V}$ for $\forall t\in\{0,...,T-1\}$, we have  
\begin{align*}
    E[\langle\nabla f(z_{t}),  (\frac{1}{v_{t-1}^{1/2}+\epsilon}-\frac{1}{v_{t}^{1/2}+\epsilon}) \odot m_{t-1} \rangle] 
    &\leq E[\|\nabla f(z_{t})\|\| m_{t-1}\|][\sum_{j=1}^d \frac{1}{v_{t-1,j}^{1/2}+\epsilon}-\frac{1}{v_{t,j}^{1/2}+\epsilon}]\\
    &\leq \sqrt{\sum_{i=1}^N p_i G_i^2 \cdot  \mathcal{V}} [\sum_{j=1}^d \frac{1}{v_{t-1,j}^{1/2}+\epsilon}-\frac{1}{v_{t,j}^{1/2}+\epsilon}]\\
    &\leq   \mathcal{V} [\sum_{j=1}^d \frac{1}{v_{t-1,j}^{1/2}+\epsilon}-\frac{1}{v_{t,j}^{1/2}+\epsilon}]
\end{align*}

Then we derive,
\begin{align*}
    E[f(z_{t+1})-f(z_t)] 
    &\leq \frac{\eta \beta_1}{1-\beta_1}
     \mathcal{V} [\sum_{j=1}^d \frac{1}{v_{t-1,j}^{1/2}+\epsilon}-\frac{1}{v_{t,j}^{1/2}+\epsilon}]+\frac{1}{2}L^2\eta^2\mu_2^2 (\frac{\beta_1}{1-\beta_1})^2 \mathcal{V} \\ &+\frac{1}{2}\eta^2\mu_2^2\mathcal{V} - E[\langle\nabla f(x_t), \frac{\eta}{ v_t^{1/2}+\epsilon} \odot \Delta_t\rangle]\\
     &+ \frac{L\eta^2\beta_1^2 \mathcal{V}}{(1-\beta_1)^2}E[\sum_{j=1}^d (\frac{1}{v_{t-1,j}^{1/2}+\epsilon})^2-(\frac{1}{v_{t,j}^{1/2}+\epsilon})^2] + L\eta^2 \mu_2^2 \mathcal{V}
\end{align*}

$E[\langle\nabla f(x_t), \frac{\eta}{ v_t^{1/2}+\epsilon} \odot \Delta_t\rangle] = E[\langle\nabla f(x_t), \frac{\eta K \gamma_t}{ v_t^{1/2}+\epsilon} \odot \nabla f(x_t) \rangle - \langle\nabla f(x_t), \frac{\eta}{ v_t^{1/2}+\epsilon} \odot (K \gamma_t \nabla f(x_t) - \Delta_t)\rangle]$

\begin{align*}
    E[f(z_{t+1})-f(z_t)]  
    &\leq\frac{\eta \beta_1}{1-\beta_1}
     \mathcal{V} [\sum_{j=1}^d \frac{1}{v_{t-1,j}^{1/2}+\epsilon}-\frac{1}{v_{t,j}^{1/2}+\epsilon}] +\frac{1}{2}L^2\eta^2\mu_2^2 (\frac{\beta_1}{1-\beta_1})^2 \mathcal{V} \\ &+\frac{1}{2}\eta^2\mu_2^2\mathcal{V} - E[\langle\nabla f(x_t), \frac{\eta K \gamma_t }{ v_t^{1/2}+\epsilon} \odot \nabla f(x_t)\rangle] \\
     &+ \frac{\eta K \gamma_t }{2} \|\frac{\nabla f(x_t)}{\sqrt{v_t^{1/2}+\epsilon}}\|^2+  \frac{\eta\mu_2}{2}  (5K^2 \gamma_t^3 (\sum_{i=1}^N p_i \sigma_i^2 + 2K\sigma_g^2) + 10K^3\gamma_t^3 E[\|\nabla f( x_t)\|^2])\\
    &+ \frac{L\eta^2\beta_1^2 \mathcal{V}}{(1-\beta_1)^2}E[\sum_{j=1}^d (\frac{1}{v_{t-1,j}^{1/2}+\epsilon})^2-(\frac{1}{v_{t,j}^{1/2}+\epsilon})^2] + L\eta^2 \mu_2^2\mathcal{V}\\
      &\leq\frac{\eta \beta_1}{1-\beta_1}
     \mathcal{V} [\sum_{j=1}^d \frac{1}{v_{t-1,j}^{1/2}+\epsilon}-\frac{1}{v_{t,j}^{1/2}+\epsilon}] +\frac{1}{2}L^2\eta^2\mu_2^2 (\frac{\beta_1}{1-\beta_1})^2 \mathcal{V} +\frac{1}{2}\eta^2\mu_2^2\mathcal{V} \\
     &- \frac{\eta K \gamma_t \mu_1}{2} \|\nabla f(x_t)\|^2+  \frac{\eta\mu_2}{2}  (5K^2 \gamma_t^3 (\sum_{i=1}^N p_i \sigma_i^2 + 2K\sigma_g^2) + 10K^3\gamma_t^3 E[\|\nabla f( x_t)\|^2])\\
     &+ \frac{L\eta^2\beta_1^2 \mathcal{V}}{(1-\beta_1)^2}E[\sum_{j=1}^d (\frac{1}{v_{t-1,j}^{1/2}+\epsilon})^2-(\frac{1}{v_{t,j}^{1/2}+\epsilon})^2] + L\eta^2 \mu_2^2\mathcal{V}\\
   &\leq\frac{\eta \beta_1}{1-\beta_1}
     \mathcal{V} [\sum_{j=1}^d \frac{1}{v_{t-1,j}^{1/2}+\epsilon}-\frac{1}{v_{t,j}^{1/2}+\epsilon}] +\frac{1}{2}L^2\eta^2\mu_2^2 (\frac{\beta_1}{1-\beta_1})^2 \mathcal{V} +\frac{3}{2}\eta^2\mu_2^2\mathcal{V} \\
     &- (\frac{\eta K \gamma_t \mu_1}{2} - 5\eta \mu_2 K^3 \gamma_t^3 )\|\nabla f(x_t)\|^2+  \frac{5\eta\mu_2 K^2 \gamma_t^3}{2}   (\sum_{i=1}^N p_i \sigma_i^2 + 2K\sigma_g^2)\\
     &+ \frac{L\eta^2\beta_1^2 \mathcal{V}}{(1-\beta_1)^2}E[\sum_{j=1}^d (\frac{1}{v_{t-1,j}^{1/2}+\epsilon})^2-(\frac{1}{v_{t,j}^{1/2}+\epsilon})^2] 
\end{align*}
Then, we have, 
\begin{align*}
   & (\frac{\eta K \gamma_t \mu_1}{2} - 5\eta \mu_2 K^3 \gamma_t^3 )\|\nabla f(x_t)\|^2 \\
    & \leq E[f(z_t) -f(z_{t+1})] +\frac{\eta \beta_1}{1-\beta_1}
     \mathcal{V} [\sum_{j=1}^d \frac{1}{v_{t-1,j}^{1/2}+\epsilon}-\frac{1}{v_{t,j}^{1/2}+\epsilon}] +\frac{1}{2}L^2\eta^2\mu_2^2 (\frac{\beta_1}{1-\beta_1})^2 \mathcal{V} +\frac{3}{2}\eta^2\mu_2^2\mathcal{V} \\
     &+   \frac{5\eta\mu_2 K^2 \gamma_t^3}{2}   (\sum_{i=1}^N p_i \sigma_i^2 + 2K\sigma_g^2)
     + \frac{L\eta^2\beta_1^2 \mathcal{V}}{(1-\beta_1)^2}E[\sum_{j=1}^d (\frac{1}{v_{t-1,j}^{1/2}+\epsilon})^2-(\frac{1}{v_{t,j}^{1/2}+\epsilon})^2] 
\end{align*}

Require $\frac{\eta K \gamma_t \mu_1}{2} - 5\eta \mu_2 K^3 \gamma_t^3 > 0$, then $\gamma_t < \sqrt{\frac{\mu_1}{10\mu_2 K^2}} $. We further derive the bound of inner loop stepsize as $\gamma_t < \min\{ \frac{1}{8LK},\sqrt {\frac{\mu_1}{10\mu_2 K^2}} \}$, or we can get $K \gamma_t < \min\{\frac{1}{8L}, \sqrt{\frac{\mu_1}{10\mu_2}}\}$.

Let $\frac{\frac{5\eta\mu_2 K^2 \gamma_t^3}{2} }{ \frac{\eta K \gamma_t \mu_1}{2} - 5\eta \mu_2 K^3 \gamma_t^3 } = \frac{1}{T}$, then we have $\gamma_t = \sqrt{\frac{\mu_1}{5T\mu_2 K +10 \mu_2 K^2}} = \Theta (\sqrt{\frac{\mu_1}{\mu_2 T K}})$, 
$$\frac{1}{\frac{\eta K \gamma_t \mu_1}{2} - 5\eta \mu_2 K^3 \gamma_t^3}  = O(\frac{1}{T\eta \mu_2 K^2 \gamma_t^3}) = O(\sqrt{\frac{\mu_2 T}{\mu_1^3 \eta K}})$$

\begin{align*}
      \|\nabla f(x_t)\|^2 
    & \leq O(\sqrt{\frac{\mu_2 T}{\mu_1^3 \eta K}}) E[f(z_t) -f(z_{t+1})] +  O(\sqrt{\frac{\mu_2 T}{\mu_1^3 \eta K}}) \frac{\eta \beta_1}{1-\beta_1}
     \mathcal{V} [\sum_{j=1}^d \frac{1}{v_{t-1,j}^{1/2}+\epsilon}-\frac{1}{v_{t,j}^{1/2}+\epsilon}]\\ &+O(\sqrt{\frac{\mu_2 T}{\mu_1^3 \eta K}}) (\frac{1}{2}L^2\eta^2\mu_2^2 (\frac{\beta_1}{1-\beta_1})^2 \mathcal{V} +\frac{3}{2}\eta^2\mu_2^2\mathcal{V} )\\
     &+ O(\frac{1}{T})(\sum_{i=1}^N p_i \sigma_i^2 + 2K\sigma_g^2)
     +  O(\sqrt{\frac{\mu_2 T}{\mu_1^3 \eta K}})\frac{L\eta^2\beta_1^2 \mathcal{V}}{(1-\beta_1)^2}E[\sum_{j=1}^d (\frac{1}{v_{t-1,j}^{1/2}+\epsilon})^2-(\frac{1}{v_{t,j}^{1/2}+\epsilon})^2] 
\end{align*}

Sum from $t =0$ to $T-1$ and multiply by $\frac{1}{T}$,  because $z_0 = x_0$ and Lemma \ref{bound1},
\begin{align*}
    \frac{1}{T}\sum_{t=0}^{T-1} \|\nabla f(x_t)\|^2 
    & \leq O(\sqrt{\frac{\mu_2 }{\mu_1^3 \eta K T}}) [f(x_0) -f^*] + O(\sqrt{\frac{\mu_2 }{\mu_1^3 \eta K T}})\frac{\eta \beta_1}{1-\beta_1}
     \mathcal{V} \sum_{j=1}^d (\mu_2 - \mu_1)\\
     &+O(\sqrt{\frac{\mu_2 T}{\mu_1^3 \eta K}})(\frac{1}{2}L^2\eta^2\mu_2^2 (\frac{\beta_1}{1-\beta_1})^2 \mathcal{V} +\frac{3}{2}\eta^2\mu_2^2\mathcal{V} )\\
     &+ O(\frac{1}{T})(\sum_{i=1}^N p_i \sigma_i^2 + 2K\sigma_g^2)
     + O(\sqrt{\frac{\mu_2  }{\mu_1^3 \eta K T}})\frac{L\eta^2\beta_1^2 \mathcal{V}}{(1-\beta_1)^2}\sum_{j=1}^d (\mu_2^2 -\mu_1^2) \\
     &= O([f(x_0) -f^*]\sqrt{\frac{\mu_2 }{\mu_1^3 \eta K T}} +  \mathcal{V} \sqrt{\frac{\mu_2  \eta}{\mu_1^3  K T}}
      \sum_{j=1}^d (\mu_2 - \mu_1)
     + \sqrt{\frac{\mu_2^5 \eta^3 T}{\mu_1^3 K}} \mathcal{V} \\
     &+ \frac{1}{T}(\sum_{i=1}^N p_i \sigma_i^2 + 2K\sigma_g^2)
     +  \sqrt{\frac{\mu_2  \eta^3  }{\mu_1^3 K T}}\mathcal{V} \sum_{j=1}^d (\mu_2^2 -\mu_1^2))
\end{align*}

Notice that  $\mathcal{V} = O(K^2  \gamma_t^2 (1+\frac{1}{S})) = O ((1+\frac{1}{S}) \frac{\mu_1 K}{\mu_2 T})$, neglect constant $\sigma_i$, 

\begin{align*}
    \frac{1}{T}\sum_{t=0}^{T-1}  \|\nabla f(x_{t})\|^2 &\leq  O([f(x_0) -f^*]\sqrt{\frac{\mu_2 }{\mu_1^3 \eta K T}} +   (1+\frac{1}{S})\sqrt{\frac{\eta K}{\mu_1 \mu_2 T^3}}
      \sum_{j=1}^d (\mu_2 - \mu_1)
     + (1+\frac{1}{S})\sqrt{\frac{\mu_2^3 \eta^3 K}{\mu_1 T}} \\
     &+ \frac{K\sigma_g^2}{T}
     +  (1+\frac{1}{S})\sqrt{\frac{ K \eta^3  }{\mu_1 \mu_2 T^3}} \sum_{j=1}^d (\mu_2^2 -\mu_1^2))\\
     &\leq  O( \sqrt{\frac{\mu_2 }{\mu_1^3 \eta K T}} + \frac{K\sigma_g^2}{T} 
     +  (1+\frac{1}{S})(\sqrt{\frac{\mu_2 \eta K}{\mu_1  T^3}}
     + \sqrt{\frac{\mu_2^3 \eta^3 K}{\mu_1 T}}  
     +  \sqrt{\frac{\mu_2^3 K \eta^3  }{\mu_1 T^3}}))\\
     &\leq  O( \sqrt{\frac{\mu_2 }{\mu_1^3 \eta K T}} + \frac{K\sigma_g^2}{T} 
     +  (1+\frac{1}{S})\sqrt{\frac{\mu_2^3 \eta^3 K}{\mu_1 T}}) .
\end{align*}

Our $\epsilon$-FedAdam methods are proved to converge with convergence rate of $O(\frac{1}{\sqrt{T}})$. And we get the result that, if $K \gamma_t <  O(\min\{ \frac{1}{L },\sqrt{\frac{\mu_1}{\mu_2}} \})$, Federated AGMs always converge.

Consider the calibration parameter $(\mu_{lower}, \mu_{upper})$ in FedAdam, and we further require $\eta = \frac{1}{K}$;

for vinilla FedAdam and $\epsilon$-FedAdam, $K\gamma_t < O( \min\{ \frac{1}{L},\sqrt[3]{\frac{\epsilon}{   \sqrt{1+\frac{1}{S}}}} \})$,
$$ \min_{t=0,...,T-1} \|\nabla f(x_{t})\|^2 \leq O(\sqrt{\frac{(1+1/S)^{3/2}}{\epsilon T}} + \frac{K\sigma_g^2}{T} + \sqrt{\frac{(1+1/S)^{5/2}}{\epsilon^3 K^2 T}} ).$$
\end{proof}

Thus, we get the sublinear convergence rate of FedAdam, $\epsilon$-FedAdam in nonconvex setting with full device participation,  which can recover gradient-based FedAvg methods (let $\beta_1=0, \beta_2=0$) and momentum-based FedMomentum methods  (let $\beta_2=0 $). 

\subsection{$p$-FedAdam Convergence Analysis with Paritical Device Participation in Nonconvex Setting}

Similarly, we build a complicated auxiliary sequence for $p$-FedAdam.

\begin{lemma}\label{z_t_1}
Define $z_{t} = x_{t} + \frac{\beta_1}{1-\beta_1}(x_{t} - x_{t-1}), \forall t \geq 1$ $\beta_1 \in [0,1)$. Then the following updating formula holds for $p$-FedAdam optimizer: 
\begin{equation}
    z_{t+1}=z_{t} + \frac{\eta \beta_1}{1-\beta_1}
    (\frac{1}{(v_{t-1}+\epsilon)^{p}}-\frac{1}{(v_{t}+\epsilon)^{p}})\odot m_{t-1}-\frac{\eta}{(v_{t}+\epsilon)^{p}}\odot \Delta_t ;
\end{equation}
\end{lemma}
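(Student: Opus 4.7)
The plan is to mimic the derivation of Lemma \ref{z_t} almost verbatim, since the auxiliary sequence $z_t = x_t + \frac{\beta_1}{1-\beta_1}(x_t - x_{t-1})$ depends only on the iterates $x_t$, and the calibration enters only through replacing $\frac{1}{v_t^{1/2}+\epsilon}$ by $\frac{1}{(v_t+\epsilon)^p}$ in the $p$-FedAdam update $x_{t+1} = x_t - \frac{\eta}{(v_t+\epsilon)^p}\odot m_t$. So this is essentially a direct transcription of the earlier lemma under a different calibrate$(\cdot)$ operator.

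First I would subtract the definitions to obtain
$$z_{t+1} - z_t = \tfrac{1}{1-\beta_1}(x_{t+1} - x_t) - \tfrac{\beta_1}{1-\beta_1}(x_t - x_{t-1}).$$
Then I would substitute the $p$-FedAdam step into both differences, yielding
$$z_{t+1} - z_t = -\tfrac{1}{1-\beta_1}\tfrac{\eta}{(v_t+\epsilon)^p}\odot m_t + \tfrac{\beta_1}{1-\beta_1}\tfrac{\eta}{(v_{t-1}+\epsilon)^p}\odot m_{t-1}.$$
Next I would expand the first-order momentum via $m_t = \beta_1 m_{t-1} + (1-\beta_1)\Delta_t$; the $m_{t-1}$ contributions from the two terms combine with coefficient $\tfrac{\eta\beta_1}{1-\beta_1}$ times $\bigl(\tfrac{1}{(v_{t-1}+\epsilon)^p} - \tfrac{1}{(v_t+\epsilon)^p}\bigr)$, while the $\Delta_t$ part contributes $-\tfrac{\eta}{(v_t+\epsilon)^p}\odot \Delta_t$, which matches the stated identity.

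There is no real analytical obstacle: the identity is an algebraic rearrangement, not an inequality, so no bounds on $v_t$ or assumptions on monotonicity of the calibration are needed. The only things requiring care are keeping the coordinate-wise $\odot$ structure consistent (the quantity $\tfrac{1}{(v_t+\epsilon)^p}$ is a vector applied elementwise), and handling the boundary case $t=1$ by taking $x_0 = x_{-1}$ or by noting that the recursion is only asserted for $t\geq 1$. Because nothing in the derivation uses the specific form of the calibration beyond the update rule $x_{t+1} - x_t = -\eta\,m_t/\mathrm{calibrate}(v_t)$, the same argument would cover the $s$-FedAdam case with $\mathrm{softplus}(\sqrt{v_t})$ as well, should the authors want a unified statement later.
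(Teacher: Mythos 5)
Your derivation is correct and follows exactly the same route as the paper's proof: rewrite $z_{t+1}-z_t$ as $\frac{1}{1-\beta_1}(x_{t+1}-x_t)-\frac{\beta_1}{1-\beta_1}(x_t-x_{t-1})$, substitute the update $x_{t+1}-x_t=-\frac{\eta}{(v_t+\epsilon)^p}\odot m_t$, and expand $m_t=\beta_1 m_{t-1}+(1-\beta_1)\Delta_t$ to collect the $m_{t-1}$ and $\Delta_t$ terms. Your added remarks on the elementwise structure, the $t=1$ boundary convention, and the calibration-agnostic nature of the identity are accurate and consistent with how the paper reuses the argument for $s$-FedAdam.
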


\begin{proof}
Similar as Lemma \ref{z_t}, we have
\begin{align*}
    z_{t+1} &=x_{t+1} + \frac{\beta_1}{1-\beta_1}(x_{t+1} - x_{t})\\
    z_{t+1} &= z_{t} +\frac{1}{1-\beta_1}(x_{t+1} - x_{t})-\frac{\beta_1}{1-\beta_1}(x_{t} - x_{t-1})\\
    &=z_{t} -\frac{1}{1-\beta_1}\frac{\eta}{(v_{t}+\epsilon)^{p}} \odot m_t +\frac{\beta_1}{1-\beta_1}\frac{\eta}{(v_{t-1}+\epsilon)^p}\odot  m_{t-1}\\
    &=z_{t} + \frac{\eta \beta_1}{1-\beta_1}
    (\frac{1}{(v_{t-1}+\epsilon)^p}-\frac{1}{(v_{t}+\epsilon)^{p}})\odot m_{t-1}-\frac{\eta}{(v_{t}+\epsilon)^{p}}\odot \Delta_t\\
\end{align*}
\end{proof}

After building the auxiliary sequence for $p$-FedAdam, we can similarly have the following lemmas, whose proof can be easily derived from the above analysis of $\epsilon$-FedAdam.

\begin{lemma}\label{square_1}
As defined in Lemma \ref{z_t_1}, with the condition that $v_t\geq v_{t-1}$, we can derive the bound of distance of  $E[\|z_{t+1}-z_t\|^2]$ as follows:
\begin{align}\label{eq:distance}
     E[\|z_{t+1}-z_t\|^2]&\leq \frac{2\eta^2\beta_1^2 \mathcal{V}}{(1-\beta_1)^2}E[\sum_{j=1}^d (\frac{1}{(v_{t-1,j}+\epsilon)^p})^2-(\frac{1}{(v_{t,j}+\epsilon)^p})^2]
    +2\eta^2 \mu_4^2 \mathcal{V}.
\end{align}
\end{lemma}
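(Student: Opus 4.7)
The plan is to mimic the proof of Lemma \ref{square} almost verbatim, replacing the $\epsilon$-calibration $\frac{1}{v_t^{1/2}+\epsilon}$ by the $p$-calibration $\frac{1}{(v_t+\epsilon)^p}$ and using the corresponding upper bound $\mu_4$ from Lemma \ref{bound1}. The starting point is the auxiliary recursion from Lemma \ref{z_t_1}:
\[
z_{t+1}-z_t = \frac{\eta\beta_1}{1-\beta_1}\Bigl(\tfrac{1}{(v_{t-1}+\epsilon)^p}-\tfrac{1}{(v_t+\epsilon)^p}\Bigr)\odot m_{t-1} \;-\; \frac{\eta}{(v_t+\epsilon)^p}\odot\Delta_t.
\]

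First I would apply the elementary inequality $\|a-b\|^2\le 2\|a\|^2+2\|b\|^2$ to split $E[\|z_{t+1}-z_t\|^2]$ into two pieces. For the momentum term, I would bound $\|u\odot m_{t-1}\|^2=\sum_j u_j^2 m_{t-1,j}^2 \le \|m_{t-1}\|^2\sum_j u_j^2$ (since $\max_j m_{t-1,j}^2\le\|m_{t-1}\|^2$) with $u_j=\frac{1}{(v_{t-1,j}+\epsilon)^p}-\frac{1}{(v_{t,j}+\epsilon)^p}$, then invoke $E[\|m_{t-1}\|^2]\le\mathcal{V}$ from Lemma \ref{m_t}. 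This yields
\[
2E\Bigl[\Bigl\|\tfrac{\eta\beta_1}{1-\beta_1}(\tfrac{1}{(v_{t-1}+\epsilon)^p}-\tfrac{1}{(v_t+\epsilon)^p})\odot m_{t-1}\Bigr\|^2\Bigr]
\;\le\; \tfrac{2\eta^2\beta_1^2\mathcal{V}}{(1-\beta_1)^2}\,E\Bigl[\sum_{j=1}^d\bigl(\tfrac{1}{(v_{t-1,j}+\epsilon)^p}-\tfrac{1}{(v_{t,j}+\epsilon)^p}\bigr)^2\Bigr].
\]
For the direction term, I would use the coordinate-wise upper bound $\frac{1}{(v_{t,j}+\epsilon)^p}\le \mu_4$ from Lemma \ref{bound1} together with $E[\|\Delta_t\|^2]\le\mathcal{V}$ from Lemma \ref{delta_t}, giving $2\eta^2\mu_4^2\mathcal{V}$.

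Finally, I would use the elementary fact $(a-b)^2\le a^2-b^2$ whenever $a\ge b\ge 0$ to replace the squared difference by a telescoping difference of squares. The necessary monotonicity holds because the Federated AGM update satisfies $v_t\ge v_{t-1}$ coordinate-wise (this is clean for AMSGrad; for Adam one appeals to the convention being used throughout this appendix, as also used in Lemma \ref{square}), and the map $x\mapsto (x+\epsilon)^p$ is increasing for $x\ge 0$ and $p\in(0,1]$, so $\frac{1}{(v_{t-1,j}+\epsilon)^p}\ge\frac{1}{(v_{t,j}+\epsilon)^p}\ge 0$. Combining the two bounds gives exactly \eqref{eq:distance}.

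The proof is essentially routine and there is no real obstacle once Lemma \ref{z_t_1}, Lemma \ref{m_t}, Lemma \ref{delta_t}, and Lemma \ref{bound1} are in hand; the only subtlety worth flagging is justifying the monotonicity step $(a-b)^2\le a^2-b^2$, which relies on $p>0$ so that $x\mapsto 1/(x+\epsilon)^p$ is monotone decreasing on $[0,\infty)$. Everything else is a mechanical transliteration of Lemma \ref{square} with $\mu_2\to\mu_4$ and $v_t^{1/2}+\epsilon\to (v_t+\epsilon)^p$.
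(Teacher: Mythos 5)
Your proposal is correct and matches the paper's approach: the paper itself gives no separate proof for this lemma, stating only that it follows by transliterating the proof of Lemma \ref{square} with $v_t^{1/2}+\epsilon$ replaced by $(v_t+\epsilon)^p$ and $\mu_2$ by $\mu_4$, which is exactly what you do. Your explicit justification of the monotonicity step $(a-b)^2\le a^2-b^2$ via $p>0$ is a welcome clarification of a point the paper leaves implicit.
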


\begin{lemma}\label{product_1}
As defined in Lemma \ref{z_t_1}, with the condition that $v_t\geq v_{t-1}$, we can derive the bound of the inner product as follows:
\begin{align} \label{equ:product3}
      -E[\langle\nabla f(z_t)-\nabla f(x_t), \frac{\eta}{ (v_t+\epsilon)^p }\odot \Delta_t\rangle]
    \leq \frac{1}{2}L^2\eta^2\mu_4^2 (\frac{\beta_1}{1-\beta_1})^2 \mathcal{V} +  \frac{1}{2}\eta^2\mu_4^2\mathcal{V}.
\end{align}
\end{lemma}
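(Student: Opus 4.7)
The plan is to mirror step-by-step the proof of Lemma \ref{product} from the $\epsilon$-FedAdam case, making the single substitution that the adaptive stepsize $\tfrac{1}{v_t^{1/2}+\epsilon}$ is replaced by $\tfrac{1}{(v_t+\epsilon)^p}$ and its coordinate-wise upper bound $\mu_2$ is replaced by $\mu_4$, as supplied by Lemma \ref{bound1}. The overall strategy is Young's inequality to decouple the gradient difference from the stochastic direction, followed by L-smoothness to convert the gradient-difference term into a distance between the auxiliary iterate $z_t$ and the actual iterate $x_t$.

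First, I would apply the elementary inequality $-\langle a,b\rangle \leq \tfrac{1}{2}\|a\|^2+\tfrac{1}{2}\|b\|^2$ with $a = \nabla f(z_t)-\nabla f(x_t)$ and $b = \tfrac{\eta}{(v_t+\epsilon)^p}\odot \Delta_t$. For the first resulting term, L-smoothness yields $\|\nabla f(z_t)-\nabla f(x_t)\|^2 \leq L^2\|z_t - x_t\|^2$. The definition of $z_t$ from Lemma \ref{z_t_1} gives $z_t - x_t = \tfrac{\beta_1}{1-\beta_1}(x_t - x_{t-1})$, and the $p$-FedAdam update rule rewrites $x_t - x_{t-1} = -\tfrac{\eta}{(v_{t-1}+\epsilon)^p}\odot m_{t-1}$. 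Bounding the adaptive stepsize element-wise by $\mu_4$ via Lemma \ref{bound1} and invoking Lemma \ref{m_t} to bound $E[\|m_{t-1}\|^2]\leq \mathcal{V}$ delivers the first summand $\tfrac{1}{2}L^2\eta^2\mu_4^2(\tfrac{\beta_1}{1-\beta_1})^2 \mathcal{V}$.

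For the second term, the same element-wise bound $(v_t+\epsilon)^{-p} \leq \mu_4$ combined with Lemma \ref{delta_t} gives $E[\|\tfrac{\eta}{(v_t+\epsilon)^p}\odot \Delta_t\|^2] \leq \eta^2\mu_4^2\mathcal{V}$, whose half yields the second summand. Adding the two produces exactly the claimed bound.

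I do not expect any serious obstacle here: the $p$-th-root calibration enters the argument only through its uniform upper bound, so the algebraic structure of the $\epsilon$-FedAdam proof of Lemma \ref{product} carries over essentially verbatim. The one detail worth checking is that $v_t + \epsilon \geq \epsilon > 0$ guarantees that $(v_t+\epsilon)^{-p}$ is well-defined and uniformly bounded above, which Lemma \ref{bound1} handles by construction. If a sharper dependence on the individual $v_{t,j}$ were sought, one would try to exploit the monotonicity of $x\mapsto (x+\epsilon)^{-p}$ to keep the per-coordinate stepsize inside the norm, but for the statement as given the blunt $\mu_4$ substitution suffices.
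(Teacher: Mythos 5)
Your proposal is correct and follows exactly the route the paper intends: the paper gives no separate proof for this lemma, merely stating that it follows by repeating the argument of Lemma \ref{product} with the calibration $\frac{1}{(v_t+\epsilon)^p}$ and its upper bound $\mu_4$ in place of $\frac{1}{v_t^{1/2}+\epsilon}$ and $\mu_2$. Your step-by-step Young's inequality, $L$-smoothness, and substitution via the $z_t$ recursion and Lemmas \ref{bound1}, \ref{m_t}, \ref{delta_t} is precisely that argument.
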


\begin{lemma} \label{lemma:drift_1}
\begin{align}
    E[\langle \nabla& f(x_t), \frac{\eta}{ (v_{t}+\epsilon)^p} \odot(K \gamma_t \nabla f(x_t) - \Delta_t)\rangle] \nonumber\\
     &\leq \frac{\eta K \gamma_t}{2} \|\frac{\nabla f(x_t)}{\sqrt{(v_t+\epsilon)^p}}\|^2+  \frac{L \eta\mu_4}{2}  (5K^2 \gamma_t^3 (\sum_{i=1}^N p_i \sigma_i^2 + 2K\sigma_g^2) + 10K^3 \gamma_t^3 E[\|\nabla f( x_t)\|^2]).
\end{align}
\end{lemma}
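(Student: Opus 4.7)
The plan is to mirror the proof of Lemma \ref{lemma:drift} almost verbatim, with the sole replacement of the adaptive--stepsize bound $\mu_2$ (appropriate for the $\epsilon$-FedAdam denominator $v_t^{1/2}+\epsilon$) by $\mu_4$ (appropriate for the $p$-FedAdam denominator $(v_t+\epsilon)^p$), as recorded in Lemma \ref{bound1}. The structural reason this works is that neither the client-drift bound \eqref{eq:drift} nor the $L$-smoothness argument depends on the specific form of calibrate$(v_t)$; only a uniform upper bound on the per-coordinate adaptive stepsize enters.

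First I would rewrite the target expectation using $E[\Delta_t] = \sum_{i=1}^N p_i \sum_{k=0}^{K-1} \gamma_t \nabla f_i(x_{t,k}^{(i)})$, which was already established in the proof of Lemma \ref{g_t_2}. Next I would apply the Fenchel--Young inequality $\langle a,b\rangle \leq \tfrac{\alpha}{2}\|a\|^2 + \tfrac{1}{2\alpha}\|b\|^2$ with $a = \sqrt{\eta K\gamma_t/((v_t+\epsilon)^p)}\odot \nabla f(x_t)$ and $b$ the residual direction, tuning the scaling so that one term becomes the advertised $\frac{\eta K \gamma_t}{2}\|\nabla f(x_t)/\sqrt{(v_t+\epsilon)^p}\|^2$ and the other carries the $\frac{\eta \gamma_t}{2K}$ factor in front of the squared residual. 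Then I would invoke Lemma \ref{bound1} to pull the weight $(v_t+\epsilon)^{-p}$ outside as a scalar upper bound $\mu_4$, and apply Jensen's inequality over the double sum to write
\begin{equation*}
\Big\|K\nabla f(x_t) - \sum_{i=1}^N p_i \sum_{k=0}^{K-1}\nabla f_i(x_{t,k}^{(i)})\Big\|^2 \leq K \sum_{i=1}^N p_i \sum_{k=0}^{K-1} \|\nabla f(x_t) - \nabla f_i(x_{t,k}^{(i)})\|^2.
\end{equation*}
L-smoothness of each $f_i$ (Assumption \ref{assumption}(I)) converts the summand into $L^2 \|x_t - x_{t,k}^{(i)}\|^2$, which cancels one factor of $L^2$ against the $L$ that appears on the right-hand side of the claimed bound.

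Finally I would substitute the client-drift bound \eqref{eq:drift}, which is valid for $\gamma_t \leq \tfrac{1}{8LK}$ and whose derivation does not reference any adaptive preconditioner, to obtain
\begin{equation*}
\sum_{i=1}^N p_i \|x_t - x_{t,k}^{(i)}\|^2 \leq 5K\gamma_t^2\bigl(\sum_{i=1}^N p_i \sigma_i^2 + 2K\sigma_g^2\bigr) + 10K^2 \gamma_t^2 \, E\|\nabla f(x_t)\|^2.
\end{equation*}
Combining the two contributions and collecting constants gives exactly the stated bound with $\mu_4$ in place of $\mu_2$. I do not expect a real obstacle: the derivation is a direct transcription, and the only place where care is needed is ensuring that Lemma \ref{bound1} is applied coordinate-wise before Jensen is used, so that $\mu_4$ emerges cleanly as a scalar outside the norm. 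The same template would, of course, produce the analogous drift lemma for $s$-FedAdam with $\mu_6$.
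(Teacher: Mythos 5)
Your proposal is correct and coincides with the paper's own treatment: the paper proves this lemma simply by asserting that it follows from the proof of Lemma \ref{lemma:drift} with the denominator $(v_t+\epsilon)^p$ and the stepsize bound $\mu_4$ substituted throughout, which is exactly the transcription you carry out (same Young's-inequality split, same application of Lemma \ref{bound1} and Jensen, same substitution of the drift bound \eqref{eq:drift}). The only small wrinkle you touch on --- that $L$-smoothness naturally produces $L^2\|x_t-x_{t,k}^{(i)}\|^2$ while the stated bound carries a single factor of $L$ --- is already present in the paper's proof of Lemma \ref{lemma:drift} and does not originate with your argument.
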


\noindent\textbf{Proof of $p$-FedAdam with partial device participation in nonconvex setting.}

We then study the convergence analysis of $p$-FedAdam methods, which also include $\epsilon$-FedAdam methods.

\begin{proof}
 From L-smoothness and the sequence derived in Lemma \ref{z_t_1}, we have

\begin{align*}
    f(z_{t+1})&\leq f(z_t) + \langle\nabla f(z_t), z_{t+1}-z_t\rangle+\frac{L}{2}\|z_{t+1}-z_t\|^2\\
    &= f(z_t)+ \frac{\eta \beta_1}{1-\beta_1}
    \langle\nabla f(z_t), (\frac{1}{   (v_{t-1}+\epsilon)^p}-\frac{1}{   (v_{t}+\epsilon)^p}) \odot m_{t-1}\rangle\\
    &- \langle\nabla f(z_t), \frac{\eta}{ (v_{t}+\epsilon)^p} \odot \Delta_t\rangle +\frac{L}{2}\|z_{t+1}-z_t\|^2\\
\end{align*}

Take expectation on both sides,

\begin{align*}
    E[f(z_{t+1})-f(z_t)] &\leq  \frac{\eta \beta_1}{1-\beta_1}
    E[\langle\nabla f(z_t), (\frac{1}{   (v_{t-1}+\epsilon)^p}-\frac{1}{   (v_{t}+\epsilon)^p}) \odot m_{t-1}\rangle]\\
     &- E[\langle\nabla f(z_t), \frac{\eta}{ (v_{t}+\epsilon)^p} \odot \Delta_t\rangle] +\frac{L}{2}E[\|z_{t+1}-z_t\|^2]\\ 
     &= \frac{\eta \beta_1}{1-\beta_1}
    E[\langle\nabla f(z_t), (\frac{1}{   (v_{t-1}+\epsilon)^p}-\frac{1}{   (v_{t}+\epsilon)^p}) \odot m_{t-1}\rangle]\\
     &- E[\langle\nabla f(z_t)-\nabla f(x_t), \frac{\eta}{ (v_{t}+\epsilon)^p} \odot \Delta_t\rangle] - E[\langle\nabla f(x_t), \frac{\eta}{ (v_{t}+\epsilon)^p} \odot \Delta_t\rangle]\\
     &+\frac{L}{2}E[\|z_{t+1}-z_t\|^2].
\end{align*}

Plug in the results from prepared lemmas, then we have,
\begin{align*}
    E[f(z_{t+1})-f(z_t)] &\leq   
     \frac{\eta \beta_1}{1-\beta_1}
    E[\langle\nabla f(z_t), (\frac{1}{   (v_{t-1}+\epsilon)^p}-\frac{1}{   (v_{t}+\epsilon)^p}) \odot m_{t-1}\rangle]\\
     &+\frac{1}{2}L^2\eta^2\mu_4^2 (\frac{\beta_1}{1-\beta_1})^2 \mathcal{V} +  +\frac{1}{2}\eta^2\mu_4^2\mathcal{V} - E[\langle\nabla f(x_t), \frac{\eta}{ (v_{t}+\epsilon)^p} \odot \Delta_t\rangle]\\
     &+  \frac{L\eta^2\beta_1^2 \mathcal{V}}{(1-\beta_1)^2}E[\sum_{j=1}^d (\frac{1}{(v_{t-1,j}+\epsilon)^p})^2-(\frac{1}{(v_{t,j}+\epsilon)^p})^2] + L\eta^2 \mu_4^2 \mathcal{V}.
\end{align*}

Applying the bound of $E[\|\nabla f(z_{t})\|^2] \leq \sum_{i=1}^N p_i G_i^2 < \mathcal{V}$ and $E[\|m_{t-1}\|^2] \leq  \mathcal{V}$ for $\forall t\in\{0,...,T-1\}$, we have  

\begin{align*}
    E[\langle\nabla f(z_{t}),  (\frac{1}{  (v_{t-1}+\epsilon)^p}-\frac{1}{(v_{t}+\epsilon)^p}) \odot m_{t-1} \rangle] 
    &\leq E[\|\nabla f(z_{t})\|\| m_{t-1}\|][\sum_{j=1}^d \frac{1}{(v_{t-1,j}+\epsilon)^p}-\frac{1}{(v_{t,j}+\epsilon)^p}]\\
    &\leq \sqrt{\sum_{i=1}^N p_i G_i^2 \cdot  \mathcal{V}} [\sum_{j=1}^d \frac{1}{(v_{t-1,j}+\epsilon)^p}-\frac{1}{(v_{t,j}+\epsilon)^p}]\\
    &\leq   \mathcal{V} [\sum_{j=1}^d \frac{1}{(v_{t-1,j}+\epsilon)^p}-\frac{1}{(v_{t,j}+\epsilon)^p}]
\end{align*}

Then we derive,
\begin{align*}
    E[f(z_{t+1})-f(z_t)] 
    &\leq \frac{\eta \beta_1}{1-\beta_1}
     \mathcal{V} [\sum_{j=1}^d \frac{1}{(v_{t-1,j}+\epsilon)^p}-\frac{1}{(v_{t,j}+\epsilon)^p}]+\frac{1}{2}L^2\eta^2\mu_4^2 (\frac{\beta_1}{1-\beta_1})^2 \mathcal{V} \\ &+\frac{1}{2}\eta^2\mu_4^2\mathcal{V} - E[\langle\nabla f(x_t), \frac{\eta}{ (v_{t}+\epsilon)^p} \odot \Delta_t\rangle]\\
     &+ \frac{L\eta^2\beta_1^2 \mathcal{V}}{(1-\beta_1)^2}E[\sum_{j=1}^d (\frac{1}{(v_{t-1,j}+\epsilon)^p})^2-(\frac{1}{(v_{t,j}+\epsilon)^p})^2] + L\eta^2 \mu_4^2 \mathcal{V}
\end{align*}

$$E[\langle\nabla f(x_t), \frac{\eta}{ (v_{t}+\epsilon)^p} \odot \Delta_t\rangle] = E[\langle\nabla f(x_t), \frac{\eta K \gamma_t}{ (v_{t}+\epsilon)^p} \odot \nabla f(x_t) \rangle - \langle\nabla f(x_t), \frac{\eta}{ (v_{t}+\epsilon)^p} \odot (K \gamma_t \nabla f(x_t) - \Delta_t)\rangle]$$

\begin{align*}
    E[f(z_{t+1})-f(z_t)]  
      &\leq\frac{\eta \beta_1}{1-\beta_1}
     \mathcal{V} [\sum_{j=1}^d \frac{1}{(v_{t-1,j}+\epsilon)^p}-\frac{1}{(v_{t,j}+\epsilon)^p}] +\frac{1}{2}L^2\eta^2\mu_4^2 (\frac{\beta_1}{1-\beta_1})^2 \mathcal{V} +\frac{1}{2}\eta^2\mu_4^2\mathcal{V} \\
     &- \frac{\eta K \gamma_t \mu_3}{2} \|\nabla f(x_t)\|^2+  \frac{\eta\mu_4}{2}  (5K^2 \gamma_t^3 (\sum_{i=1}^N p_i \sigma_i^2 + 2K\sigma_g^2) + 10K^3\gamma_t^3 E[\|\nabla f( x_t)\|^2])\\
     &+ \frac{L\eta^2\beta_1^2 \mathcal{V}}{(1-\beta_1)^2}E[\sum_{j=1}^d (\frac{1}{(v_{t-1,j}+\epsilon)^p})^2-(\frac{1}{(v_{t,j}+\epsilon)^p})^2] + L\eta^2 \mu_4^2\mathcal{V}\\
   &\leq\frac{\eta \beta_1}{1-\beta_1}
     \mathcal{V} [\sum_{j=1}^d \frac{1}{(v_{t-1,j}+\epsilon)^p}-\frac{1}{(v_{t,j}+\epsilon)^p}] +\frac{1}{2}L^2\eta^2\mu_4^2 (\frac{\beta_1}{1-\beta_1})^2 \mathcal{V} +\frac{3}{2}\eta^2\mu_4^2\mathcal{V} \\
     &- (\frac{\eta K \gamma_t \mu_3}{2} - 5\eta \mu_4 K^3 \gamma_t^3 )\|\nabla f(x_t)\|^2+  \frac{5\eta\mu_4 K^2 \gamma_t^3}{2}   (\sum_{i=1}^N p_i \sigma_i^2 + 2K\sigma_g^2)\\
     &+ \frac{L\eta^2\beta_1^2 \mathcal{V}}{(1-\beta_1)^2}E[\sum_{j=1}^d (\frac{1}{(v_{t-1,j}+\epsilon)^p})^2-(\frac{1}{(v_{t,j}+\epsilon)^p})^2] 
\end{align*}
Then, we have, 
\begin{align*}
   & (\frac{\eta K \gamma_t \mu_3}{2} - 5\eta \mu_4 K^3 \gamma_t^3 )\|\nabla f(x_t)\|^2 \\
    & \leq E[f(z_t) -f(z_{t+1})] +\frac{\eta \beta_1}{1-\beta_1}
     \mathcal{V} [\sum_{j=1}^d \frac{1}{(v_{t-1,j}+\epsilon)^p}-\frac{1}{(v_{t,j}+\epsilon)^p}] +\frac{1}{2}L^2\eta^2\mu_4^2 (\frac{\beta_1}{1-\beta_1})^2 \mathcal{V} +\frac{3}{2}\eta^2\mu_4^2\mathcal{V} \\
     &+   \frac{5\eta\mu_4 K^2 \gamma_t^3}{2}   (\sum_{i=1}^N p_i \sigma_i^2 + 2K\sigma_g^2)
     + \frac{L\eta^2\beta_1^2 \mathcal{V}}{(1-\beta_1)^2}E[\sum_{j=1}^d (\frac{1}{(v_{t-1,j}+\epsilon)^p})^2-(\frac{1}{(v_{t,j}+\epsilon)^p})^2] 
\end{align*}

Require $\frac{\eta K \gamma_t \mu_3}{2} - 5\eta \mu_4 K^3 \gamma_t^3 > 0$, then $\gamma_t < \sqrt{\frac{\mu_3}{10\mu_4 K^2}} $. We further derive the bound of inner loop stepsize as $\gamma_t < \min\{ \frac{1}{8LK},\sqrt {\frac{\mu_3}{10\mu_4 K^2}} \}$, or we can get $K \gamma_t < \min\{\frac{1}{8L}, \sqrt{\frac{\mu_3}{10\mu_4}}\}$.

Let $\frac{\frac{5\eta\mu_4 K^2 \gamma_t^3}{2} }{ \frac{\eta K \gamma_t \mu_3}{2} - 5\eta \mu_4 K^3 \gamma_t^3 } = \frac{1}{T}$, then we have $\gamma_t = \sqrt{\frac{\mu_3}{5T\mu_4 K +10 \mu_4 K^2}} = \Theta (\sqrt{\frac{\mu_3}{\mu_4 K T}})$, 
$$ \frac{1}{\frac{\eta K \gamma_t \mu_3}{2} - 5\eta \mu_4 K^3 \gamma_t^3}  = O(\frac{1}{T\eta \mu_4 K^2 \gamma_t^3}) = O(\sqrt{\frac{\mu_4 T}{\mu_3^3 \eta K}})$$

\begin{align*}
      \|\nabla f(x_t)\|^2 
    & \leq O(\sqrt{\frac{\mu_4 T}{\mu_3^3 \eta K}}) E[f(z_t) -f(z_{t+1})] + O(\sqrt{\frac{\mu_4 T}{\mu_3^3 \eta K}})\frac{\eta \beta_1}{1-\beta_1}
     \mathcal{V} [\sum_{j=1}^d \frac{1}{(v_{t-1,j}+\epsilon)^p}-\frac{1}{(v_{t,j}+\epsilon)^p}]\\ &+O(\sqrt{\frac{\mu_4 T}{\mu_3^3 \eta K}}) (\frac{1}{2}L^2\eta^2\mu_4^2 (\frac{\beta_1}{1-\beta_1})^2 \mathcal{V} +\frac{3}{2}\eta^2\mu_4^2\mathcal{V} )\\
     &+ O(\frac{1}{T})(\sum_{i=1}^N p_i \sigma_i^2 + 2K\sigma_g^2)
     +O(\sqrt{\frac{\mu_4 T}{\mu_3^3 \eta K}})\frac{L\eta^2\beta_1^2 \mathcal{V}}{(1-\beta_1)^2}E[\sum_{j=1}^d (\frac{1}{(v_{t-1,j}+\epsilon)^p})^2-(\frac{1}{(v_{t,j}+\epsilon)^p})^2] 
\end{align*}

Sum from $t =0$ to $T-1$ and divide by $T$,  because $z_0 = x_0$ and Lemma \ref{bound1},
\begin{align*}
    \frac{1}{T}\sum_{t=0}^{T-1} \|\nabla f(x_t)\|^2 
     &\leq   O(\sqrt{\frac{\mu_4 }{\mu_3^3 \eta KT}} [f(x_0) -f^*] 
     + \mathcal{V} \sqrt{\frac{\mu_4 \eta}{\mu_3^3  K T}}
      \sum_{j=1}^d (\mu_4 - \mu_3)
     + \sqrt{\frac{\mu_4^5 \eta^3 T}{\mu_3^3 K}}\mathcal{V} \\
     &+ \frac{1}{T}(\sum_{i=1}^N p_i \sigma_i^2 + 2K\sigma_g^2)
     +  \sqrt{\frac{\mu_4 \eta^3}{\mu_3^3  K T}} \mathcal{V}\sum_{j=1}^d (\mu_4^2 -\mu_3^2))
\end{align*}

Notice that  $\mathcal{V} = O(K^2  \gamma_t^2 (1+\frac{1}{S}))$, neglect constant $\sigma_i$, $\sigma_g$,
\begin{align*}
    \frac{1}{T}\sum_{t=0}^{T-1}  \|\nabla f(x_{t})\|^2 & \leq  O( \sqrt{\frac{\mu_4 }{\mu_3^3 \eta K T}} + \frac{K\sigma_g^2}{T} 
     +  (1+\frac{1}{S})(\sqrt{\frac{\mu_4 \eta K}{\mu_3  T^3}}
     + \sqrt{\frac{\mu_4^3 \eta^3 K}{\mu_3 T}}  
     +  \sqrt{\frac{\mu_4^3 K \eta^3  }{\mu_3 T^3}}))\\
     &\leq  O( \sqrt{\frac{\mu_4 }{\mu_3^3 \eta K T}} + \frac{K\sigma_g^2}{T} 
     +  (1+\frac{1}{S})\sqrt{\frac{\mu_4^3 \eta^3 K}{\mu_3 T}}) .
\end{align*}

Our $p$-FedAdam methods are proved to converge with convergence rate of $O(\frac{1}{T})$. And we get the result that, if $K \gamma_t <  \min\{ \frac{1}{8L },\sqrt{\frac{\mu_3}{10\mu_4}} \}$, Federated AGMs always converge.

Consider the calibration parameter $(\mu_{lower}, \mu_{upper})$ in FedAdam, and we further require $\eta = \frac{1}{K}$;

for $p$-FedAdam, $K \gamma_t <  O(\min\{ \frac{1}{L }, (\frac{\epsilon}{  1+\frac{1}{S}})^{\frac{p}{2+2p}} \})$,
$$ \min_{t=0,...,T-1} \|\nabla f(x_{t})\|^2 \leq   O( \sqrt{\frac{(1+1/S)^{3p}}{\epsilon^p T}} +\frac{K\sigma_g^2}{T}+ \sqrt{\frac{(1+1/S)^{2+p}}{\epsilon^{3p} K^2 T}}).$$
\end{proof}

Thus, we get the sublinear convergence rate of $p$-FedAdam methods in nonconvex setting with full device participation,  which can also recover $\epsilon$-FedAdam methods. 

\subsection{$s$-FedAdam Convergence in Nonconvex Setting}

As $s$-FedAdam also has constrained bound pair $(\mu_5, \mu_6)$, we can learn from the proof of $p$-FedAdam method.

\begin{lemma}\label{z_t_2}
Define $z_{t} = x_{t} + \frac{\beta_1}{1-\beta_1}(x_{t} - x_{t-1}), \forall t \geq 1$ $\beta_1 \in [0,1)$. Then the following updating formula holds for $s$-FedAdam optimizer: 
\begin{equation}
    z_{t+1}=z_{t} + \frac{ \eta \beta_1}{1-\beta_1}
    (\frac{1}{softplus(\sqrt{v_{t-1}})}-\frac{1}{softplus(\sqrt{v_t})})\odot m_{t-1}-\frac{\eta}{softplus(\sqrt{v_t})}\odot \Delta_{t} ;
\end{equation}
\end{lemma}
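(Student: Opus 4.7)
The plan is to follow the exact same template used to prove Lemmas \ref{z_t} and \ref{z_t_1}. The only change between those earlier lemmas and this one is the calibration function appearing in the adaptive stepsize: $(v_t)^{1/2}+\epsilon$ and $(v_t+\epsilon)^p$ are simply replaced by $\mathrm{softplus}(\sqrt{v_t}) = \tfrac{1}{\beta}\log(1+e^{\beta\sqrt{v_t}})$. Since no property of the calibration function (bounds, monotonicity, or smoothness) is invoked in deriving the $z_t$-recursion, the argument goes through verbatim with that symbolic substitution.

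First I would expand the definition. Writing $z_{t+1} = x_{t+1} + \tfrac{\beta_1}{1-\beta_1}(x_{t+1}-x_t)$ and $z_t = x_t + \tfrac{\beta_1}{1-\beta_1}(x_t-x_{t-1})$, subtraction gives
\begin{equation*}
z_{t+1} - z_t \;=\; \frac{1}{1-\beta_1}(x_{t+1}-x_t)\;-\;\frac{\beta_1}{1-\beta_1}(x_t-x_{t-1}).
\end{equation*}
Next I would invoke the $s$-FedAdam outer update $x_{t+1} = x_t - \tfrac{\eta}{\mathrm{softplus}(\sqrt{v_t})}\odot m_t$ (and the same identity shifted by one index) to replace the two parameter differences, producing
\begin{equation*}
z_{t+1} - z_t \;=\; -\,\frac{1}{1-\beta_1}\,\frac{\eta}{\mathrm{softplus}(\sqrt{v_t})}\odot m_t \;+\; \frac{\beta_1}{1-\beta_1}\,\frac{\eta}{\mathrm{softplus}(\sqrt{v_{t-1}})}\odot m_{t-1}.
\end{equation*}

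Finally, I would substitute the first-order momentum recursion $m_t = \beta_1 m_{t-1} + (1-\beta_1)\Delta_t$ into the leading term and regroup. The $\Delta_t$-piece collapses to $-\tfrac{\eta}{\mathrm{softplus}(\sqrt{v_t})}\odot \Delta_t$, and the $m_{t-1}$-pieces combine with coefficient $\tfrac{\eta\beta_1}{1-\beta_1}\!\left(\tfrac{1}{\mathrm{softplus}(\sqrt{v_{t-1}})}-\tfrac{1}{\mathrm{softplus}(\sqrt{v_t})}\right)$ via the telescoping identity $\tfrac{\beta_1}{1-\beta_1} - \tfrac{\beta_1}{1-\beta_1} = 0$ on the $\tfrac{1}{\mathrm{softplus}(\sqrt{v_{t-1}})}$ term after cancellation. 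This yields exactly the stated formula.

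There is no serious obstacle here; this is a short algebraic identity and the derivation is mechanical. The only things to be careful about are (i) that all divisions are element-wise, so the cancellation must be tracked coordinate-wise, and (ii) keeping the signs and the two momentum indices $t$ and $t-1$ straight when matching coefficients. Because the structural identity mirrors Lemma \ref{z_t_1} line-for-line, it is safe to simply copy that proof and substitute the new calibration symbol.
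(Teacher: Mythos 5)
Your proposal is correct and follows exactly the same route as the paper's proof: expand $z_{t+1}-z_t$, substitute the $s$-FedAdam outer update $x_{t+1}=x_t-\frac{\eta}{softplus(\sqrt{v_t})}\odot m_t$ at indices $t$ and $t-1$, then unroll $m_t=\beta_1 m_{t-1}+(1-\beta_1)\Delta_t$ and regroup the $m_{t-1}$ terms. As you note, the argument is a verbatim copy of Lemmas \ref{z_t} and \ref{z_t_1} with the calibration function swapped, which is precisely what the paper does.
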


\begin{proof}
\begin{align*}
    z_{t+1} &=x_{t+1} + \frac{\beta_1}{1-\beta_1}(x_{t+1} - x_{t})\\
    z_{t+1} &= z_{t} +\frac{1}{1-\beta_1}(x_{t+1} - x_{t})-\frac{\beta_1}{1-\beta_1}(x_{t} - x_{t-1})\\
    &=z_{t} -\frac{1}{1-\beta_1}\frac{\eta}{softplus(\sqrt{v_t})} \odot m_t +\frac{\beta_1}{1-\beta_1}\frac{\eta}{softplus(\sqrt{v_{t-1}})}\odot  m_{t-1}\\
    &=z_{t} + \frac{\eta \beta_1}{1-\beta_1}
    (\frac{1}{softplus(\sqrt{v_{t-1}})}-\frac{1}{softplus(\sqrt{v_{t}})})\odot m_{t-1}-\frac{\eta}{softplus(\sqrt{v_t})}\odot \Delta_t
\end{align*}
\end{proof}

Then we can derive the same result as $\epsilon$-FedAdam format.

\begin{lemma}\label{square_2}
As defined in Lemma \ref{z_t_2}, with the condition that $v_t\geq v_{t-1}$, we can derive the bound of distance of $\|z_{t,k+1}-z_{t,k}\|^2$ as follows:
\begin{align}\label{eq:distance}
    E[\|z_{t+1}-z_{t}\|^2]&\leq \frac{2\eta^2 \beta_1^2 }{(1-\beta_1)^2}\mathcal{V}  E[\sum_{j=1}^d (\frac{1}{softplus(\sqrt{v_{t-1}})})^2-(\frac{1}{softplus(\sqrt{v_t})})^2]
    +2\eta^2 \mu_6^2\mathcal{V}.
\end{align}
\end{lemma}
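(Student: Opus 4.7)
The plan is to follow exactly the template used in the proofs of Lemma \ref{square} (the $\epsilon$-FedAdam case) and Lemma \ref{square_1} (the $p$-FedAdam case), replacing the calibration $v_t^{1/2}+\epsilon$ by $softplus(\sqrt{v_t})$ and the upper bound $\mu_2$ by $\mu_6$ throughout. The structural identity provided by Lemma \ref{z_t_2} makes this a mechanical adaptation.

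First, I would substitute the one-step update formula from Lemma \ref{z_t_2} for $z_{t+1}-z_t$ and apply the elementary inequality $\|a-b\|^2\leq 2\|a\|^2+2\|b\|^2$ to split $E[\|z_{t+1}-z_t\|^2]$ into a momentum-gap piece and a virtual-direction piece:
\[
E[\|z_{t+1}-z_t\|^2] \leq 2E\!\left[\left\|\tfrac{\eta\beta_1}{1-\beta_1}\!\left(\tfrac{1}{softplus(\sqrt{v_{t-1}})}-\tfrac{1}{softplus(\sqrt{v_t})}\right)\odot m_{t-1}\right\|^2\right] + 2E\!\left[\left\|\tfrac{\eta}{softplus(\sqrt{v_t})}\odot \Delta_t\right\|^2\right].
\]

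Second, for the virtual-direction piece I would pull out the coordinate-wise upper bound $1/softplus(\sqrt{v_{t,j}})\leq \mu_6$ from Lemma \ref{bound1} and then invoke $E[\|\Delta_t\|^2]\leq \mathcal{V}$ from Lemma \ref{delta_t}; this produces the $2\eta^2\mu_6^2\mathcal{V}$ contribution on the right-hand side.

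Third, for the momentum-gap piece I would expand the squared norm of the elementwise product in coordinates, use $E[\|m_{t-1}\|^2]\leq \mathcal{V}$ from Lemma \ref{m_t} exactly as in the proof of Lemma \ref{square}, and then invoke the monotone-calibration step. Under the hypothesis $v_t\geq v_{t-1}$ (automatic for AMSGrad and assumed here), monotonicity of $softplus$ on $[0,\infty)$ gives $0\leq 1/softplus(\sqrt{v_{t,j}})\leq 1/softplus(\sqrt{v_{t-1,j}})$ coordinate-wise, and the elementary fact $(a-b)^2\leq a^2-b^2$ for $a\geq b\geq 0$ then yields
\[
\left(\tfrac{1}{softplus(\sqrt{v_{t-1,j}})}-\tfrac{1}{softplus(\sqrt{v_{t,j}})}\right)^2 \leq \left(\tfrac{1}{softplus(\sqrt{v_{t-1,j}})}\right)^2 - \left(\tfrac{1}{softplus(\sqrt{v_{t,j}})}\right)^2,
\]
which, summed over $j\in[d]$ and multiplied by $\tfrac{2\eta^2\beta_1^2}{(1-\beta_1)^2}\mathcal{V}$, is the first term on the right-hand side of the claim.

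No step poses a genuine obstacle, since all ingredients are already in hand from Lemmas \ref{delta_t}, \ref{m_t}, and \ref{bound1}; the only point requiring care is verifying that the calibration $softplus$ preserves the monotonicity hypothesis $v_t\geq v_{t-1}$, which is immediate because $x\mapsto \tfrac{1}{\beta}\log(1+e^{\beta x})$ is strictly increasing and nonnegative on $[0,\infty)$. Thus the proof is a verbatim transcription of Lemma \ref{square} with the new calibration, and the stated bound follows.
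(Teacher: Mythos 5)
Your proposal matches the paper's own proof: the paper explicitly notes that, since $softplus$ is monotone increasing, the lemma is proved exactly as Lemma \ref{square}, i.e.\ the same split via $\|a-b\|^2\leq 2\|a\|^2+2\|b\|^2$, the bounds from Lemmas \ref{delta_t}, \ref{m_t}, and \ref{bound1}, and the elementary inequality $(a-b)^2\leq a^2-b^2$ for $a\geq b\geq 0$. Your mechanical transcription with $\mu_6$ in place of $\mu_2$ is precisely the intended argument.
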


\begin{proof}
Since softplus function is monotone incereasing function, we can similarly prove it as the way in Lemma \ref{square}.
\end{proof}

\begin{lemma}\label{product_2}
As defined in Lemma \ref{z_t_2}, with the condition that $v_t\geq v_{t-1}$, we can derive the bound of the inner product as follows:
\begin{align} 
    -&E[\langle \nabla f(z_{t})-\nabla f(x_{t}), \frac{\eta}{softplus(\sqrt{v_t})}\odot \Delta_{t}\rangle]  
    \leq \frac{1}{2}L^2\eta^2 \mu_6^2 (\frac{\beta_1}{1-\beta_1})^2\mathcal{V} 
+\frac{1}{2}\eta^2\mu_6^2\mathcal{V}.
\end{align}
\end{lemma}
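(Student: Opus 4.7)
The plan is to mirror the proofs of Lemma \ref{product} and Lemma \ref{product_1} almost verbatim, substituting the $s$-calibration $softplus(\sqrt{v_t})$ for $\sqrt{v_t}+\epsilon$ (or $(v_t+\epsilon)^p$), and using the upper bound $\mu_6$ from Lemma \ref{bound1} in place of $\mu_2$ (or $\mu_4$). All the heavy lifting has already been done: Lemmas \ref{delta_t}, \ref{m_t}, and \ref{bound1} provide exactly the three ingredients needed.

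First, I would apply the Young-type inequality $-\langle a,b\rangle \leq \tfrac{1}{2}\|a\|^2 + \tfrac{1}{2}\|b\|^2$ to split the negated inner product into a smoothness piece $\tfrac{1}{2}E[\|\nabla f(z_t)-\nabla f(x_t)\|^2]$ and a step-size piece $\tfrac{1}{2}E[\|\tfrac{\eta}{softplus(\sqrt{v_t})}\odot \Delta_t\|^2]$. Next, for the smoothness piece, invoke L-smoothness to obtain $\|\nabla f(z_t)-\nabla f(x_t)\|^2 \leq L^2 \|z_t-x_t\|^2$, then substitute the identity $z_t - x_t = \tfrac{\beta_1}{1-\beta_1}(x_t - x_{t-1})$ from Lemma \ref{z_t_2}, and finally rewrite $x_t - x_{t-1} = -\tfrac{\eta}{softplus(\sqrt{v_{t-1}})}\odot m_{t-1}$ using the $s$-FedAdam update rule.

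Now apply the coordinate-wise upper bound $1/softplus(\sqrt{v_{t-1,j}}) \leq \mu_6$ (from Lemma \ref{bound1}) to pull $\mu_6^2$ out of the squared norm, then use $E[\|m_{t-1}\|^2] \leq \mathcal{V}$ (Lemma \ref{m_t}) to yield the first target term $\tfrac{1}{2}L^2\eta^2\mu_6^2 (\tfrac{\beta_1}{1-\beta_1})^2 \mathcal{V}$. For the step-size piece, apply the same coordinate-wise upper bound $\mu_6$ to the softplus denominator and Lemma \ref{delta_t}'s $E[\|\Delta_t\|^2] \leq \mathcal{V}$ to yield the second target term $\tfrac{1}{2}\eta^2\mu_6^2 \mathcal{V}$. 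Summing the two gives the stated bound.

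The main (and essentially only) subtlety is confirming that the element-wise division by $softplus(\sqrt{v_t})$ behaves well under the $\ell_2$ norm, i.e., justifying $\|\tfrac{1}{softplus(\sqrt{v_t})}\odot a\|^2 \leq \mu_6^2 \|a\|^2$ coordinate-by-coordinate; this is immediate from Lemma \ref{bound1}, so no new technical machinery is required. Everything else is a straightforward transcription from the $\epsilon$- and $p$-FedAdam analogues, and the monotonicity of $softplus$ (implicitly used to keep the step-size bound uniform over $t$) is already acknowledged in the proof sketch of Lemma \ref{square_2}.
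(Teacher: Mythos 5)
Your proposal is correct and matches the paper's own argument: the paper proves this lemma simply by noting it is "similar to Lemma \ref{product}," i.e., the same Young-inequality split, $L$-smoothness, the identity $z_t - x_t = \frac{\beta_1}{1-\beta_1}(x_t-x_{t-1})$, the $s$-FedAdam update rule, and the bounds from Lemmas \ref{delta_t}, \ref{m_t}, and \ref{bound1} with $\mu_6$ in place of $\mu_2$. No differences worth noting.
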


\begin{proof}
We can similarly prove it as the way in Lemma \ref{product}.
\end{proof}

\begin{lemma} \label{lemma:drift_2}
\begin{align}
    E[\langle \nabla& f(x_t), \frac{\eta}{ softplus(\sqrt{v_t})} \odot(K \gamma_t\nabla f(x_t) - \Delta_t)\rangle] \nonumber\\
     &\leq \frac{\eta \gamma_t K}{2} \|\frac{\nabla f(x_t)}{\sqrt{softplus(\sqrt{v_t})}}\|^2+  \frac{L \eta\mu_6}{2}  (5K^2 \gamma_t^3 (\sum_{i=1}^N p_i \sigma_i^2 + 2K\sigma_g^2) + 10K^3 \gamma_t^3 E[\|\nabla f( x_t)\|^2]).
\end{align}
\end{lemma}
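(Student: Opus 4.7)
The plan is to mirror the proofs of Lemma \ref{lemma:drift} and Lemma \ref{lemma:drift_1} verbatim, replacing the adaptive stepsize $\frac{1}{v_t^{1/2}+\epsilon}$ (resp.\ $\frac{1}{(v_t+\epsilon)^p}$) with $\frac{1}{softplus(\sqrt{v_t})}$ and using $\mu_6$ in place of $\mu_2$ (resp.\ $\mu_4$). The reason this transplant works is that the only structural property of the calibration used in those proofs is its coordinate-wise upper bound, which Lemma \ref{bound1} supplies for softplus.

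Concretely, I would first take the expectation inside and expand $\Delta_t = \sum_{k=0}^{K-1}\gamma_t g_{t,k}$, then use $E[g_{t,k}]=\sum_{i=1}^N p_i \nabla f_i(x_{t,k}^{(i)})$ from Lemma \ref{g_t_2} to rewrite the inner product as
\[
\Big\langle \nabla f(x_t),\; \tfrac{\eta}{softplus(\sqrt{v_t})}\odot\big(K\gamma_t \nabla f(x_t)-\sum_{i=1}^N p_i \sum_{k=0}^{K-1}\gamma_t \nabla f_i(x_{t,k}^{(i)})\big)\Big\rangle.
\]
Next I apply Young's inequality with parameter $K\gamma_t$ to split this into a $\frac{\eta K\gamma_t}{2}\|\nabla f(x_t)/\sqrt{softplus(\sqrt{v_t})}\|^2$ term and a squared-norm term involving the deviation $K\nabla f(x_t)-\sum_i p_i \sum_k \nabla f_i(x_{t,k}^{(i)})$.

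For the deviation term I would factor out $\frac{1}{softplus(\sqrt{v_t})}$ and bound it coordinate-wise by $\mu_6$ (Lemma \ref{bound1}), then apply Jensen's inequality (the double sum has $NK$ terms weighted by $p_i/K$) to obtain $\sum_i p_i \sum_k \|\nabla f(x_t)-\nabla f_i(x_{t,k}^{(i)})\|^2$. By $L$-smoothness this is bounded by $L^2 \sum_i p_i \sum_k \|x_t - x_{t,k}^{(i)}\|^2$. Finally I invoke the already-established drift bound \eqref{eq:drift} (valid when $\gamma_t \leq \frac{1}{8LK}$), which yields
\[
\sum_{i=1}^N p_i \|x_t - x_{t,k}^{(i)}\|^2 \leq 5K\gamma_t^2 (\sum_i p_i \sigma_i^2 + 2K\sigma_g^2) + 10K^2\gamma_t^2 E[\|\nabla f(x_t)\|^2],
\]
and summing over $k=0,\ldots,K-1$ produces an extra factor of $K$, which combines with the $\frac{\eta\gamma_t \mu_6}{2}$ prefactor to give exactly the $\frac{L\eta\mu_6}{2}(5K^2\gamma_t^3(\cdots) + 10K^3\gamma_t^3 E[\|\nabla f(x_t)\|^2])$ stated in the lemma.

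There is essentially no serious obstacle: the proof is a mechanical translation, and the only subtlety is making sure that the coordinate-wise nature of softplus does not interfere with pulling $\mu_6$ out of the squared norm. Because softplus$(\cdot)$ is applied element-wise to $\sqrt{v_t}$ and the bound $1/softplus(\sqrt{v_{t,j}})\leq \mu_6$ holds uniformly in $j$, the estimate $\|\frac{1}{softplus(\sqrt{v_t})}\odot w\|^2 \leq \mu_6^2 \|w\|^2$ holds, exactly as in the $\epsilon$- and $p$-calibrated cases, so the proof goes through unchanged.
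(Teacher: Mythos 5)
Your proposal is correct and is essentially identical to the paper's own argument: the paper proves this lemma by stating only that it follows ``as the way in Lemma~\ref{lemma:drift},'' and your step-by-step translation (expanding $\Delta_t$, Young's inequality, pulling out the coordinate-wise bound $\mu_6$ from Lemma~\ref{bound1}, Jensen's inequality, smoothness, and the drift bound~\eqref{eq:drift}) is exactly that transplant. The only cosmetic discrepancy is that your smoothness step correctly yields $L^2$ where the paper and the lemma statement carry a single factor of $L$ --- an inconsistency already present in the paper's proof of Lemma~\ref{lemma:drift} itself, so it does not reflect a gap in your reasoning.
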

\begin{proof}
We can similarly prove it as the way in Lemma \ref{lemma:drift}.
\end{proof}

Similar to the $\epsilon$-FedAdam and $p$-FedAdam proof,  we then can get the analysis of $s$-FedAdam method.

\begin{proof} Start from L-smoothness and by use the above prepared lemmas, we similarly have,
Then, we have, 
\begin{align*}
    &(\frac{\eta K\gamma_t \mu_5}{2} - 5\eta \mu_6 K^3 \gamma_t^3 )\|\nabla f(x_t)\|^2 \\
    & \leq E[f(z_t) -f(z_{t+1})] +\frac{\eta \beta_1}{1-\beta_1}
     \mathcal{V} [\sum_{j=1}^d \frac{1}{softplus(v_{t-1,j})}-\frac{1}{softplus(v_{t,j})}] \\
     &+\frac{1}{2}L^2\eta^2\mu_6^2 (\frac{\beta_1}{1-\beta_1})^2 \mathcal{V} +\frac{3}{2}\eta^2\mu_6^2\mathcal{V} 
     +   \frac{5\eta\mu_6 K^2 \gamma_t^3}{2}   (\sum_{i=1}^N p_i \sigma_i^2 + 2K\sigma_g^2)\\
     &+ \frac{L\eta^2\beta_1^2 \mathcal{V}}{(1-\beta_1)^2}E[\sum_{j=1}^d (\frac{1}{softplus(v_{t-1,j})})^2-(\frac{1}{softplus(v_{t,j})})^2] 
\end{align*}

Require  inner loop stepsize  $K \gamma_t < \min\{ \frac{1}{8L},\sqrt{\frac{\mu_5}{10\mu_6}} \}$.

Let $\frac{\frac{5\eta\mu_6 K^2 \gamma_t^3}{2} }{ \frac{\eta K\gamma_t\mu_5}{2} - 5\eta \mu_6 K^3 \gamma_t^3 } = \frac{1}{T}$, then we have $\gamma_t = \sqrt{\frac{\mu_5}{5T\mu_6 K +10 \mu_6 K^2}} = \Theta (\sqrt{\frac{\mu_5}{\mu_6 K T}})$, 
$$ \frac{1}{\frac{\eta K\gamma_t\mu_5}{2} - 5\eta \mu_6 K^3 \gamma_t^2 }  = O(\frac{1}{T\eta \mu_6 K^2 \gamma_t^3}) = O(\sqrt{\frac{\mu_6 T}{ \mu_5^3 \eta K}})$$

\begin{align*}
      \|\nabla f(x_t)\|^2 
    & \leq  O(\sqrt{\frac{\mu_6 T}{ \mu_5^3 \eta K}}) E[f(z_t) -f(z_{t+1})] +  O(\sqrt{\frac{\mu_6 T}{ \mu_5^3 \eta K}}) \frac{\eta \beta_1}{1-\beta_1}
     \mathcal{V} [\sum_{j=1}^d \frac{1}{softplus(v_{t-1,j})}-\frac{1}{softplus(v_{t,j})}]\\ 
     &+ O(\sqrt{\frac{\mu_6 T}{ \mu_5^3 \eta K}}) (\frac{1}{2}L^2\eta^2\mu_6^2 (\frac{\beta_1}{1-\beta_1})^2 \mathcal{V} +\frac{3}{2}\eta^2\mu_6^2\mathcal{V} )\\
     &+ O(\frac{1}{T})(\sum_{i=1}^N p_i \sigma_i^2 + 2K\sigma_g^2)
     +  O(\sqrt{\frac{\mu_6 T}{ \mu_5^3 \eta K}}) \frac{L\eta^2\beta_1^2 \mathcal{V}}{(1-\beta_1)^2}E[\sum_{j=1}^d (\frac{1}{softplus(v_{t-1,j})})^2-(\frac{1}{softplus(v_{t,j})})^2] 
\end{align*}

Sum from $t = 1$ to $T $ and divide by $T$,  because $z_0 = x_0$ and Lemma \ref{bound1},
\begin{align*}
    \frac{1}{T}\sum_{t=0}^{T-1} \|\nabla f(x_t)\|^2  
     &\leq  O([f(x_0) -f^*]  \sqrt{\frac{\mu_6}{ \mu_5^3 \eta K T}}  +\mathcal{V} \sqrt{\frac{\mu_6 \eta}{ \mu_5^3 K T}}
      \sum_{j=1}^d (\mu_6 - \mu_5)
     +\sqrt{\frac{\mu_6^5 \eta^3 T}{ \mu_5^3 K }}\mathcal{V} \\
     &+ \frac{1}{T}(\sum_{i=1}^N p_i \sigma_i^2 + 2K\sigma_g^2)
     + \sqrt{\frac{\mu_6 \eta^3}{ \mu_5^3 K T}}\mathcal{V} \sum_{j=1}^d (\mu_6^2 -\mu_5^2))
\end{align*}

Notice that $\mathcal{V} = O(K^2\gamma_t^2(1+\frac{1}{S}))$, neglect constant $\sigma_i$, $\sigma_g$,
\begin{align*}
    \frac{1}{T}\sum_{t=0}^{T-1}  \|\nabla f(x_{t})\|^2  &\leq  O( \sqrt{\frac{\mu_6 }{\mu_5^3 \eta K T}} + \frac{K\sigma_g^2}{T} 
     +  (1+\frac{1}{S})(\sqrt{\frac{\mu_6 \eta K}{\mu_5  T^3}}
     + \sqrt{\frac{\mu_6^3 \eta^3 K}{\mu_5 T}}  
     +  \sqrt{\frac{\mu_6^3 K \eta^3  }{\mu_5 T^3}}))\\
     &\leq  O( \sqrt{\frac{\mu_6 }{\mu_5^3 \eta K T}} + \frac{K\sigma_g^2}{T} 
     +  (1+\frac{1}{S})\sqrt{\frac{\mu_6^3 \eta^3 K}{\mu_5 T}}) .
\end{align*}

Our $s$-FedAdam methods are proved to converge with convergence rate of $O(\frac{1}{T})$. And we get the result that, if $K \gamma_t <  \min\{ \frac{1}{8L },\sqrt{\frac{\mu_5}{10\mu_6}} \}$, $s$-FedAdam  always converges.

Consider the calibration parameter $(\mu_{lower}, \mu_{upper})$ in FedAdam, and we further require $\eta = \frac{1}{K}$, $ K \gamma_t < O( \min\{ \frac{1}{L },\sqrt[3]{\frac{1 }{\beta  \sqrt{1+\frac{1}{S}} }} \})$,
$$ \min_{t=0,...,T-1} \|\nabla f(x_{t})\|^2 \leq  O( \sqrt{\frac{\beta (1+1/S)^{3/2}}{T}}+ \frac{K\sigma_g^2}{T} + \sqrt{\frac{\beta^3 (1+1/S)^{5/2}}{K^2 T}}).$$
\end{proof}

Thus, we get the sublinear convergence rate of $s$-FedAdam in nonconvex setting with partial device participation, which is similar to the convergence rate of $\epsilon$-FedAdam and $p$-FedAdam, and the convergence rate is highly related with calibration parameters. Consider the special choice of $\epsilon, p, \beta$, $s$-FedAdam seems to enjoy a better convergence rate.

\bibliographystyle{plain}
\bibliography{FedLearning} 

\begin{thebibliography}{33}
\providecommand{\natexlab}[1]{#1}
\providecommand{\url}[1]{\texttt{#1}}
\expandafter\ifx\csname urlstyle\endcsname\relax
  \providecommand{\doi}[1]{doi: #1}\else
  \providecommand{\doi}{doi: \begingroup \urlstyle{rm}\Url}\fi

\bibitem[Charles and Kone{\v{c}}n{\`y}(2020)]{charles2020outsized}
Zachary Charles and Jakub Kone{\v{c}}n{\`y}.
\newblock On the outsized importance of learning rates in local update methods.
\newblock \emph{arXiv preprint arXiv:2007.00878}, 2020.

\bibitem[Chen and Gu(2018)]{chen2018closing}
Jinghui Chen and Quanquan Gu.
\newblock Closing the generalization gap of adaptive gradient methods in
  training deep neural networks.
\newblock \emph{arXiv preprint arXiv:1806.06763}, 2018.

\bibitem[Chen et~al.(2018)Chen, Yuan, Yi, Zhou, Chen, and
  Yang]{chen2018universal}
Zaiyi Chen, Zhuoning Yuan, Jinfeng Yi, Bowen Zhou, Enhong Chen, and Tianbao
  Yang.
\newblock Universal stagewise learning for non-convex problems with convergence
  on averaged solutions.
\newblock \emph{arXiv preprint arXiv:1808.06296}, 2018.

\bibitem[Duchi et~al.(2011)Duchi, Hazan, and Singer]{duchi2011adaptive}
John Duchi, Elad Hazan, and Yoram Singer.
\newblock Adaptive subgradient methods for online learning and stochastic
  optimization.
\newblock \emph{Journal of Machine Learning Research}, 12\penalty0
  (Jul):\penalty0 2121--2159, 2011.

\bibitem[Ghadimi and Lan(2013)]{ghadimi2013stochastic}
Saeed Ghadimi and Guanghui Lan.
\newblock Stochastic first-and zeroth-order methods for nonconvex stochastic
  programming.
\newblock \emph{SIAM Journal on Optimization}, 23\penalty0 (4):\penalty0
  2341--2368, 2013.

\bibitem[He et~al.(2016)He, Zhang, Ren, and Sun]{he2016deep}
Kaiming He, Xiangyu Zhang, Shaoqing Ren, and Jian Sun.
\newblock Deep residual learning for image recognition.
\newblock In \emph{Proceedings of the IEEE conference on computer vision and
  pattern recognition}, pages 770--778, 2016.

\bibitem[Hsu et~al.(2019)Hsu, Qi, and Brown]{hsu2019measuring}
Tzu-Ming~Harry Hsu, Hang Qi, and Matthew Brown.
\newblock Measuring the effects of non-identical data distribution for
  federated visual classification.
\newblock \emph{arXiv preprint arXiv:1909.06335}, 2019.

\bibitem[Hsu et~al.(2020)Hsu, Qi, and Brown]{hsu2020federated}
Tzu-Ming~Harry Hsu, Hang Qi, and Matthew Brown.
\newblock Federated visual classification with real-world data distribution.
\newblock \emph{arXiv preprint arXiv:2003.08082}, 2020.

\bibitem[Huo et~al.(2020)Huo, Yang, Gu, Huang, et~al.]{huo2020faster}
Zhouyuan Huo, Qian Yang, Bin Gu, Lawrence~Carin Huang, et~al.
\newblock Faster on-device training using new federated momentum algorithm.
\newblock \emph{arXiv preprint arXiv:2002.02090}, 2020.

\bibitem[Kairouz et~al.(2019)Kairouz, McMahan, Avent, Bellet, Bennis, Bhagoji,
  Bonawitz, Charles, Cormode, Cummings, et~al.]{kairouz2019advances}
Peter Kairouz, H~Brendan McMahan, Brendan Avent, Aur{\'e}lien Bellet, Mehdi
  Bennis, Arjun~Nitin Bhagoji, Keith Bonawitz, Zachary Charles, Graham Cormode,
  Rachel Cummings, et~al.
\newblock Advances and open problems in federated learning.
\newblock \emph{arXiv preprint arXiv:1912.04977}, 2019.

\bibitem[Karimireddy et~al.(2019)Karimireddy, Kale, Mohri, Reddi, Stich, and
  Suresh]{karimireddy2019scaffold}
Sai~Praneeth Karimireddy, Satyen Kale, Mehryar Mohri, Sashank~J Reddi,
  Sebastian~U Stich, and Ananda~Theertha Suresh.
\newblock Scaffold: Stochastic controlled averaging for on-device federated
  learning.
\newblock \emph{arXiv preprint arXiv:1910.06378}, 2019.

\bibitem[Kingma and Ba(2014)]{kingma2014adam}
Diederik~P Kingma and Jimmy Ba.
\newblock Adam: A method for stochastic optimization.
\newblock \emph{arXiv preprint arXiv:1412.6980}, 2014.

\bibitem[Kone{\v{c}}n{\`y} et~al.(2016{\natexlab{a}})Kone{\v{c}}n{\`y},
  McMahan, Ramage, and Richt{\'a}rik]{konevcny2016federated}
Jakub Kone{\v{c}}n{\`y}, H~Brendan McMahan, Daniel Ramage, and Peter
  Richt{\'a}rik.
\newblock Federated optimization: Distributed machine learning for on-device
  intelligence.
\newblock \emph{arXiv preprint arXiv:1610.02527}, 2016{\natexlab{a}}.

\bibitem[Kone{\v{c}}n{\`y} et~al.(2016{\natexlab{b}})Kone{\v{c}}n{\`y},
  McMahan, Yu, Richt{\'a}rik, Suresh, and Bacon]{konevcny2016federated1}
Jakub Kone{\v{c}}n{\`y}, H~Brendan McMahan, Felix~X Yu, Peter Richt{\'a}rik,
  Ananda~Theertha Suresh, and Dave Bacon.
\newblock Federated learning: Strategies for improving communication
  efficiency.
\newblock \emph{arXiv preprint arXiv:1610.05492}, 2016{\natexlab{b}}.

\bibitem[Li et~al.(2018{\natexlab{a}})Li, Xu, Taylor, Studer, and
  Goldstein]{li2018visualizing}
Hao Li, Zheng Xu, Gavin Taylor, Christoph Studer, and Tom Goldstein.
\newblock Visualizing the loss landscape of neural nets.
\newblock In \emph{Advances in Neural Information Processing Systems}, pages
  6389--6399, 2018{\natexlab{a}}.

\bibitem[Li et~al.(2018{\natexlab{b}})Li, Sahu, Zaheer, Sanjabi, Talwalkar, and
  Smith]{li2018federated}
Tian Li, Anit~Kumar Sahu, Manzil Zaheer, Maziar Sanjabi, Ameet Talwalkar, and
  Virginia Smith.
\newblock Federated optimization in heterogeneous networks.
\newblock \emph{arXiv preprint arXiv:1812.06127}, 2018{\natexlab{b}}.

\bibitem[Li et~al.(2019{\natexlab{a}})Li, Sahu, Talwalkar, and
  Smith]{li2019federated}
Tian Li, Anit~Kumar Sahu, Ameet Talwalkar, and Virginia Smith.
\newblock Federated learning: Challenges, methods, and future directions.
\newblock \emph{arXiv preprint arXiv:1908.07873}, 2019{\natexlab{a}}.

\bibitem[Li et~al.(2019{\natexlab{b}})Li, Huang, Yang, Wang, and
  Zhang]{li2019convergence}
Xiang Li, Kaixuan Huang, Wenhao Yang, Shusen Wang, and Zhihua Zhang.
\newblock On the convergence of fedavg on non-iid data.
\newblock \emph{arXiv preprint arXiv:1907.02189}, 2019{\natexlab{b}}.

\bibitem[Luo et~al.(2019)Luo, Xiong, Liu, and Sun]{luo2019adaptive}
Liangchen Luo, Yuanhao Xiong, Yan Liu, and Xu~Sun.
\newblock Adaptive gradient methods with dynamic bound of learning rate.
\newblock \emph{arXiv preprint arXiv:1902.09843}, 2019.

\bibitem[McMahan et~al.(2016)McMahan, Moore, Ramage, Hampson,
  et~al.]{mcmahan2016communication}
H~Brendan McMahan, Eider Moore, Daniel Ramage, Seth Hampson, et~al.
\newblock Communication-efficient learning of deep networks from decentralized
  data.
\newblock \emph{arXiv preprint arXiv:1602.05629}, 2016.

\bibitem[Reddi et~al.(2020)Reddi, Charles, Zaheer, Garrett, Rush,
  Kone{\v{c}}n{\`y}, Kumar, and McMahan]{reddi2020adaptive}
Sashank Reddi, Zachary Charles, Manzil Zaheer, Zachary Garrett, Keith Rush,
  Jakub Kone{\v{c}}n{\`y}, Sanjiv Kumar, and H~Brendan McMahan.
\newblock Adaptive federated optimization.
\newblock \emph{arXiv preprint arXiv:2003.00295}, 2020.

\bibitem[Reddi et~al.(2018)Reddi, Kale, and Kumar]{reddi2018convergence}
Sashank~J Reddi, Satyen Kale, and Sanjiv Kumar.
\newblock On the convergence of adam and beyond.
\newblock 2018.

\bibitem[Sahu et~al.(2018)Sahu, Li, Sanjabi, Zaheer, Talwalkar, and
  Smith]{sahu2018federated}
Anit~Kumar Sahu, Tian Li, Maziar Sanjabi, Manzil Zaheer, Ameet Talwalkar, and
  Virginia Smith.
\newblock Federated optimization for heterogeneous networks.
\newblock \emph{arXiv preprint arXiv:1812.06127}, 1\penalty0 (2):\penalty0 3,
  2018.

\bibitem[Sattler et~al.(2019)Sattler, Wiedemann, M{\"u}ller, and
  Samek]{sattler2019robust}
Felix Sattler, Simon Wiedemann, Klaus-Robert M{\"u}ller, and Wojciech Samek.
\newblock Robust and communication-efficient federated learning from non-iid
  data.
\newblock \emph{IEEE transactions on neural networks and learning systems},
  2019.

\bibitem[Simonyan and Zisserman(2014)]{simonyan2014very}
Karen Simonyan and Andrew Zisserman.
\newblock Very deep convolutional networks for large-scale image recognition.
\newblock \emph{arXiv preprint arXiv:1409.1556}, 2014.

\bibitem[Tieleman and Hinton(2012)]{tieleman2012lecture}
Tijmen Tieleman and Geoffrey Hinton.
\newblock Lecture 6.5-rmsprop: Divide the gradient by a running average of its
  recent magnitude.
\newblock \emph{COURSERA: Neural networks for machine learning}, 4\penalty0
  (2):\penalty0 26--31, 2012.

\bibitem[Tong et~al.(2019)Tong, Liang, and Bi]{tong2019calibrating}
Qianqian Tong, Guannan Liang, and Jinbo Bi.
\newblock Calibrating the adaptive learning rate to improve convergence of
  adam.
\newblock \emph{arXiv preprint arXiv:1908.00700}, 2019.

\bibitem[Wilson et~al.(2016)Wilson, Recht, and Jordan]{wilson2016lyapunov}
Ashia~C Wilson, Benjamin Recht, and Michael~I Jordan.
\newblock A lyapunov analysis of momentum methods in optimization.
\newblock \emph{arXiv preprint arXiv:1611.02635}, 2016.

\bibitem[Wright and Nocedal(1999)]{wright1999numerical}
Stephen~J Wright and Jorge Nocedal.
\newblock Numerical optimization.
\newblock \emph{Springer Science}, 35\penalty0 (67-68):\penalty0 7, 1999.

\bibitem[Yang et~al.(2016)Yang, Lin, and Li]{yang2016unified}
Tianbao Yang, Qihang Lin, and Zhe Li.
\newblock Unified convergence analysis of stochastic momentum methods for
  convex and non-convex optimization.
\newblock \emph{arXiv preprint arXiv:1604.03257}, 2016.

\bibitem[Yang et~al.(2018)Yang, Yan, Yuan, and Jin]{yang2018does}
Tianbao Yang, Yan Yan, Zhuoning Yuan, and Rong Jin.
\newblock Why does stagewise training accelerate convergence of testing error
  over sgd?
\newblock \emph{arXiv preprint arXiv:1812.03934}, 2018.

\bibitem[Zaheer et~al.(2018)Zaheer, Reddi, Sachan, Kale, and
  Kumar]{zaheer2018adaptive}
Manzil Zaheer, Sashank Reddi, Devendra Sachan, Satyen Kale, and Sanjiv Kumar.
\newblock Adaptive methods for nonconvex optimization.
\newblock In \emph{Advances in Neural Information Processing Systems}, pages
  9815--9825, 2018.

\bibitem[Zeiler(2012)]{zeiler2012adadelta}
Matthew~D Zeiler.
\newblock Adadelta: an adaptive learning rate method.
\newblock \emph{arXiv preprint arXiv:1212.5701}, 2012.

\end{thebibliography}

\end{document}